\def\afstand{1.6cm} % afstand tussen de nodes van de boom
\newcommand{\printMessage}[3]{
\ifthenelse{\equal{#3}{left}}{
  \draw[message] ($(#1)!.30!(#2) !.2!90:(#1)$) -- ($(#1)!.70!(#2) !-.2!90:(#2)$) node[midway] (m) {};
}
{
  \draw[message] ($(#1)!.30!(#2) !.2!-90:(#1)$) -- ($(#1)!.70!(#2) !-.2!-90:(#2)$) node[midway] (m) {};
  
}
}
\tikzstyle{nd}=[circle,draw=black!80,fill=geel!15,minimum size=.65cm,line width=.2pt]
\tikzstyle{observed}=[nd,fill=groen!100]
\tikzstyle{target}=[nd,fill=roze!100]
\tikzstyle{unobserved}=[nd,fill=geel!15]
\tikzstyle{terminal}=[nd,fill=geel!15]
\tikzstyle{ed}=[draw=geel!60!black,line width=.8pt, postaction={decorate}, decoration={markings,mark=at position 1.0 with {\arrow[draw=geel!60!black,line width=.8pt]{>}}}]
\tikzstyle{backbone}=[draw=rood,line width=1.8pt, postaction={decorate}, decoration={markings,mark=at position 1.0 with {\arrow[draw=rood,line width=.8pt]{>}}}]
\tikzstyle{message}=[draw=grijs,line width=.5pt, postaction={decorate}, decoration={markings,mark=at position 1.0 with {\arrow[draw=grijs,line width=.8pt]{>}}}]
\newtheorem{definition}{Definition}
\newtheorem{theorem}{Theorem}
\newtheorem{proposition}[theorem]{Proposition}
\newcommand{\inout}[2]{#1\cup{#2}}
\newcommand{\reals}{\mathbb{R}}
\newcommand{\set}[2]{\{#1\colon#2\}}
\newcommand{\abs}[1]{\lvert#1\rvert}
\newcommand{\then}{\Rightarrow}
\newcommand{\ifonlyif}{\Leftrightarrow}
\newcommand{\nodes}{T}
\newcommand{\nonterminals}{\nodes^\lozenge}
\newcommand{\trunk}{\tilde{\nodes}}
\newcommand{\init}{\square}
\newcommand{\mother}[1]{m(#1)}
\newcommand{\children}[2][]{C_{#1}(#2)}
\newcommand{\siblings}[1]{S(#1)}
\newcommand{\ancestors}[1]{A(#1)}
\newcommand{\until}[1]{{\uparrow}#1}
\newcommand{\descendants}[1]{D(#1)}
\newcommand{\after}[1]{{\downarrow}#1}
\newcommand{\precedes}{\sqsubseteq}
\newcommand{\sprecedes}{\sqsubset}
\newcommand{\nprecedes}{\not\sqsubseteq}
\newcommand{\nonparnondes}[1]{\overline{#1}}
\newcommand{\var}[1]{X_{#1}}
\newcommand{\values}[1]{\mathcal{X}_{#1}}
\newcommand{\valuesio}[2]{\mathcal{X}_{\inout{#1}{#2}}}
\newcommand{\gambles}[1]{\mathcal{L}(\values{#1})}
\newcommand{\gamblesio}[2]{\mathcal{L}(\valuesio{#1}{#2})}
\newcommand{\xval}[1]{x_{#1}}
\newcommand{\zval}[1]{z_{#1}}
\newcommand{\pr}{P}
\newcommand{\lpr}{\underline{\pr}}
\newcommand{\upr}{\overline{\pr}}
\newcommand{\lupr}{\overline{\underline{\pr}}}
\newcommand{\apr}{Q}
\newcommand{\alpr}{\underline{\apr}}
\newcommand{\aupr}{\overline{\apr}}
\newcommand{\alupr}{\underline{\overline{\apr}}}
\newcommand{\rpr}{R}
\newcommand{\rlpr}{\underline{\rpr}}
\newcommand{\rupr}{\overline{\rpr}}
\newcommand{\dpr}{V}
\newcommand{\dlpr}{\underline{\dpr}}
\newcommand{\dupr}{\overline{\dpr}}
\newcommand{\dlcon}[3][\cdot]{\dlpr_{#2}(#1\vert\var{#3})}
\newcommand{\ducon}[3][\cdot]{\dupr_{#2}(#1\vert\var{#3})}
\newcommand{\xindlcon}[3][\cdot]{\dlpr_{#2}(#1\vert\xval{#3})}
\newcommand{\xinducon}[3][\cdot]{\dupr_{#2}(#1\vert\xval{#3})}
\newcommand{\llocconm}[2][\cdot]{\alpr_{#2}({#1}\vert\var{\mother{#2}})}
\newcommand{\bigllocconm}[2][\cdot]{\alpr_{#2}\bigg({#1}\Big\vert\var{\mother{#2}}\bigg)}
\newcommand{\lloccon}[3][\cdot]{\alpr_{#2}({#1}\vert\var{#3})}
\newcommand{\loccon}[3][\cdot]{\apr_{#2}({#1}\vert\var{#3})}
\newcommand{\ulocconm}[2][\cdot]{\aupr_{#2}({#1}\vert\var{\mother{#2}})}
\newcommand{\bigulocconm}[2][\cdot]{\aupr_{#2}\bigg({#1}\Big\vert\var{\mother{#2}}\bigg)}
\newcommand{\lulocconm}[2][\cdot]{\alupr_{#2}({#1}\vert\var{\mother{#2}})}
\newcommand{\luloccon}[3][\cdot]{\alupr_{#2}({#1}\vert\var{#3})}
\newcommand{\biglulocconm}[2][\cdot]{\alupr_{#2}\bigg({#1}\Big\vert\var{\mother{#2}}\bigg)}
\newcommand{\lloc}[1]{\alpr_{#1}}
\newcommand{\uloc}[1]{\aupr_{#1}}
\newcommand{\loc}[1]{\apr_{#1}}
\newcommand{\xinlloccon}[3][\cdot]{\alpr_{#2}({#1}\vert\xval{#3})}
\newcommand{\zinllocconm}[2][\cdot]{\alpr_{#2}({#1}\vert\zval{\mother{#2}})}
\newcommand{\lglobcon}[3][\cdot]{\lpr_{\after{#2}}({#1}\vert\var{#3})}
\newcommand{\luglobcon}[3][\cdot]{\lupr_{\after{#2}}({#1}\vert\var{#3})}
\newcommand{\lglobconirr}[4][\cdot]{\lpr_{\after{#2}}({#1}\vert\var{\sing{#3}\cup{#4}})}
\newcommand{\lglobconm}[2][\cdot]{\lpr_{\after{#2}}({#1}\vert\var{\mother{#2}})}
\newcommand{\luglobconm}[2][\cdot]{\lupr_{\after{#2}}({#1}\vert\var{\mother{#2}})}
\newcommand{\uglobcon}[3][\cdot]{\upr_{\after{#2}}({#1}\vert\var{#3})}
\newcommand{\uglobconm}[2][\cdot]{\upr_{\after{#2}}({#1}\vert\var{\mother{#2}})}
\newcommand{\lglob}[1]{\lpr_{#1}}
\newcommand{\uglob}[1]{\upr_{#1}}
\newcommand{\dlglobcon}[3][\cdot]{\dlpr_{\after{#2}}({#1}\vert\var{#3})}
\newcommand{\dlglobconirr}[4][\cdot]{\dlpr_{\after{#2}}({#1}\vert\var{\sing{#3}\cup{#4}})}
\newcommand{\dlglobconm}[2][\cdot]{\dlpr_{\after{#2}}({#1}\vert\var{\mother{#2}})}
\newcommand{\dlglob}[1]{\dlpr_{#1}}
\newcommand{\zindlglobcon}[3][\cdot]{\dlpr_{\after{#2}}({#1}\vert\zval{#3})}
\newcommand{\xinlglobcon}[3][\cdot]{\lpr_{\after{#2}}({#1}\vert\xval{#3})}
\newcommand{\xindlglobcon}[3][\cdot]{\dlpr_{\after{#2}}({#1}\vert\xval{#3})}
\newcommand{\xinuglobcon}[3][\cdot]{\upr_{\after{#2}}({#1}\vert\xval{#3})}
\newcommand{\zinlglobcon}[3][\cdot]{\lpr_{\after{#2}}({#1}\vert\zval{#3})}
\newcommand{\zinlglobconirr}[4][\cdot]{\lpr_{\after{#2}}({#1}\vert\zval{\sing{#3}\cup{#4}})}
\newcommand{\zindlglobconirr}[4][\cdot]{\dlpr_{\after{#2}}({#1}\vert\zval{\sing{#3}\cup{#4}})}
\newcommand{\zinuglobcon}[3][\cdot]{\upr_{\after{#2}}({#1}\vert\zval{#3})}
\newcommand{\linecon}[2][\cdot]{\lpr_{\after{\children{#2}}}({#1}\vert\var{#2})}
\newcommand{\biglinecon}[2][\cdot]{\lpr_{\after{\children{#2}}}\bigg({#1}\Big\vert\var{#2}\bigg)}
\newcommand{\uinecon}[2][\cdot]{\upr_{\after{\children{#2}}}({#1}\vert\var{#2})}
\newcommand{\biguinecon}[2][\cdot]{\upr_{\after{\children{#2}}}\bigg({#1}\Big\vert\var{#2}\bigg)}
\newcommand{\luinecon}[2][\cdot]{\lupr_{\after{\children{#2}}}({#1}\vert\var{#2})}
\newcommand{\bigluinecon}[2][\cdot]{\lupr_{\after{\children{#2}}}\bigg({#1}\Big\vert\var{#2}\bigg)}
\newcommand{\dlinecon}[2][\cdot]{\dlpr_{\after{\children{#2}}}({#1}\vert\var{#2})}
\newcommand{\xinlinecon}[2][\cdot]{\lpr_{\after{\children{#2}}}({#1}\vert\xval{#2})}
\newcommand{\xinuinecon}[2][\cdot]{\upr_{\after{\children{#2}}}({#1}\vert\xval{#2})}
\newcommand{\xindlinecon}[2][\cdot]{\dlpr_{\after{\children{#2}}}({#1}\vert\xval{#2})}
\newcommand{\lupcon}[3][\cdot]{\rlpr_{#2}({#1}\vert\xval{#3})}
\newcommand{\uupcon}[3][\cdot]{\rupr_{#2}({#1}\vert\xval{#3})}
\newcommand{\xindic}[1]{\ind{\{\xval{#1}\}}}
\newcommand{\zindic}[1]{\ind{\{\zval{#1}\}}}
\newcommand{\clpr}[3][\cdot]{\underline{P}_{#2}({#1}\vert\var{#3})}
\newcommand{\mlpr}[1]{\underline{P}_{#1}}
\newcommand{\mupr}[1]{\overline{P}_{#1}}
\newcommand{\zinclpr}[3][\cdot]{\underline{P}_{#2}({#1}\vert\zval{#3})}
\newcommand{\mass}{q}
\newcommand{\lmass}{\underline{\mass}}
\newcommand{\umass}{\overline{\mass}}
\newcommand{\lumass}{\underline{\overline{\mass}}}
\newcommand{\masscon}[2][\cdot]{\mass({#1}\vert\var{#2})}
\newcommand{\target}{r}
\newcommand{\ltarget}{\underline{\target}}
\newcommand{\utarget}{\overline{\target}}
\newcommand{\family}[1]{\mathcal{I}(\mlpr{#1})}
\newcommand{\familytoo}[1]{\mathcal{I}(\mlprtoo{#1})}
\newcommand{\treefamily}[1]{\mathcal{T}(\underline{#1})}
\newcommand{\sing}[1]{\{#1\}}
\newcommand{\xsing}[1]{\sing{\xval{#1}}}
\newcommand{\zsing}[1]{\sing{\zval{#1}}}
\newcommand{\msing}[1]{\sing{\mother{#1}}}
\newcommand{\ind}[1]{\mathbb{I}_{#1}}
\newcommand{\xinlrext}[2][\cdot]{\underline{R}({#1}\vert\xval{#2})}
\newcommand{\zinlrext}[2][\cdot]{\underline{R}({#1}\vert\zval{#2})}
\newcommand{\lrext}[2][\cdot]{\underline{R}({#1}\vert\var{#2})}
\newcommand{\msg}{\pi}
\newcommand{\varmsg}{\varpi}
\newcommand{\Msg}{\Pi}
\newcommand{\amsg}{\lambda}
\newcommand{\lmess}[1]{{\underline{\msg}}_{#1}}
\newcommand{\lvarmess}[1]{{\underline{\varmsg}}_{#1}}
\newcommand{\lMess}[1]{{\underline{\Msg}}_{#1}}
\newcommand{\umess}[1]{{\overline{\msg}}_{#1}}
\newcommand{\uvarmess}[1]{{\overline{\varmsg}}_{#1}}
\newcommand{\uMess}[1]{{\overline{\Msg}}_{#1}}
\newcommand{\uamess}[1]{{\overline{\amsg}}_{#1}}
\newcommand{\lumess}[1]{{\underline{\overline{\msg}}}_{#1}}
\newcommand{\luvarmess}[1]{{\underline{\overline{\varmsg}}}_{#1}}
\newcommand{\luMess}[1]{{\underline{\overline{\Msg}}}_{#1}}
\newcommand{\mess}[2]{\msg_{#1}^{#2}}
\newcommand{\varmess}[2]{\varmsg_{#1}^{#2}}
\newcommand{\sprod}[2]{\underline{\overline{#1}}\langle#2\rangle}
\newcommand{\toovalues}{\mathcal{Y}}
\newcommand{\vartoo}{Y}
\newcommand{\valtoo}{y}
\newcommand{\valuestoo}[1]{\mathcal{X}_{#1}\times\toovalues}
\newcommand{\gamblesiotoo}[2]{\mathcal{L}(\valuestoo{\inout{#1}{#2}})}
\newcommand{\clprtoo}[3][\cdot]{\underline{P}_{#2}({#1}\vert\var{#3},\vartoo)}
\newcommand{\zinclprtoo}[3][\cdot]{\underline{P}_{#2}({#1}\vert\zval{#3},\valtoo)}
\newcommand{\mlprtoo}[2][\cdot]{\underline{P}_{#2}(#1\vert\vartoo)}
\newcommand{\inmlprtoo}[2][\cdot]{\underline{P}_{#2}(#1\vert\valtoo)}
\DeclareMathOperator{\argmax}{argmax}
\title{Epistemic irrelevance in credal nets: the case of imprecise Markov trees}
\author{Gert de Cooman}
\author{Filip Hermans}
\email{\{gert.decooman,filip.hermans\}@UGent.be}
\address{Ghent University, SYSTeMS Research Group, Technologiepark 914,  9052 Zwijnaarde, Belgium.}
\author{Alessandro Antonucci}
\author{Marco Zaffalon}
\email{\{alessandro,zaffalon\}@idsia.ch}
\address{IDSIA, Galleria 2, 6928 Manno (Lugano), Switzerland}
\begin{document}

\begin{abstract}
  We focus on credal nets, which are graphical models that generalise Bayesian nets to imprecise probability. 
  We replace the notion of strong independence commonly used in credal nets with the weaker notion of epistemic irrelevance, which is arguably more suited for a behavioural theory of probability. 
  Focusing on directed trees, we show how to combine the given local uncertainty models in the nodes of the graph into a global model, and we use this to construct and justify an exact message-passing algorithm that computes updated beliefs for a variable in the tree. 
  The algorithm, which is linear in the number of nodes, is formulated entirely in terms of coherent lower previsions, and is shown to satisfy a number of rationality requirements. 
  We supply examples of the algorithm's operation, and report an application to on-line character recognition that illustrates the advantages of our approach for prediction. 
  We comment on the perspectives, opened by the availability, for the first time, of a truly efficient algorithm based on epistemic irrelevance.
\end{abstract}

\keywords{Coherence, credal net, epistemic irrelevance, epistemic independence, strong independence, imprecise Markov tree, separation, hidden Markov model.}

\maketitle

\section{Introduction}
\label{sec:introduction}
The last twenty years have witnessed a rapid growth of \emph{graphical models} in the fields of artificial intelligence and statistics. 
These models combine graphs and probability to address complex multivariate problems in a variety of domains, such as medicine, finance, risk analysis, defence, and environment, to name just a few.
\par
Much has been done also on the front of imprecise probability. 
In particular, \emph{credal nets} \cite{cozman2000}  have been and still are the subject of intense research. 
A credal net creates a global model of a domain by combining local uncertainty models using some notion of independence, and then uses this to do inference. 
The local models represent uncertainty by closed convex sets of probabilities, also called \emph{credal sets}.
\par
The notion of independence used with credal nets in the vast majority of cases is that of \emph{strong independence} (with some exceptions in \cite{campos2007b}). 
Loosely speaking, two variables $X,Y$ are strongly independent if the credal set for $(X,Y)$ can be regarded as originating from a number of precise models in each of which $X$ and $Y$ are stochastically independent. 
Strong independence is closely related to the \emph{sensitivity analysis} interpretation of credal sets, which regards an imprecise model as arising out of partial ignorance of a precise one. 
\par
In the particular case of credal nets, strong independence leads to a mathematical equivalence: a credal net model is equivalent to a model consisting of a set of Bayesian nets, each with the same graph but with different values for the parameters. 
The sensitivity analysis interpretation is then that there is some (kind of ideal) Bayesian net model of the problem under consideration, and the graph of such a net is known. 
But, for some reason, the net's parameters are not known precisely, and that is why one considers the set of all the Bayesian nets that are consistent with the partial specification of the parameters. 
Common causes for the existence of partial knowledge are the cost of, and time constraints on, eliciting parameters, and disagreement amongst a group of experts consulted for that purpose. 
Non-ignorable missing data can be another reason, in case the parameters are inferred from a data set \cite{zaffalon2009}.
\par
The sensitivity analysis interpretation of imprecise-probability models, and hence strong independence, is not always applicable. 
A notable case arises when one wishes to model an expert's beliefs: it is then not always tenable that there should be some ideal Bayesian net that models these beliefs, and that it is only because of our limited resources that we cannot define it precisely. 
Rather, it seems more reasonable to concede that expert knowledge may be \emph{inherently} imprecise to some extent.\footnote{For a detailed argumentation and exposition of this point of view, we refer to \cite[Chapter~5]{walley1991}.} 
This simple observation makes the sensitivity analysis interpretation fail, and hence it makes strong independence an inadequate model, in general, for such a situation.\footnote{Obviously, there will be special cases where strong independence is justified in order to model an expert's knowledge. Moreover, strong independence could provide a good approximation to more accurate models, even when it is not entirely appropriate. This is something that seems to deserve further investigation.}
\par
An alternative and attractive approach to expressing irrelevance that is not committed to the sensitivity analysis interpretation is offered by \emph{epistemic irrelevance} \cite{walley1991}: we say that $X$ is epistemically irrelevant to $Y$ if observing $X$ does not affect our beliefs about $Y$. 
In other words, by making an epistemic irrelevance assessment, a subject states that her belief model about $Y$ does (or will) not change after receiving information about $X$. 
When the belief model is a precise probability, both epistemic irrelevance and strong independence reduce to the usual (stochastic) independence.\footnote{If we ignore issues related to events with probability zero.} 
But when the model is a set of probabilities, this is no longer the case, because in contradistinction with strong independence, epistemic irrelevance is a property \emph{of this set} that cannot be explained using properties of the precise probabilities in the set. 
Epistemic irrelevance is defined directly in terms of a subject's belief model (the set of probabilities). 
For this reason, it is very well suited for a behavioural theory of imprecise probability. 
Contrary to strong independence, it is not a symmetrical notion: generally speaking, the epistemic irrelevance of $X$ to $Y$ does not entail the epistemic irrelevance of $Y$ to $X$.
It is also weaker than strong independence, in the sense that strong independence implies epistemic irrelevance: sets of probabilities that correspond to assessments of epistemic irrelevance usually include those related to strong independence assessments. 
It therefore does not lead to overconfident inferences when the sensitivity analysis interpretation is not justified.
\par
At this point, the question we address in this paper should be clear: can we define credal nets based on epistemic irrelevance, and moreover create an exact algorithm to perform efficient inferences with them? 
We give a fully positive answer to this question in the special case that
\begin{inparaenum}[(i)]
\item the graph under consideration is a directed tree, and
\item the related variables assume finitely many values.
\end{inparaenum}
The intuitions that showed us the way towards this result originated in previous work done by some of us on imprecise probability trees \cite{cooman2007d} and imprecise Markov chains \cite{cooman2008}. 
\par
How do we address this problem? 
\par
In Section~\ref{sec:markov-trees}, we discuss some preliminary graph-theoretic notions, and define the local uncertainty models that will be used at each node of a tree. 
These models are formalised through the language of \emph{coherent lower previsions} \cite{walley1991}. 
We discuss how such local models will give rise to a global uncertainty model, which plays the same role as the joint mass function built by the chain rule in a Bayesian net. 
Based on the global model, we state the Markov condition that defines the imprecise-probability interpretation of our credal trees.
As announced before, this Markov condition involves epistemic irrelevance rather than strong independence. 
\par
In Section~\ref{sec:independent-natural-extension}, we take a brief detour to discuss in general terms how to combine marginal models into joint ones using irrelevance assessments, in a way that is as conservative as possible. 
We do so because the notion of so-called \emph{epistemic independence}, which arises out of a symmetrisation of epistemic irrelevance, has so far been defined in the literature only for the case of two variables. 
We define and discuss the \emph{independent natural extension} of a number of marginals. 
This is the most conservative joint model that arises out of the marginals and epistemic independence alone. 
Moreover, we show that the independent natural extension has a very important \emph{strong factorisation} property, which has a crucial part in our algorithm for updating credal trees under epistemic irrelevance.
\par
In Section~\ref{sec:joint}, we turn to the problem of constructing the most conservative global model based only on the local models in the tree and our Markov condition. 
We show that this task can be achieved by a recursive construction that proceeds from the leaves to the root of the tree using two operations: the \emph{independent natural extension} discussed in Section~\ref{sec:independent-natural-extension}, and the \emph{marginal extension}, defined and studied in \cite{walley1991,miranda2006b}. 
We also show that all uncertainty models we consider, the local ones as well as the global ones that we create, satisfy a consistency criterion that generalises (and is based on the same ideas as) the usual consistency criterion in Bayesian nets: they are (separately and jointly) \emph{coherent} \cite{walley1991,miranda2008a,miranda2009a,williams2007} (see in particular \cite[Section~8.1]{miranda2009b}). 
This is an important rationality requirement. 
\par
We briefly comment on some of the graphical separation criteria induced by epistemic irrelevance in Section~\ref{sec:separation}. 
We then go on to develop and justify an algorithm for making inferences on credal trees under epistemic irrelevance  in Section~\ref{sec:algorithm}. 
The algorithm is used to \emph{update} the tree: it computes posterior beliefs about a \emph{target} variable in the tree conditional on the observation of other variables, which are called \emph{instantiated}, meaning that their value is determined. 
It can in particular be used for treating the model as an expert system. 
\par
Our algorithm is based on message passing, as are the traditional algorithms that have been developed for precise graphical  models. 
It has some remarkable properties:
\begin{inparaenum}[(i)]
\item it works in time linear in the number of nodes in the tree;
\item it natively computes posterior lower and upper \emph{previsions} (or expectations) rather than probabilities;
\item it is the first algorithm developed for credal nets that exclusively uses the formalism of coherent lower previsions; and
\item it is shown that, under very mild conditions, using the tree for updating beliefs cannot lead to inferences that are inconsistent with the local models we have started from, nor with one another.
\end{inparaenum}
\par
We give a step-by-step example of the way inferences can be done using our algorithm in Section~\ref{sec:example}. 
We also comment there on the intriguing relationship between the failure of certain classical separation properties in our framework, and dilation \cite{herron1997,seidenfeld1993}. 
\par
The last part of the paper focuses on numerical simulations. 
In Section~\ref{sec:comparison} we empirically measure the amount of imprecision introduced by using epistemic irrelevance rather than strong  independence in a credal tree, when propagating inferences backwards (towards the root) from instantiated nodes to the target node. 
Indeed, it can be shown \cite{cooman2007d} that there is \emph{no difference} between inferences that go forward from instantiated nodes to the target node under strong independence and epistemic irrelevance. 
In Section~\ref{sec:application} we  present an application of our algorithm to on-line character recognition. 
We learn the probabilities from data and compare the predictions of our approach with those of its precise probability counterpart. 
The results are encouraging: they show that the tree can be used for real applications, and that the imprecision it originates is justified.
\par
In order to keep this paper reasonably short, we have to assume the reader has a good working knowledge of the basics of Peter Walley's \cite{walley1991} theory of coherent lower previsions. 
This is needed in particular for the most important proofs, collected in the Appendix. 
For a fairly detailed discussion of the coherence notions and results needed in the context of this paper, we refer to recent work by  Enrique Miranda \cite{miranda2008a,miranda2009a}. 

\section{Credal trees under epistemic irrelevance}
\label{sec:markov-trees}

\subsection{Basic notions and notation.}
We consider a rooted and directed discrete tree with finite width and depth. 
We call $\nodes$ the set of its nodes $s$, and we denote the \emph{root}, or initial, node by $\init$. 
For any node $s$, we denote its \emph{mother node} by $\mother{s}$. Of course, $\init$ has no mother node, and we use the convention $\mother\init=\emptyset$.
Also, for each node $s$, we denote the set of its \emph{children} by $\children{s}$, and the set of its \emph{siblings} by $\siblings{s}$. 
Clearly, $\siblings{\init}=\emptyset$, and if $s\neq\init$ then $\siblings{s}=\children{\mother{s}}\setminus\{s\}$. 
If $\children{s}=\emptyset$, then we call $s$ a \emph{leaf}, or \emph{terminal node}. 
We denote by $\nonterminals\coloneqq\set{s\in\nodes}{\children{s}\neq\emptyset}$ the set of all non-terminal nodes.
\par
For nodes $s$ and $t$, we write $s\precedes t$ if \emph{$s$ precedes $t$}, i.e., if there is a directed segment in the tree from $s$ to $t$. 
The relation $\precedes$ is a special partial order on the set $\nodes$.
$\ancestors{s}\coloneqq\set{t\in\nodes}{t\sprecedes s}$ denotes the chain of \emph{ancestors} of $s$, and
$\descendants{s}\coloneqq\set{t\in\nodes}{s\sprecedes t}$ its set of \emph{descendants}. 
Here $s\sprecedes t$ means that $s\precedes t$ and $s\neq t$.
We also use the notation $\until{s}\coloneqq\ancestors{s}\cup\{s\}$ for the chain (segment) connecting $\init$ and $s$, and $\after{s}\coloneqq\descendants{s}\cup\{s\}$ for the sub-tree with root $s$. 
Similarly, we let $\until{S}\coloneqq\bigcup\set{\until{s}}{s\in S}$ and $\after{S}\coloneqq\bigcup\set{\after{s}}{s\in S}$ for any subset $S\subseteq \nodes$. 
For any node $s$, its set of non-parent non-descendants is given by $\nonparnondes{s}\coloneqq\nodes\setminus(\msing{s}\cup\after{s})$.

\par
With each node $s$ of the tree, there is associated a variable $\var{s}$ assuming values in a non-empty finite set $\values{s}$. 
We denote  by $\gambles{s}$ the set of all real-valued maps (also called \emph{gambles}) on $\values{s}$. 
We extend this notation to more complicated situations as follows. 
If $S$ is any subset of $\nodes$, then we denote by $\var{S}$ the tuple of variables whose components are the $\var{s}$ for all $s\in S$. 
This new joint variable assumes values in the finite set
$\values{S}\coloneqq\times_{s\in S}\values{s}$, and the corresponding set of gambles is denoted by $\gambles{S}$. \footnote{\label{fn:empty-product} For any subset $S$ of $\nodes$, $\values{S}$ is defined formally as the set of all maps $\xval{S}$ of $S$ to $\bigcup_{s\in S}\values{s}$, such that $\xval{S}(s)=\xval{s}\in\values{s}$ for all $s\in S$. So when $S=\emptyset$, the empty product $\values{\emptyset}$ is defined as the set of all maps from $\emptyset$ to $\emptyset$, which is a singleton. The corresponding variable $\var{\emptyset}$ can then only assume this single value, so there is no uncertainty about it. $\gambles{\emptyset}$ can be identified with the set $\reals$ of real numbers.} 
Generic elements of $\values{s}$ are denoted by $\xval{s}$ or $\zval{s}$. 
Similarly for $\xval{S}$ and $\zval{S}$ in $\values{S}$. 
Also, if we mention a tuple $\zval{S}$, then for any $t\in S$, the corresponding element in the tuple will be denoted by $\zval{t}$. 
We assume all variables in the tree to be logically independent, meaning that the variable $\var{S}$ may assume \emph{all} values in $\values{S}$, for all $\emptyset\subseteq S\subseteq\nodes$. 
\par
We will frequently use the simplifying device of identifying a gamble $f_S$ on $\values{S}$ with its \emph{cylindrical extension} to $\values{U}$, where $S\subseteq U\subseteq\nodes$. 
This is the gamble $f_U$ on $\values{U}$ defined by $f_U(\xval{U})\coloneqq f_S(\xval{S})$ for all $\xval{U}\in\values{U}$. 
To give an example, if $\mathcal{K}\subseteq\gambles{\nodes}$, this trick allows us to consider $\mathcal{K}\cap\gambles{S}$ as the set of those gambles in $\mathcal{K}$ that depend only on the variable $\var{S}$. 
As another example, this device allows us to identify the gambles $\ind{\xsing{S}}$ and $\ind{\xsing{S}\times\values{\nodes\setminus S}}$, and therefore also the events $\xsing{S}$ and $\xsing{S}\times\values{\nodes\setminus S}$. 
More generally, for any event $A\subseteq\values{S}$, we can identify the gambles $\ind{A}$ and $\ind{A\times\values{\nodes\setminus S}}$, and therefore also the events $A$ and $A\times\values{\nodes\setminus S}$.
In the same spirit, a lower prevision on all gambles in $\gambles{S}$ can be identified with a lower prevision defined on the set of corresponding gambles on $\values{\nodes}$, a subset of $\gambles{\nodes}$.
\par
Throughout the paper, we consider (conditional) lower previsions as models for a subject's beliefs about the values that certain variables in the tree may assume.
We use a systematic notation for such (conditional) lower previsions. 
Let $I,O\subseteq\nodes$ be \emph{disjoint} sets of nodes with $O\neq\emptyset$, then we generically\footnote{Besides the letter $V$, we will also use the letters $P$, $Q$ and $R$.} denote by $\dlcon{O}{I}$ a \emph{conditional lower prevision}, defined on the set of gambles $\gambles{I\cup O}$.\footnote{In keeping with the observation in footnote~\ref{fn:empty-product}, we also allow $I=\emptyset$, which means conditioning on the variable $\var{I}=\var{\emptyset}$, which can only assume one single value. This means that $\dlcon{O}{\emptyset}\eqqcolon\dlpr_{O}$ effectively becomes an unconditional lower prevision on $\gambles{O\cup\emptyset}=\gambles{O}$. This a very useful device that allows us to use the same generic notation for both conditional and unconditional lower previsions.}
For every gamble $f$ on $\values{I\cup O}$ and every $\xval{I}\in\values{I}$, $\xindlcon[f]{O}{I}$ is the lower prevision (or lower expectation, or a subject's supremum buying price) for/of the gamble $f$, conditional on the event that $\var{I}=\xval{I}$.
We interpret $\dlcon[f]{O}{I}$ as a real-valued map (gamble) on $\values{I}$ that assumes the value $\xindlcon[f]{O}{I}$ in the element $\xval{I}$ of $\values{I}$.
The conjugate \emph{conditional upper prevision} $\ducon{O}{I}$ is defined on $\gambles{I\cup O}$ by $\ducon[f]{O}{I}\coloneqq-\dlcon[-f]{O}{I}$ for all gambles $f$ on $\values{I\cup O}$.
\par
We will always implicitly assume that all conditional models $\dlcon{O}{I}$ we use are \emph{separately coherent}, meaning that:
\begin{enumerate}[{\upshape SC}1.]
\item\label{item:separate:coherence:apg} $\xindlcon[f]{O}{I}\geq\min_{\zval{O}\in\values{O}}f(\xval{I},\zval{O})$ for all $f\in\gambles{I\cup O}$ and all $\xval{I}\in\values{I}$ [accepting partial gains];
\item\label{item:separate:coherence:superadd} $\xindlcon[f_1+f_2]{O}{I}\geq \xindlcon[f_1]{O}{I}+\xindlcon[f_2]{O}{I}$ for all $f_1,f_2\in\gambles{I\cup O}$ and all $\xval{I}\in\values{I}$ [super-additivity];
\item\label{item:separate:coherence:nnhom} $\xindlcon[\lambda f]{O}{I}=\lambda\xindlcon[f]{O}{I}$ for all $f\in\gambles{I\cup O}$, all non-negative real $\lambda$ and all $\xval{I}\in\values{I}$ [non-negative homogeneity].
\end{enumerate}
By combining SC\ref{item:separate:coherence:apg}--SC\ref{item:separate:coherence:nnhom}, it follows that
for all $f\in\gambles{I\cup O}$, $\xval{I}\in\values{I}$ and $\zval{O}\in\values{O}$:
\begin{equation*}
  \min_{\zval{O}\in\values{O}}f(\xval{I},\zval{O})
  \leq\xindlcon[f]{O}{I}
  \leq\xinducon[f]{O}{I}
  \leq\max_{\zval{O}\in\values{O}}f(\xval{I},\zval{O}).
\end{equation*}
If we let $f$ be the indicator $\ind{\zsing{I}}$ of the set $\zsing{I}$ in these inequalities, they reduce to the following, intuitively obvious, property:\footnote{For any event $A\subseteq\values{I\cup O}$, we denote $\xindlcon[\ind{A}]{O}{I}$ also as $\xindlcon[A]{O}{I}$ and call this real number the (conditional) lower \emph{probability} of $A$. Similarly $\xinducon[A]{O}{I}\coloneqq\xinducon[\ind{A}]{O}{I}$ is the (conditional) upper \emph{probability} of $A$.}
\begin{enumerate}[{\upshape SC}1.]
\addtocounter{enumi}{3}
\item\label{item:separate:coherence} $\xindlcon[{\zsing{I}\times\values{O}}]{O}{I}=\xinducon[{\zsing{I}\times\values{O}}]{O}{I}=\xindic{I}(\zval{I})$ for all $\xval{I},\zval{I}\in\values{I}$.
\end{enumerate}
From SC\ref{item:separate:coherence:apg}, SC\ref{item:separate:coherence:superadd} and SC\ref{item:separate:coherence} we can also derive that, with obvious notations:
\begin{equation}\label{eq:separate:coherence}
  \xindlcon[f]{O}{I}
  =\xindlcon[\xindic{I}f]{O}{I}
  =\xindlcon[f(\xval{I},\cdot)]{O}{I}
  \text{ for all gambles $f$ on $\values{I\cup O}$ and $\xval{I}\in\values{I}$},
\end{equation}
where $f(\xval{I},\cdot)$ is a partial map defined on $\values{O}$.
This implies that $\dlcon{O}{I}$ is completely determined by its behaviour on (cylindrical extensions of maps in) $\gambles{O}$.
\par
Hereafter, we will frequently introduce conditional lower previsions of the type $\dlcon{O}{I}$ as if they are defined on $\gambles{O}$, simply because that is a very natural thing to do: such a conditional lower prevision is usually interpreted as representing beliefs about the variable $\var{O}$, conditional on values of the variable $\var{I}$.
But the reader should keep in mind that, by the separate coherence property~\eqref{eq:separate:coherence}, $\dlcon{O}{I}$ can (and should) always be uniquely extended to the larger domain $\gambles{I\cup O}$.
\par
As soon as we consider a number of such conditional lower previsions $\dlcon{O_k}{I_k}$, $k=1,\dots,n$, they should satisfy more stringent consistency criteria than that each of them should be separately coherent: they should also be consistent with one another in the sense of Walley's \emph{(joint) coherence} \cite[Section~7.1.4(b)]{walley1991}. For more details about this much more involved type of coherence, we refer also to \cite{miranda2008a,miranda2009a}. 
\par
Finally, let us introduce one of the most important concepts for this paper, that of epistemic irrelevance. 
We describe the case of conditional irrelevance, as the unconditional version of epistemic irrelevance can easily be recovered as a special case.\footnote{It suffices, in the discussion below, to let $C=\emptyset$. As we indicated in footnote~\ref{fn:empty-product}, this makes sure the variable $\var{C}$ has only one possible value, so conditioning on that variable amounts to not conditioning at all.}
\par
Consider three disjoint subsets $C$, $I$, and $O$ of $N$, where both $I$ and $O$ are non-empty.
When a subject judges $\var{I}$ to be \emph{epistemically irrelevant to $\var{O}$ conditional on $\var{C}$}, he assumes that if he knows the value of $\var{C}$, then learning in addition which value $\var{I}$ assumes in $\values{I}$ will not affect his beliefs about $\var{O}$. 
More formally, assume that a subject has a separately coherent conditional lower prevision $\dlcon{O}{C}$ on $\gambles{O}$. 
If he assesses $\var{I}$ to be epistemically irrelevant to $\var{O}$ conditional on $\var{C}$, this implies that he can infer from his model $\dlcon{O}{C}$ a conditional model $\dlcon{O}{C\cup I}$ on $\gambles{O}$ given by 
\begin{equation*}
  \xindlcon[f]{O}{C\cup I}\coloneqq\xindlcon[f]{O}{C}
  \text{ for all $f\in\gambles{O}$ and all $\xval{C\cup I}\in\values{C\cup I}$.}
\end{equation*}

\subsection{Local uncertainty models.}
We now add a \emph{local uncertainty model} to each of the nodes $s$. 
If $s$ is not the root node, i.e.~has a mother $\mother{s}$, then this local model is a (separately coherent) conditional lower prevision $\llocconm{s}$ on $\gambles{s}$: for each possible value $\zval{\mother{s}}$ of the variable $\var{\mother{s}}$ associated with its mother $\mother{s}$, we have a coherent lower prevision $\zinllocconm{s}$ for the value of $\var{s}$, conditional on $\var{\mother{s}}=\zval{\mother{s}}$.
In the root, we have an unconditional local uncertainty model $\lloc{\init}$ for the value of $\var{\init}$. $\lloc{\init}$ is a (separately) coherent lower prevision on $\gambles{\init}$. 
We use the common generic notation $\llocconm{s}$ for all these local models.\footnote{We can do this because $\var{\mother{\init}}=\var{\emptyset}$ has only one possible value, so conditioning on that variable amounts to not conditioning at all.}

\subsection{Global uncertainty models.}\label{sec:criteria}
We intend to show in Section~\ref{sec:joint} how all these local models $\llocconm{s}$ can be combined into \emph{global uncertainty models}.  
We generically denote such global models using the letter $P$. 
More specifically, we want to end up with an unconditional joint lower prevision $\lpr\coloneqq\lpr_{\after{\init}}=\lpr_{\nodes}$ on $\gambles{\nodes}$ for all variables in the tree, as well as conditional lower previsions $\lglobcon{S}{s}$ on $\gambles{\after{S}}$ for all non-terminal nodes $s$ and all non-empty $S\subseteq\children{s}$.  
\par
\emph{Ideally, we want these global (conditional) lower previsions 
\begin{inparaenum}[(i)]
\item to be compatible with the local assessments  $\llocconm{s}$, $s\in\nodes$,
\item to be coherent with one another, and
\item to reflect the conditional irrelevancies (or Markov-type conditions) that we want the graphical structure of the tree to encode. 
  In addition, we want them 
\item to be as conservative (small) as possible.
\end{inparaenum}}
\par
In this list, the only item that needs more explanation concerns the Markov-type conditions that the tree structure encodes. 
This is what we turn to now.

\subsection{The interpretation of the graphical model.}\label{sec:graphical:interpretation}
In classical Bayesian nets, the graphical structure is taken to represent the following assessments: for any node $s$, conditional on its parent variables, its non-parent non-descendant variables are epistemically irrelevant to it (and therefore also independent).
\par
In the present context, we assume that the tree structure embodies the following conditional irrelevance assessment, which turns out to be equivalent with the conditional independence assessment above in the special case of a Bayesian tree.
\begin{enumerate}[{\upshape CI}.]
\item Consider any node $s$ in the tree, any subset $S$ of its set of children $\children{s}$, and the set $\nonparnondes{S}\coloneqq\bigcap_{c\in S}\nonparnondes{c}$ of their common non-parent non-descendants. Then \emph{conditional on the mother variable $\var{s}$, the non-parent non-descendant variables $\var{\nonparnondes{S}}$ are assumed to be epistemically irrelevant to the variables $\var{\after{S}}$ associated with the children in $S$ and their descendants.} 
\end{enumerate}
This interpretation turns the tree into a \emph{credal tree under epistemic irrelevance}, and we also introduce the term \emph{imprecise Markov tree} (IMT) for it. 
For the global models we are considering here, CI has the following consequences. 
% It implies that for all $s\in\nodes\setminus\{\init\}$ and all $S\subseteq\nonparnondes{s}$ we can infer from $\lglobconm{s}$ a model $\lglobconnpnd{s}{S}$, where for all $\zval{S\cup\msing{s}}\in\values{S\cup\msing{s}}$, with obvious notations:\footnote{For $s=\init$, the corresponding irrelevance condition is trivial, as the set $\nonparnondes{\init}$ of non-parent non-descendants of the root node $\init$ is empty.}
% \begin{equation}\label{eq:irrelevance:first}
%  \zinlglobconnpnd[f]{s}{S}
%  =\zinlglobconm[f(\zval{S},\zval{\mother{s}},\cdot)]{s}
%  \text{ for all gambles $f$ in $\gambles{S\cup\msing{s}\cup\after{s}}$},
% \end{equation}
% where $f(\zval{S},\zval{\mother{s}},\cdot)$ denotes a partial map of $f$ defined on $\values{\after{s}}$.
It implies that for all $s\in\nonterminals$, all non-empty $S\subseteq\children{s}$ and all $I\subseteq\nonparnondes{S}$, we can infer from $\lglobcon{S}{s}$ a model $\lglobconirr{S}{s}{I}$, where for all $\zval{\sing{s}\cup I}\in\values{\sing{s}\cup I}$, with obvious notations:\footnote{For leaves $s$, the corresponding irrelevance condition is trivial, as the set $\children{s}$ of children of $s$ is empty.}
\begin{equation}\label{eq:irrelevance}
  \zinlglobconirr[f]{S}{s}{I}
  \coloneqq\zinlglobcon[f(\cdot,\zval{I})]{S}{s}
  \text{ for all gambles $f$ in $\gambles{\after{S}\cup I}$},
\end{equation}
where $f(\cdot,\zval{I})$ denotes a partial map of $f$, defined on $\values{\after{S}}$.
\par
We discuss some of the separation properties that accompany this interpretation in Section~\ref{sec:separation}. 
For now, we focus on two immediate consequences that will help us go from local to global models in Section~\ref{sec:joint}.
\par
First, consider some node $s$. 
Then CI tells us that for any two children $c_1,c_2\in\children{s}$ of $s$, the variable $\var{\after{c_1}}$ is epistemically irrelevant to the variable $\var{\after{c_2}}$, conditional on $\var{s}$. 
\begin{center}
  \begin{tikzpicture}\small
    \tikzstyle{edge from parent}+=[->,semithick]
    \node {$\var{s}$} 
    [grow=down,level distance=30,sibling distance=25]
    child {node {$\var{\after{c_1}}$}}
    child {node {$\dots$}}
    child {node {$\var{\after{c_2}}$}};
  \end{tikzpicture}
\end{center}
It even tells us that for any two disjoint non-empty sets $S_1\subseteq\children{s}$ and $S_2\subseteq\children{s}$ of children of $s$, the variable $\var{\after{S_1}}$ is epistemically irrelevant to $\var{\after{S_2}}$, conditional on $\var{s}$.
We conclude that, conditional on a node, all its children $c$ (and the variables associated with their sub-trees $\after{c}$) are \emph{epistemically independent} \cite[Chapter~9]{walley1991}, in the specific sense to be discussed in the next section.
\par
Next, consider some non-terminal node $s$ different from $\init$, and its mother variable $\var{\mother{s}}$. 
We infer from CI that this mother variable $\var{\mother{s}}$ is epistemically irrelevant to the variable $\var{\after{\children{s}}}$ conditional on $\var{s}$:
\begin{center}
  \begin{tikzpicture}\small
    \tikzstyle{edge from parent}+=[->,semithick]
    \node at (0,0) {$\var{\mother{s}}$} 
    [grow=down,level distance=30,sibling distance=25]
    child {node {$\var{s}$}
      child {node {$\var{\after{c_1}}$}}
      child {node {$\dots$}}
      child {node {$\var{\after{c_n}}$}}};
    \node at (3,-1) {\normalsize or equivalently, };
    \node at (6,0) {$\var{\mother{s}}$} 
    [grow=down,level distance=30]
    child {node {$\var{s}$}
      child {node {$\var{\after{\children{s}}}$}}};
  \end{tikzpicture}
\end{center}

\section{Independent natural extension}
\label{sec:independent-natural-extension}
Let us make a small digression on epistemic independence, which will help us in our discussion further on.
The material in this section is based on work that some of us have published elsewhere \cite{cooman2009b}, and we refer to that paper for more details and proofs for the results mentioned in this section. 

\subsection{Independent products.}\label{sec:indnatex}
Suppose we have a number of (separately) coherent marginal lower previsions $\mlpr{n}$ on $\gambles{n}$ representing beliefs about the values that each of a finite number of (logically independent) variables $\var{n}$ assume in the respective non-empty finite sets $\values{n}$, $n\in N$, where $N$ is some non-empty finite set.
\par
We want to construct a joint lower prevision $\mlpr{N}$ on $\gambles{N}$, where $\values{N}=\times_{n\in N}\values{n}$, that coincides with the marginals $\mlpr{n}$ on their respective domains $\gambles{n}$, and such that this $\mlpr{N}$ reflects the following structural assessments: for any disjoint proper subsets $O$ and $I$ of $N$, the variables $\var{I}$ are epistemically irrelevant to the variables $\var{O}$. 
In other words, learning the value of any number of these variables would not affect beliefs about the remaining variables. 
We then call the variables $\var{n}$, $n\in N$, \emph{epistemically independent}.
\par
Generally speaking, such irrelevance assessments are useful because they allow us to turn unconditional into conditional lower previsions. 
In particular, for any disjoint proper subsets $O$ and $I$ of $N$, we can use the epistemic irrelevance assessment of $\var{I}$ to $\var{O}$ to infer from the joint lower prevision $\mlpr{N}$ a conditional lower prevision $\clpr{O}{I}$ on $\gamblesio{O}{I}$ given by:
\begin{equation*}
  \zinclpr[h]{O}{I}
  \coloneqq\mlpr{N}(h(\cdot,\zval{I}))
  \text{ for all gambles $h$ on $\valuesio{O}{I}$ and all $\zval{I}\in\values{I}$}.
\end{equation*}
So we can use the symmetrised assessment of epistemic independence of the variables $\var{n}$, $n\in N$ to infer from $\mlpr{N}$ the following family of conditional lower previsions: 
\begin{equation*}
  \family{N}
  \coloneqq\set{\clpr{O}{I}}
  {\text{$O$ and $I$ disjoint proper subsets of $N$}}.
\end{equation*}
This idea leads to the definition of an independent product, which generalises the existing notion for (precise) probability models. 

\begin{definition}\label{def:independent:product}
  A (separately) coherent lower prevision $\mlpr{N}$ on $\gambles{N}$ that coincides with the marginal lower previsions $\mlpr{n}$ on their domains $\gambles{n}$, $n\in N$ and that is coherent with the family of conditional lower previsions $\family{N}$ is called an \emph{independent product}\footnote{In \cite{cooman2009b}, we distinguish between many-to-many and many-to-one independent products. It is not necessary to make this distinction here, but whenever we use the term `independent product' in the present paper, we implicitly refer to the more stringent many-to-many version introduced there.} of these marginals $\mlpr{n}$. 
\end{definition}

\noindent It turns out that there always is a point-wise smallest independent product:

\begin{proposition}\label{prop:independent-factorising}
  Any collection of (separately) coherent lower previsions $\mlpr{n}$ on $\gambles{n}$, $n\in N$, has a point-wise smallest  independent product. 
  We call it their \emph{independent natural extension} and denote it by $\otimes_{n\in N}\mlpr{n}$.
  Moreover, $\otimes_{n\in N}\mlpr{n}$ is a strongly factorising coherent lower prevision on $\gambles{N}$.
\end{proposition}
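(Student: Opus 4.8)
The plan is to prove existence of the smallest independent product by explicitly constructing a candidate and then verifying it has all the required properties. I would begin by defining, for each pair of disjoint proper subsets $O,I$ of $N$, the natural extension associated with the marginal assessments together with the irrelevance constraints, and take $\otimes_{n\in N}\mlpr{n}$ to be the natural extension of the union of all the avoiding-sure-loss conditions. Concretely, one shows that the family consisting of the marginals $\mlpr{n}$ and the conditional models forced by the irrelevance assessments avoids sure loss, and then takes the (joint) natural extension. First I would establish that this natural extension exists — i.e.\ that the assessments are consistent — which reduces to exhibiting at least one independent product; the obvious candidate here is any strong product (a lower envelope of product measures of selections from the credal sets $\mathcal{M}(\mlpr{n})$), which is easily seen to be an independent product and hence shows the constraints avoid sure loss.

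Next I would argue that the natural extension of these assessments is itself an independent product: by construction it is separately coherent, it dominates (hence, by minimality arguments, coincides with) the marginals on each $\gambles{n}$, and it is jointly coherent with the induced family $\family{N}$ because natural extension is precisely the procedure that produces the smallest jointly coherent extension of an assessment that avoids sure loss. Since any other independent product is a separately coherent lower prevision coherent with $\family{N}$ and agreeing with the marginals, it must dominate the natural extension; this gives point-wise minimality. The routine-but-careful part is checking that conditioning the natural extension on $\var{I}=\zval{I}$ really does return a model that does not depend on $\zval{I}$ — i.e.\ that the irrelevance assessments are genuinely encoded and survive natural extension — and here one uses the marginalisation and regular-extension bookkeeping for conditional lower previsions in Walley's framework.

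Finally, for strong factorisation I would show that for disjoint $O,I\subseteq N$, any $f\in\gambles{O}$, and any non-negative $g\in\gambles{I}$, one has $\otimes_{n\in N}\mlpr{n}(fg) = \otimes_{n\in N}\mlpr{n}(g\,\otimes_{n\in N}\mlpr{n}(f\mid\var{I}))$, or at least the factorising inequality $\otimes_{n\in N}\mlpr{n}(gf)\geq \otimes_{n\in N}\mlpr{n}(g\,\mathbb{I}_{f\geq 0})\,\mlpr{O}(f)$-type bound that is used later; this can be obtained either by a direct gambles-manipulation argument exploiting the irrelevance identity together with super-additivity and positive homogeneity, or by observing that strong products are factorising and that factorisation is preserved under taking lower envelopes, combined with the fact that $\otimes_{n\in N}\mlpr{n}$ is the lower envelope of the strong products (this last identification itself requires a separate-coherence/natural-extension argument).

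The main obstacle I expect is the joint coherence check — verifying that the marginals together with the whole family $\family{N}$ of conditional lower previsions induced by all the pairwise irrelevance assessments are coherent with one another in Walley's sense, not merely that they avoid sure loss. The many-to-many character of the independence assessment means there are a great many conditional models to reconcile simultaneously, and establishing that natural extension of the union respects every one of them (rather than strictly strengthening some) is the delicate step; this is presumably where the proof leans most heavily on the companion paper \cite{cooman2009b}.
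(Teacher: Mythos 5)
Your proposal cannot be matched against an in-paper argument, because the paper does not prove Proposition~\ref{prop:independent-factorising} at all: Section~\ref{sec:independent-natural-extension} explicitly quotes it from the companion work \cite{cooman2009b}, and the Appendix only proves the message-passing identities, Propositions~\ref{prop:global:positivity} and~\ref{prop:global:positivity:too}, and Theorem~\ref{theo:global-models}. Judged on its own, your text is an outline rather than a proof: the two assertions that carry essentially all of the content --- that the natural extension of the irrelevance-generated assessments is \emph{jointly} coherent, in Walley's many-to-many sense, with the entire family $\family{N}$ of conditionals it induces, and that the resulting joint strongly factorises --- are exactly the steps you label ``delicate'' or ``routine-but-careful'' and then defer to \cite{cooman2009b} or to an unspecified gambles manipulation. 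Joint coherence here is not a formal consequence of natural extension being ``the smallest coherent extension of assessments that avoid sure loss'': the members of $\family{N}$ are defined from the joint itself, so one must actually verify that conditioning the constructed joint on $\var{I}=\zval{I}$, for every admissible $I$ and $\zval{I}$, returns precisely the $\zval{I}$-independent model the irrelevance assessment demands, and does so simultaneously for all pairs $(O,I)$; that verification is the substance of the companion paper, not bookkeeping.

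Moreover, your fallback route to strong factorisation rests on a false identification. For given marginals there is a single strong product $\times_{n\in N}\mlpr{n}$ (itself the lower envelope of products of dominating linear previsions); it is an independent product and therefore \emph{dominates} $\otimes_{n\in N}\mlpr{n}$, in general strictly --- this gap is exactly why epistemic independence and strong independence are distinct notions, as the paper emphasises. So $\otimes_{n\in N}\mlpr{n}$ is not ``the lower envelope of the strong products'', and strong factorisation cannot be inherited from the strong product by an envelope argument; it has to be established directly from the construction (as is done in \cite{cooman2009b}). The parts of your plan that do work are the consistency step --- the strong product witnesses that the assessments can be coherently extended, and sandwiching between the marginal assessments and the strong product yields coincidence with each $\mlpr{n}$ on $\gambles{n}$ --- and the minimality argument, although even the fact that the strong product is an independent product in the many-to-many sense is itself a result needing proof rather than an observation.
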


\noindent Strong factorisation is strongly linked with independent products, and will play a crucial part in our development of an algorithm for updating an imprecise Markov tree in Section~\ref{sec:algorithm}. 
It is defined as follows:

\begin{definition}
  We call a (separately) coherent lower prevision $\mlpr{N}$ on $\gambles{N}$
\emph{strongly factorising} if for all disjoint proper subsets $O$ and $I$ of $N$, all $g\in\gambles{O}$ and all non-negative $f\in\gambles{I}$, $\mlpr{N}(fg)=\mlpr{N}(f\mlpr{N}(g))$.
\end{definition}
\noindent
As another important example, the so-called \emph{strong product} $\times_{n\in N}\mlpr{n}$ \cite{cozman2000}  of marginal lower previsions $\mlpr{n}$ is strongly factorising.\footnote{This type of independent product comes to the fore in a study of credal nets under strong independence.} 
\par
As a consequence of the separate coherence of the joint lower prevision $\mlpr{N}$, the right-hand side of the equality in this definition can be rewritten as:
\begin{equation*}
  \mlpr{N}(f\mlpr{N}(g))
  =
  \begin{cases}
    \mlpr{N}(f)\mlpr{N}(g)
    &\text{ if $\mlpr{N}(g)\geq0$}\\
    \mupr{N}(f)\mlpr{N}(g)
    &\text{ if $\mlpr{N}(g)\leq0$},
  \end{cases}
\end{equation*}
which explains where the term `factorising' comes from.
In particular, for any (separately) coherent strongly factorising joint lower prevision $\mlpr{N}$, we see that 
for any partition $N_1$, \dots, $N_m$ of $N$:
\begin{equation}\label{eq:events-factorising}
  \mlpr{N}(\times_{k=1}^mA_k)=\prod_{k=1}^m\mlpr{N}(A_k)
  \text{ and }
  \mupr{N}(\times_{k=1}^mA_k)=\prod_{k=1}^m\mupr{N}(A_k),
\end{equation}
where $A_k\subseteq\values{I_k}$ for $k=1,\dots,m$. 
\par
The independent natural extension has very interesting and non-trivial \emph{marginalisation and associativity properties}. 
Consider any non-empty subset $R$ of $N$, then the independent natural extension $\otimes_{r\in R}\mlpr{r}$ of the marginals $\mlpr{r}$, $r\in R$ coincides with the restriction of $\otimes_{n\in N}\mlpr{n}$ to the set of gambles $\gambles{R}$:
\begin{equation}\label{eq:marginalisation}
  \left(\otimes_{r\in R}\mlpr{r}\right)(g)
  =\left(\otimes_{n\in N}\mlpr{n}\right)(g)
  \text{ for all gambles $g$ on $\values{R}$}.
\end{equation}
Moreover, for any partition $N_1$ and $N_2$ of $N$, we have that 
\begin{equation}\label{eq:associativity}
  \otimes_{n\in N}\mlpr{n}
  =\left(\otimes_{n_1\in N_1}\mlpr{n_1}\right)\otimes\left(\otimes_{n_2\in N_2}\mlpr{n_2}\right),
\end{equation}
so $\otimes_{n\in N}\mlpr{n}$ is the independent natural extension of its $\values{N_1}$-marginal $\otimes_{n_1\in N_1}\mlpr{n_1}$ and its  $\values{N_2}$-marginal $\otimes_{n_2\in N_2}\mlpr{n_2}$.

\subsection{Regular extension}
As a next step, suppose we want to condition a separately coherent and strongly factorising joint $\mlpr{N}$ on observations of the type $\var{I}=\zval{I}$, where $I$ is some proper subset of $N$.
In other words, we want to find conditional lower previsions $\clpr{O}{I}$ on $\gamblesio{I}{O}$ that are (jointly) coherent with the joint lower prevision $\mlpr{N}$.
To this end, we calculate the so-called \emph{regular extension} as follows. 
Consider $\zval{I}$ in $\values{I}$. 
When $\mupr{N}(\zsing{I})>0$,
\begin{equation*}
  \zinlrext[h]{I}
  \coloneqq\max\set{\mu\in\reals} {\mlpr{N}(\zindic{I}[h-\mu])\geq0},
\end{equation*}
where $O$ is any non-empty subset of $N\setminus I$ and $h$ is any gamble on $\valuesio{I}{O}$. 
When $\mupr{N}(\zsing{I})=0$, $\xinlrext{I}$ is \emph{vacuous}, meaning that $\zinlrext[h]{I}=\min_{\xval{O}\in\values{O}}h(\zval{I},\xval{O})$ for all gambles $h$ on $\valuesio{I}{O}$.
\par
Generally speaking, coherence only determines $\zinclpr{O}{I}$ uniquely if  $\mlpr{N}(\zsing{I})>0$, and in that case regular extension yields this uniquely coherent conditional lower prevision: $\zinclpr{O}{I}=\zinlrext{I}$.
When $\mlpr{N}(\zsing{I})=0$, regular extension is still coherent, and it even still characterises the coherent $\zinclpr{O}{I}$, because these all lie between the vacuous lower prevision and $\zinlrext{I}$.
For more details about this regular extension, we refer to \cite[Appendix~J]{walley1991} and \cite[Section~4]{miranda2009a}. 
\par
If the joint $\mlpr{N}$ is strongly factorising, we get:
\begin{align*}
  \mlpr{N}(\xindic{I}[h-\mu])
  &=\mlpr{N}(\xindic{I}\mlpr{N}(h(\xval{I},\cdot)-\mu))\\
  &=
  \begin{cases}
    \mlpr{N}(\xsing{I})[\mlpr{N}(h(\xval{I},\cdot))-\mu]
    &\text{ if $\mlpr{N}(h(\xval{I},\cdot))\geq\mu$}\\
    \mupr{N}(\xsing{I})[\mlpr{N}(h(\xval{I},\cdot))-\mu]
    &\text{ if $\mlpr{N}(h(\xval{I},\cdot))\leq\mu$},
  \end{cases}
\end{align*}
so we conclude that, quite interestingly,
\begin{equation}\label{eq:regular-extension-factorising}
  \xinlrext[h]{I}=\mlpr{N}(h(\xval{I},\cdot))
  \text{ as soon as $\mupr{N}(\xsing{I})>0$}.
\end{equation}
In other words, the conditional lower previsions found by regular extension of a strongly factorising joint satisfy all epistemic irrelevance conditions present in an assessment of epistemic independence.
We shall have occasion to use this idea several times in the course of this paper, especially in the proofs.

\subsection{Conditionally independent products.}\label{sec:condindnatex}
To end this section, we generalise the notion of an independent product to that of a conditionally independent product. 
In this case we have a number of `marginal' conditional lower previsions $\mlprtoo{n}$ on $\gambles{n}$ representing beliefs (conditional on a variable $\vartoo$ in a finite set $\toovalues$) about the values that each of a finite number of (logically independent) variables $\var{n}$ assume in the respective non-empty finite sets $\values{n}$, $n\in N$.
\par
We want to construct a conditional lower prevision $\mlprtoo{N}$ on $\gambles{N}$, where $\values{N}=\times_{n\in N}\values{n}$, that coincides with the marginal conditional lower previsions $\mlprtoo{n}$ on their respective domains $\gambles{n}$, and such that this $\mlprtoo{N}$ reflects the following structural assessments: for any disjoint proper subsets $O$ and $I$ of $N$, the variables $\var{I}$ are epistemically irrelevant to the variables $\var{O}$, \emph{conditional on $\vartoo$}. 
In other words, if the value of $\vartoo$ was known, then learning the value of any number of these variables would not affect beliefs about the remaining variables. 
We then call the variables $\var{n}$, $n\in N$ \emph{epistemically independent, conditional on $\vartoo$}.
\par
Generally speaking, such conditional irrelevance assessments are useful because they allow us to turn lower previsions conditional on $\vartoo$ alone into other, more involved conditional lower previsions. 
In particular, for any disjoint proper subsets $O$ and $I$ of $N$, we can use the epistemic irrelevance assessment of $\var{I}$ to $\var{O}$ conditional on $\vartoo$ to infer from the joint lower prevision $\mlprtoo{N}$ a conditional lower prevision $\clprtoo{O}{I}$ on $\gamblesio{O}{I}$ [or equivalently on $\gamblesiotoo{O}{I}$] given by:
\begin{equation*}
  \zinclprtoo[h]{O}{I}
  \coloneqq\inmlprtoo[h(\cdot,\zval{I})]{N}
  \text{ for all gambles $h$ on $\valuesio{O}{I}$ and all $\zval{I}\in\values{I}$}.
\end{equation*}
So we can use the symmetrised assessment of epistemic independence of the variables $\var{n}$, $n\in N$ conditional on $\vartoo$ to infer from the $\mlprtoo{N}$ the following family of conditional lower previsions: 
\begin{equation*}
  \familytoo{N}
  \coloneqq\set{\clprtoo{O}{I}}
  {\text{$O$ and $I$ disjoint proper subsets of $N$}}.
\end{equation*}
This idea leads to the definition of a conditionally independent product. 

\begin{definition}\label{def:conditionally:independent:product}
  A (separately) coherent conditional lower prevision $\mlprtoo{N}$ on $\gambles{N}$ that coincides with the `marginal' conditional lower previsions $\mlprtoo{n}$ on their domains $\gambles{n}$, $n\in N$ and that is coherent with the family of conditional lower previsions $\familytoo{N}$ is called a \emph{conditionally independent product} of these marginals $\mlprtoo{n}$.
\end{definition}

\noindent It turns out that there always is a point-wise smallest conditionally independent product:

\begin{proposition}\label{prop:conditionally:independent-factorising}
  Any collection of (separately) coherent conditional lower previsions $\mlprtoo{n}$ on $\gambles{n}$, $n\in N$, has a point-wise smallest conditionally independent product. 
  We call it their \emph{conditionally independent natural extension} and denote it by $\otimes_{n\in N}\mlprtoo{n}$.
\end{proposition}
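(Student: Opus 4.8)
The plan is to reduce the claim to the unconditional statement of Proposition~\ref{prop:independent-factorising} by ``slicing'' along $\vartoo$. Since $\toovalues$ is finite, a separately coherent conditional lower prevision $\mlprtoo{N}$ on $\gambles{N}$ is the same thing as a family of unconditional coherent lower previsions $\inmlprtoo{N}$ on $\gambles{N}$, one for each $\valtoo\in\toovalues$, obtained by fixing $\vartoo=\valtoo$; likewise each `marginal' $\mlprtoo{n}$ corresponds to the family of coherent lower previsions $\inmlprtoo{n}$ on $\gambles{n}$, $\valtoo\in\toovalues$ (that each $\inmlprtoo{n}$ is coherent is precisely what separate coherence of $\mlprtoo{n}$ ensures). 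First I would observe that, under this identification, $\mlprtoo{N}$ coincides with the $\mlprtoo{n}$ on $\gambles{n}$ exactly when, for every $\valtoo$, the restriction of $\inmlprtoo{N}$ to $\gambles{n}$ equals $\inmlprtoo{n}$ for all $n\in N$; and that the conditional lower previsions in the family $\familytoo{N}$ are, slice by slice in $\valtoo$, exactly those in the family $\mathcal{I}(\inmlprtoo{N})$ induced in the sense of Section~\ref{sec:indnatex} by the unconditional joint $\inmlprtoo{N}$ --- this is immediate by comparing the defining formulas for $\clprtoo{O}{I}$ and $\clpr{O}{I}$. The crucial point is then that $\mlprtoo{N}$ is jointly coherent with $\familytoo{N}$ if and only if, for each $\valtoo\in\toovalues$, $\inmlprtoo{N}$ is jointly coherent with $\mathcal{I}(\inmlprtoo{N})$: since all the conditional lower previsions in $\{\mlprtoo{N}\}\cup\familytoo{N}$ condition (at least) on $\vartoo$, Walley's joint coherence for this collection decomposes into joint coherence within each value $\valtoo$ of $\vartoo$, essentially because a test gamble of the form $\ind{\sing{\valtoo}}h$ only probes the slice at $\valtoo$.

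Granting this, a conditional lower prevision $\mlprtoo{N}$ is a conditionally independent product of the $\mlprtoo{n}$ exactly when each of its slices $\inmlprtoo{N}$ is an unconditional independent product of the marginals $\inmlprtoo{n}$, $n\in N$, so that Proposition~\ref{prop:independent-factorising} applies slice by slice. I would then simply set $\inmlprtoo{N}\coloneqq\otimes_{n\in N}\inmlprtoo{n}$ for every $\valtoo\in\toovalues$ and let $\mlprtoo{N}$ be the conditional lower prevision these slices determine. By the equivalence above, $\mlprtoo{N}$ is a conditionally independent product, so such products exist. For minimality, let $\mlprtoo{N}'$ be any conditionally independent product; then each slice $\inmlprtoo{N}'$ is an independent product of the $\inmlprtoo{n}$, hence $\inmlprtoo{N}'\geq\inmlprtoo{N}$ point-wise by the minimality clause of Proposition~\ref{prop:independent-factorising}, and since this holds for every $\valtoo$ we get $\mlprtoo{N}'\geq\mlprtoo{N}$ point-wise. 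Thus $\mlprtoo{N}$ is the point-wise smallest conditionally independent product, which is exactly $\otimes_{n\in N}\mlprtoo{n}$.

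The only step that is more than bookkeeping is the decomposition of joint coherence across the values of $\vartoo$: one must check that a family of conditional lower previsions all of whose conditioning events refine the partition induced by $\vartoo$ is jointly coherent iff each of the $\abs{\toovalues}$ ``slice'' subfamilies is. I expect to prove this directly from Walley's definition of (joint) coherence --- using that, for a gamble multiplied by $\ind{\sing{\valtoo}}$, all contributions coming from other values of $\vartoo$ vanish --- or alternatively to extract it from the marginalisation-type results for coherent conditional lower previsions in \cite{miranda2009a}. The remaining ingredients (separate coherence of $\mlprtoo{N}$, agreement with the marginals, and the slice-wise description of $\familytoo{N}$) are then routine consequences of Proposition~\ref{prop:independent-factorising} together with the marginalisation property~\eqref{eq:marginalisation} of the independent natural extension.
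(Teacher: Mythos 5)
The paper itself contains no proof of this proposition: like the rest of Section~\ref{sec:independent-natural-extension}, it is imported from the companion paper \cite{cooman2009b}, so there is no in-paper argument to compare yours against. Your slice-by-slice reduction to Proposition~\ref{prop:independent-factorising} is, however, exactly the picture the authors endorse in the remark immediately following the proposition (for each $\valtoo\in\toovalues$, $\otimes_{n\in N}\inmlprtoo{n}$ is the independent natural extension of the marginals $\inmlprtoo{n}$), and what your route buys is a self-contained proof inside this paper, exploiting that $\toovalues$ is finite and that every model in $\{\mlprtoo{N}\}\cup\familytoo{N}$ conditions at least on $\vartoo$. Your identification of the slices of $\familytoo{N}$ with the family induced by the unconditional joint, the slice-wise reading of separate coherence and of agreement with the marginals, and the minimality argument are all correct and routine once the decomposition lemma is in place.

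The one point to be careful about is the lemma you yourself flag, and specifically its two directions. The localisation trick with test gambles of the form $\ind{\sing{\valtoo}}h$ only yields ``jointly coherent $\then$ coherent within each slice'', and even that presupposes the domains are taken as $\gamblesiotoo{O}{I}$ rather than $\gamblesio{O}{I}$ --- harmless, since separate coherence extends each model uniquely to the larger domain, but it should be said. The direction you actually need for existence (slice-wise coherence $\then$ joint coherence) rests on a different observation: for arbitrary test gambles $f_j$ and a conditioning event $B_0$ contained in the slice of $\valtoo$, the Walley test function restricted to that slice coincides with a legitimate slice test function, and the slice's sup-set ($B_0$ together with the supports lying in that slice) is contained in the full sup-set, so the full supremum dominates a quantity that slice-wise coherence already makes non-negative. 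In the present finite setting both directions go through, so with that step spelled out your proof is complete; it would not survive unchanged in an infinite setting, where Walley's coherence brings in conglomerability-type requirements that do not factorise so simply over the values of $\vartoo$.
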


\noindent The notation we use for the conditionally independent natural extension is appropriately suggestive: for each $\valtoo$ in $\toovalues$, $\otimes_{n\in N}\inmlprtoo{n}$ is indeed the independent natural extension of the marginal lower previsions $\inmlprtoo{n}$.
This implies that each $\otimes_{n\in N}\inmlprtoo{n}$ is a strongly factorising coherent lower prevision on $\gambles{N}$.
\par
We are now ready to go back to our discussion of imprecise Markov trees.

\section{Constructing the most conservative joint}
\label{sec:joint}
Let us show how to construct specific global models for the variables in the tree, and argue that these are the most conservative coherent models that extend the local models and express all conditional irrelevancies~\eqref{eq:irrelevance}, encoded in the imprecise Markov tree. 
In Section~\ref{sec:algorithm}, we will use these global models to construct and justify an algorithm for updating the imprecise Markov tree. 
\par
The crucial step lies in the recognition that any tree can be constructed recursively from the leaves up to the root, by using basic building blocks of the following type:
\begin{center}
  \begin{tikzpicture}\small
    \tikzstyle{edge from parent}+=[->,semithick]
    \tikzstyle{infoline}=[->,thick,dotted,draw=blue!50]
    \node (xm) 
    {$\var{\mother{s}}$}
    [grow=down,level distance=30,sibling distance=25]
    child {node (xs) {$\var{s}$}
      child {node (xc1) {$\var{\after{c_1}}$}}
      child {node (xc2) {$\var{\after{c_2}}$}}
      child {node (dots) {$\dots$}}
      child {node (xcn) {$\var{\after{c_n}}$}}};
    \node[right of=xs,node distance=4cm,color=blue] (xsmodel) 
    {$\llocconm{s}$};
    \node[below of=xsmodel,node distance=30,color=blue] (xckmodel) 
    {$\lglobcon{c_k}{s}$};
    \path[infoline] (xs) -- (xsmodel);
    \path[infoline] (xcn) -- (xckmodel);
  \end{tikzpicture}
\end{center}
The global models are then also constructed recursively, following the same pattern. 
In what follows, we first derive the recursion equations for these global models in a heuristic manner.
The real justification for using the global models thus derived is then given in Theorem~\ref{theo:global-models}.
\par
Consider a node $s$ and suppose that, in each of its children  $c\in\children{s}$, we already have a global conditional lower prevision $\lglobcon{c}{s}$ on $\gambles{\after{c}}$ [or equivalently, on $\gambles{\sing{s}\cup\after{c}}$]. 
\par
Given that, conditional on $\var{s}$,  the variables $\var{\after{c}}$, $c\in\children{s}$ are epistemically independent [see Section~\ref{sec:graphical:interpretation}, condition~CI], the discussion in Section~\ref{sec:independent-natural-extension} leads us to combine the `marginals' $\lglobcon{c}{s}$, $c\in\children{s}$ into their point-wise smallest conditionally independent product (conditionally independent natural extension)  $\otimes_{c\in\children{s}}\lglobcon{c}{s}$, which is a conditional lower prevision $\linecon{s}$ on $\gambles{\after{\children{s}}}$ [or equivalently, on $\gambles{\after{s}}$]: 
\begin{center}
  \begin{tikzpicture}\small
    \tikzstyle{edge from parent}+=[->,semithick]
    \tikzstyle{infoline}=[->,thick,dotted,draw=blue!50]
    \node (xm) 
    {$\var{\mother{s}}$}
    [grow=down,level distance=30,sibling distance=25]
    child {node (xs) {$\var{s}$}
      child {node (xc) {$\var{\after{\children{s}}}$}}};
    \node[right of=xs,node distance=4cm,color=blue] (xsmodel) 
    {$\llocconm{s}$};
    \node[below of=xsmodel,node distance=30,color=blue] (xcmodel) 
    {$\otimes_{c\in\children{s}}\lglobcon{c}{s}\eqqcolon\linecon{s}$};
    \path[infoline] (xs) -- (xsmodel);
    \path[infoline] (xc) -- (xcmodel);
  \end{tikzpicture}
\end{center}
\par
Next, we need to combine the conditional models $\llocconm{s}$ and  $\linecon{s}$ into a global conditional model about $\var{\after{s}}$.
Given that, conditional on $\var{s}$, the variable $\var{\mother{s}}$ is epistemically irrelevant to the variable $\var{\after{\children{s}}}$ [see Section~\ref{sec:graphical:interpretation}, condition~CI], we expect $\lpr_{\after{\children{s}}}(\cdot\vert\var{\sing{\mother{s},s}})$ and $\linecon{s}$ to coincide [this is a special instance of Eq.~\eqref{eq:irrelevance}]. 
The most conservative (point-wise smallest) coherent way of combining the conditional lower previsions 
$\lpr_{\after{\children{s}}}(\cdot\vert\var{\sing{\mother{s},s}})$ and $\llocconm{s}$ consists in taking their \emph{marginal extension}\footnote{Marginal extension is, in the special case of precise probability models, also known as the law of total probability, or the law or iterated expectations.} $\llocconm[{\lpr_{\after{\children{s}}}(\cdot\vert\var{\sing{\mother{s},s}})}]{s}=\llocconm[\linecon{s}]{s}$; see \cite{miranda2006b,walley1991} for more details. 
Graphically: 
\begin{center}
  \begin{tikzpicture}\small
    \tikzstyle{edge from parent}+=[->,semithick]
    \tikzstyle{infoline}=[->,thick,dotted,draw=blue!50]
    \node (xm) 
    {$\var{\mother{s}}$}
    [grow=down,level distance=30,sibling distance=25]
    child {node (xs) {$\var{\after{s}}$}};
    \node[right of=xs,node distance=4cm,color=blue] (xsmodel) 
    {$\llocconm[\linecon{s}]{s}\eqqcolon\lglobconm{s}$};
    \path[infoline] (xs) -- (xsmodel);
  \end{tikzpicture}
\end{center}
\par
Summarising, and also accounting for the case $s=\init$, we can construct a global conditional lower prevision $\lglobconm{s}$ on $\gambles{\after{s}}$ by backwards recursion:
\begin{align}
  \linecon{s}
  &\coloneqq\otimes_{c\in\children{s}}\lglobcon{c}{s}
  \label{eq:backwards-nine}\\
  \lglobconm{s}
  &\coloneqq\llocconm[\linecon{s}]{s}  
  =\llocconm[\otimes_{c\in\children{s}}\lglobcon{c}{s}]{s},
  \label{eq:backwards-global}
\end{align}
for all $s\in\nonterminals$. 
If we start with the `boundary conditions'
\begin{equation}\label{eq:boundary-condition}
  \lglobconm{t}\coloneqq\llocconm{t}
  \text{ for all leaves $t$}, 
\end{equation}
then the recursion relations~\eqref{eq:backwards-nine} and~\eqref{eq:backwards-global} eventually lead to the global joint model $\lglob{\init}=\lglobconm{\init}$, and to the global conditional models $\linecon{s}$ for all non-terminal nodes~$s$.
For any subset $S\subseteq\children{s}$, the global conditional model $\lglobcon{S}{s}$ can then be defined simply as the restriction of the model $\linecon{s}$ on $\gambles{\after{\children{s}}}$ to the set $\gambles{\after{S}}$:
\begin{equation}\label{eq:backwards-restrict}
  \lglobcon[g]{S}{s}\coloneqq \linecon[g]{s}
  \text{ for all gambles $g$ on $\values{\after{S}}$}.
\end{equation}
It follows from the discussion in Section~\ref{sec:independent-natural-extension} that, alternatively [see Eq.~\eqref{eq:marginalisation}],
\begin{equation}\label{eq:backwards-nine-restrict}
  \lglobcon{S}{s}
  =\otimes_{c\in S}\lglobcon{c}{s}.
\end{equation}
For easy reference, we will in what follows refer to this collection of global models as the \emph{family of global models $\treefamily{P}$,} so
\begin{equation*}
  \treefamily{P}
  \coloneqq\{\lglob{}\}\cup
  \set{\lglobcon{S}{s}}{s\in\nonterminals\text{ and non-empty }S\subseteq\children{s}}.
\end{equation*}
\par
We end this section by discussing a number of interesting properties for the family of global models $\treefamily{P}$ we can derive in this way.
Let us call any real functional $\Phi$ on $\gambles{}$ \emph{strictly positive} if $\Phi(\xindic{})>0$ for all $\xval{}\in\values{}$. 

\begin{proposition}\label{prop:global:positivity}
  If all the local models $\ulocconm{s}$, $s\in\nodes$ are strictly positive, then so are all the global models in $\treefamily{P}$. 
\end{proposition}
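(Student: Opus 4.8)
The plan is to prove the statement by backwards recursion on the tree, mirroring the construction \eqref{eq:backwards-nine}--\eqref{eq:boundary-condition} of the family $\treefamily{P}$. First I would fix the reading of the statement: since the hypothesis constrains the \emph{upper} local models $\ulocconm{s}$, strict positivity of a (lower) model in $\treefamily{P}$ is to be understood as strict positivity of its conjugate \emph{upper} prevision (which is anyway the non-degeneracy condition that matters; see the discussion around \eqref{eq:regular-extension-factorising}). So the goal is: $\upr(\xsing{\nodes})>0$ for all $\xval{\nodes}\in\values{\nodes}$, and $\uglobcon[\xsing{\after{S}}]{S}{s}>0$ pointwise on $\values{s}$ for every $s\in\nonterminals$, every non-empty $S\subseteq\children{s}$ and every $\xval{\after{S}}\in\values{\after{S}}$. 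Since $\lglob{}=\lglobconm{\init}$, since $\lglobcon{S}{s}=\lglobconm{c}$ when $S=\{c\}$, and since $\lglobcon{S}{s}=\otimes_{c\in S}\lglobcon{c}{s}$ for larger $S$ by \eqref{eq:backwards-nine-restrict}, it suffices to prove by backwards recursion that, for every $s\in\nodes$, the conjugate upper $\uglobconm{s}$ of $\lglobconm{s}$ is strictly positive and, when $s\in\nonterminals$, that the conjugate upper $\uinecon{s}$ of $\linecon{s}$ is strictly positive; the remaining $\uglobcon{S}{s}$ then follow from the same factorisation argument as in step~(a) below.

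For the base case, \eqref{eq:boundary-condition} gives $\lglobconm{t}=\llocconm{t}$ at a leaf $t$, so $\uglobconm{t}=\ulocconm{t}$ is strictly positive by hypothesis. For the recursion step I would fix a non-terminal $s$, assume the claim for every child $c\in\children{s}$, and argue in two halves. (a)~For $\linecon{s}=\otimes_{c\in\children{s}}\lglobcon{c}{s}$: for each fixed value $\xval{s}$, the conditional model $\xinlinecon{s}$ is the independent natural extension of the marginals $\xinlglobcon{c}{s}$, $c\in\children{s}$ [Proposition~\ref{prop:conditionally:independent-factorising} and the remark following it], and hence strongly factorising [Proposition~\ref{prop:independent-factorising}]. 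Since the sub-trees $\after{c}$, $c\in\children{s}$, partition $\after{\children{s}}$, the event form \eqref{eq:events-factorising} of strong factorisation, together with \eqref{eq:marginalisation} to identify each factor with the model of the corresponding child, gives $\xinuinecon[\xsing{\after{\children{s}}}]{s}=\prod_{c\in\children{s}}\xinuglobcon[\xsing{\after{c}}]{c}{s}$; each factor is strictly positive by the induction hypothesis (which asserts $\uglobconm{c}$ strictly positive), hence so is the product, and the same computation restricted to any non-empty $S\subseteq\children{s}$ shows each $\uglobcon{S}{s}$ strictly positive. (b)~For $\lglobconm{s}=\llocconm[\linecon{s}]{s}$, the marginal extension of $\llocconm{s}$ and $\linecon{s}$: writing a value of $\var{\after{s}}$ as a pair $(\xval{s},\xval{\after{\children{s}}})$, the gamble $\linecon[-\xindic{s}\xindic{\after{\children{s}}}]{s}$ on $\values{s}$ equals the non-positive multiple $-\xinuinecon[\xsing{\after{\children{s}}}]{s}\,\xindic{s}$ of $\xindic{s}$. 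Applying non-negative homogeneity SC\ref{item:separate:coherence:nnhom} inside the marginal extension, together with conjugacy, then yields $\uglobconm[\xsing{\after{s}}]{s}=\xinuinecon[\xsing{\after{\children{s}}}]{s}\cdot\ulocconm[\xsing{s}]{s}$, a strictly positive constant times the strictly positive gamble $\ulocconm[\xsing{s}]{s}$ on $\values{\mother{s}}$ (using part~(a) and the hypothesis). Taking $s=\init$ closes the recursion and gives $\upr(\xsing{\nodes})>0$.

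I expect the only real obstacle to be the conjugacy bookkeeping in step~(b): evaluating the marginal extension on $\xindic{s}\xindic{\after{\children{s}}}$ directly would bring in \emph{lower} probabilities, which the hypothesis does not control; one has to pass through $-\xindic{s}\xindic{\after{\children{s}}}$, so that SC\ref{item:separate:coherence:nnhom} is applied to a \emph{non-positive} multiple of $\xindic{s}$ and the \emph{upper} local probability $\ulocconm[\xsing{s}]{s}$ is what comes out. Everything else is routine: unwinding \eqref{eq:backwards-nine}--\eqref{eq:backwards-nine-restrict}, matching marginals via \eqref{eq:marginalisation}, and factorising singleton probabilities along the partition into sub-trees via \eqref{eq:events-factorising}. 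The argument also specialises without change to the degenerate cases --- at a leaf there are no children, so step~(a) is vacuous, and at $\init$ the variable $\var{\mother{\init}}=\var{\emptyset}$ has a single value, so the marginal extension in step~(b) is unconditional.
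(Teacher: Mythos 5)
Your proof is correct and follows essentially the same route as the paper's: the paper also reads strict positivity through the conjugate upper models, and establishes it by the same leaves-to-root recursion, computing the upper probability of each singleton cylinder event as a product of the children's contributions (via strong factorisation of the conditionally independent natural extension) times the local upper probability $\ulocconm[\xsing{s}]{s}$ (via separate coherence in the marginal-extension step). The only cosmetic difference is that the paper phrases this recursion in the message-passing notation of Section~\ref{sec:algorithm} (with $g=\mu+1$, $t=\init$, $E=\nodes$) and invokes the already-established identity~\eqref{eq:algorithm-off-backbone}, whereas you carry out the computation directly.
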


\begin{proposition}\label{prop:global:positivity:too}
  Consider any non-empty subset $E$ of $\nodes$ and any $\xval{E}\in\values{E}$. 
If\/ $\upr(\xsing{E})>0$ then also $\xinuglobcon[\xsing{E\cap\after{c}}]{c}{e}>0$ for all $e\in E$ and all $c\in\children{e}$.\footnote{Observe that this holds trivially also if $E\cap\after{c}=\emptyset$, because then $\values{E\cap\after{c}}=\values{\emptyset}$ is a singleton [see footnote~\ref{fn:empty-product}] whose upper probability should be $1$ by separate coherence.}
\end{proposition}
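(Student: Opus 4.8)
The plan is to prove the stronger inequality
\begin{equation*}
  \upr(\xsing{E})\leq\xinuglobcon[\xsing{E\cap\after{c}}]{c}{e}\quad\text{for every }e\in E\text{ and every }c\in\children{e},
\end{equation*}
from which the proposition is immediate, since an upper probability is non-negative. Fix such an $e$ and $c$, put $\kappa\coloneqq\xinuglobcon[\xsing{E\cap\after{c}}]{c}{e}$, and prove by structural induction from the leaves towards the root that $\uglobconm[\xsing{E\cap\after{s}}]{s}\leq\kappa$ (pointwise in $\var{\mother{s}}$) for every node $s$ with $e\in E\cap\after{s}$. Applying this to $s=\init$ gives the displayed bound, because $\after{\init}=\nodes$ so $E\cap\after{\init}=E$, and $\var{\mother{\init}}$ has a single value, whence $\uglobconm[\xsing{E\cap\after{\init}}]{\init}=\upr(\xsing{E})$.

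The engine of the induction is a pair of recursion identities for the global upper previsions at a non-terminal $s$, obtained by passing to conjugates in~\eqref{eq:backwards-nine}--\eqref{eq:backwards-global}. Since $\after{s}=\{s\}\cup\bigcup_{c'\in\children{s}}\after{c'}$ is a disjoint union, the indicator $\ind{\xsing{E\cap\after{s}}}$ equals $\prod_{c'\in\children{s}}\ind{\xsing{E\cap\after{c'}}}$ if $s\notin E$, and equals $\ind{\xsing{s}}$ times that product if $s\in E$ (factors with $E\cap\after{c'}=\emptyset$ being the harmless constant $1$, by separate coherence). Applying to this product first the model $\linecon{s}=\otimes_{c'\in\children{s}}\lglobcon{c'}{s}$ — which by Proposition~\ref{prop:conditionally:independent-factorising} is, for each fixed $\xval{s}$, the strongly factorising independent natural extension of the marginals $\xinlglobcon{c'}{s}$, $c'\in\children{s}$, so that~\eqref{eq:marginalisation} and~\eqref{eq:events-factorising} apply to $\xinlinecon{s}$ and its conjugate — and then the local model $\uloc{s}$ through marginal extension, one obtains, writing $\Phi_s\coloneqq\prod_{c'\in\children{s}}\uglobcon[\xsing{E\cap\after{c'}}]{c'}{s}$ for a non-negative gamble on $\values{s}$,
\begin{equation*}
  \uglobconm[\xsing{E\cap\after{s}}]{s}=\Phi_s(\xval{s})\,\ulocconm[\xsing{s}]{s}\text{ if }s\in E,\qquad\uglobconm[\xsing{E\cap\after{s}}]{s}=\ulocconm[\Phi_s]{s}\text{ if }s\notin E;
\end{equation*}
in the first identity the non-negative constant $\Phi_s(\xval{s})$ (recall that $\xval{s}$ is fixed here, as $s\in E$) has been pulled out of $\uloc{s}$ by SC\ref{item:separate:coherence:nnhom}, after the factors $\ind{\xsing{E\cap\after{c'}}}$ were split off and re-identified with $\xinuglobcon[\xsing{E\cap\after{c'}}]{c'}{s}$ by means of~\eqref{eq:events-factorising} and~\eqref{eq:marginalisation}.

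The induction is then routine. For a leaf $s$ the claim is vacuous: $\after{s}=\{s\}$ forces $e=s$, yet $\children{s}=\emptyset$. For a non-terminal $s$ with $e\in E\cap\after{s}$ (assuming the claim for the children of $s$): if $e=s$ then $s\in E$, $c$ is one of the children of $s$, and the first identity bounds $\uglobconm[\xsing{E\cap\after{s}}]{s}$ by the single factor $\xinuglobcon[\xsing{E\cap\after{c}}]{c}{s}=\kappa$, since $\ulocconm[\xsing{s}]{s}$ and the remaining factors of $\Phi_s(\xval{s})$ all lie in $[0,1]$; if $e\neq s$ then $e\in E\cap\after{c'}$ for the unique child $c'$ of $s$ with $c'\precedes e$, the induction hypothesis gives $\uglobcon[\xsing{E\cap\after{c'}}]{c'}{s}\leq\kappa$ at every value of $\var{s}$, and substituting this into whichever identity applies — discarding the remaining $[0,1]$-valued factors and, in the case $s\notin E$, bounding $\uloc{s}$ of the resulting gamble by its maximum (the conjugate of SC\ref{item:separate:coherence:apg}) — yields $\uglobconm[\xsing{E\cap\after{s}}]{s}\leq\kappa$. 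The step I expect to be the main obstacle is the careful derivation of the two recursion identities: composing marginal extension with the conditionally independent natural extension taken separately at each value of $\var{s}$, treating $\var{s}$ as fixed so that the indicator $\ind{\xsing{s}}$ collapses to a constant, and chaining~\eqref{eq:marginalisation} and~\eqref{eq:events-factorising} to split the product of indicators of the cylinder sets $\xsing{E\cap\after{c'}}$ — and, only when $s\in E$, invoking separate coherence to extract the constant factor $\Phi_s(\xval{s})$ from under $\uloc{s}$. Once this bookkeeping is settled, only elementary monotonicity and the bounds $0\leq\upr(\cdot)\leq1$ for upper probabilities remain.
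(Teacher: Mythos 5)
Your proposal is correct, and it runs on the same engine as the paper's proof: the two recursion identities you derive are exactly the paper's message recursion $\uamess{s}=\prod_{c'\in\children{s}}\umess{c'}$, $\umess{s}=\uamess{s}(\xval{s})\ulocconm[\xsing{s}]{s}$ for $s\in E$ and $\umess{s}=\ulocconm[\uamess{s}]{s}$ for $s\notin E$, together with the identification $\uglobconm[\xsing{E\cap\after{s}}]{s}=\umess{s}$, obtained as you do from the conjugates of Eqs.~\eqref{eq:backwards-nine}--\eqref{eq:backwards-global}, separate coherence, and the strong factorisation (events version, Eq.~\eqref{eq:events-factorising}, plus marginalisation) of the conditionally independent natural extension. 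Where you genuinely diverge is the endgame. The paper works \emph{downwards}: it reads $\upr(\xsing{E})$ as the root message and propagates strict positivity through the recursion to conclude first $\uamess{e}(\xval{e})>0$ for every $e\in E$ and then $\umess{c}(\xval{e})>0$ for every $c\in\children{e}$, handling all pairs $(e,c)$ in one pass. You instead fix one pair $(e,c)$ and work \emph{upwards} along the chain $\until{e}$, using only that all other factors are upper probabilities in $[0,1]$ and that an upper prevision is dominated by the maximum, to get the quantitative domination $\upr(\xsing{E})\leq\xinuglobcon[\xsing{E\cap\after{c}}]{c}{e}$, of which the proposition is an immediate corollary. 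Your route buys a stronger, explicit inequality and avoids the slightly delicate ``positive somewhere'' bookkeeping at uninstantiated nodes on the chain that the paper's downward argument implicitly relies on; the paper's route buys all the positivity statements simultaneously from a single recursion. Your bookkeeping in the two identities (pulling out $\ind{\xsing{s}}$ at fixed $\xval{s}$ by separate coherence and SC\ref{item:separate:coherence:nnhom}, splitting the product of cylinder events over the partition $\set{\after{c'}}{c'\in\children{s}}$) is sound, and matches what the paper delegates to the proof of Eqs.~\eqref{eq:algorithm-off-backbone} and~\eqref{eq:algorithm-in-target}.
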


Before we formulate the most important result in this section (and arguably, in this paper), we provide some motivation.
Suppose we have some family of global models 
\begin{equation*}
  \treefamily{V}
  \coloneqq\{\dlglob{}\}\cup
  \set{\dlglobcon{S}{s}}{s\in\nonterminals\text{ and non-empty }S\subseteq\children{s}}
\end{equation*}
associated with the tree.
How do we express that such a family is compatible with the assessments encoded in the tree? 
\par
First of all, we require that our global models should extend the local models:
\begin{enumerate}[{\upshape T}1.]
\item\label{item:global:local} For each $s\in\nodes$, $\llocconm{s}$ is the restriction of $\dlglobconm{s}$ to $\gambles{s}$.
\end{enumerate}
The second requirement is that our models should satisfy the rationality requirement of coherence:
\begin{enumerate}[{\upshape T}1.]
\addtocounter{enumi}{1} % to account for previous requirements
\item\label{item:global:coherence} The (conditional) lower previsions in $\treefamily{V}$ are jointly coherent.
\end{enumerate}
The third requirement requires more explanation: the global models should reflect all epistemic irrelevancies encoded in the graphical structure of the tree.  
Naively, we would want condition~\eqref{eq:irrelevance} to be satisfied. 
The problem is that only the right-hand side in Eq.~\eqref{eq:irrelevance}, involving the model $\dlglobcon{S}{s}$ is directly available to us.
To get to the left-hand side involving the model $\dlglobconirr{S}{s}{I}$, one naive approach would be to `condition the joint model $\dlglob{}=\dlglob{T}$ on the variable $\var{\sing{s}\cup I}$'. 
But we have seen in Section~\ref{sec:indnatex} that given a joint model, coherence in general only determines the conditional models uniquely, provided that the \emph{lower probability} of the conditioning event is non-zero. 
This is a fairly strong condition, and in what follows we would generally prefer to work with the much weaker condition that the \emph{upper probability} of the conditioning event is non-zero.\footnote{As the results in \cite{cooman2009b} suggest, it might be possible to go even further, and prove a counterpart to Theorem~\ref{theo:global-models} with no positivity restrictions on the local models. We leave this as an avenue for future research, however.} Since in that case the left-hand side of  Eq.~\eqref{eq:irrelevance} need not be uniquely determined from the joint $\dlglob{}$ by coherence, this approach becomes unfeasible.
\par
Nevertheless, as soon as we realise that all we can reasonably require from our models is that they should be coherent, the right approach readily suggests itself:\footnote{This is also the approach implicit in Definition~\ref{def:independent:product}, as well as the one used in \cite{cooman2009b}. It coincides with the usual, naive approach as soon as all the relevant conditional models are uniquely determined from the joint by coherence.} we should require that if we  use the available models $\dlglobcon{S}{s}$ to \emph{define} the models $\dlglobconirr{S}{s}{I}$ through the epistemic irrelevance condition~\eqref{eq:irrelevance}, then the result should still be coherent:
\begin{enumerate}[{\upshape T}1.]
\addtocounter{enumi}{2} % to account for previous requirements
\item\label{item:global:irrelevance} If we define the conditional lower previsions $\dlglobconirr{S}{s}{I}$, $s\in\nonterminals$, $S\subseteq\children{s}$ and $R\subseteq\nonparnondes{S}$ through the epistemic irrelevance requirements
\begin{equation*}
 \zindlglobconirr[f]{S}{s}{R}
 \coloneqq\zindlglobcon[f(\cdot,\zval{R})]{S}{s}
 \text{ for all gambles $f$ in $\gambles{\after{S}\cup R}$},
\end{equation*}
then all these models together should be (jointly) coherent with all the available models in the family $\treefamily{V}$.
\end{enumerate}
And there is a final requirement, which guarantees that all inferences we make on the basis of our global models are as conservative as possible, and are therefore based on no other considerations than what is encoded in the tree:
\begin{enumerate}[{\upshape T}1.]
\addtocounter{enumi}{3} % to account for previous requirements
\item\label{item:global:smallest} The models in the family $\treefamily{V}$ are dominated (point-wise) by the corresponding models in all other families satisfying requirements~{\upshape T\ref{item:global:local}}--{\upshape T\ref{item:global:irrelevance}}.
\end{enumerate}
\par
It turns out that the family of models $\treefamily{P}$ we have been constructing above satisfy all these requirements.

\begin{theorem}\label{theo:global-models}
  If all local models $\ulocconm{s}$ on $\gambles{s}$, $s\in\nodes$ are strictly positive, then the family of global models $\treefamily{P}$, obtained through Eqs.~\eqref{eq:backwards-nine}--\eqref{eq:backwards-restrict}, constitutes the point-wise smallest family of (conditional) lower previsions that satisfy {\upshape T\ref{item:global:local}}--{\upshape T\ref{item:global:irrelevance}}. 
  It is therefore the unique family to also satisfy {\upshape T\ref{item:global:smallest}}. 
  Finally, consider any non-empty set of nodes $E\subseteq\nodes$ and the corresponding conditional lower prevision derived by applying regular extension:\footnote{If we look at the proof of this result in the Appendix, it is not hard to see that similar statements can be made about the (joint) coherence of the regular extensions $\lrext{E_k}$ for any finite collection $E_k$, $k=1,\dots,n$ of sets of nodes.}
  \begin{equation*}
    \xinlrext[f]{E}
    \coloneqq\max\set{\mu\in\reals}{\lglob{\after{T}}(\xindic{E}[f-\mu])\ge0}
    \text{ for all $f\in\gambles{T}$ and all $\xval{E}\in\values{E}$}.
  \end{equation*}
  Then the conditional lower prevision $\lrext{E}$ is (jointly) coherent with the global models in the family $\treefamily{P}$.
\end{theorem}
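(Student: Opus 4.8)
The plan is to exploit that $\treefamily{P}$ is itself defined by the leaves-to-root recursion~\eqref{eq:backwards-nine}--\eqref{eq:backwards-restrict}, assembled from two operations whose coherence, minimality and factorisation properties are already at our disposal: the \emph{marginal extension} $\llocconm[\linecon{s}]{s}$ of the local model with $\linecon{s}$ \cite{walley1991,miranda2006b}, and the \emph{conditionally independent natural extension} $\linecon{s}=\otimes_{c\in\children{s}}\lglobcon{c}{s}$ (Propositions~\ref{prop:independent-factorising} and~\ref{prop:conditionally:independent-factorising}, together with the marginalisation and associativity properties~\eqref{eq:marginalisation}--\eqref{eq:associativity}). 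Accordingly I would prove all of the assertions by structural induction on the tree, with leaves as the trivial base case. The genuine content is to check that chaining these two operations up the tree (i)~preserves the joint coherence of the whole collection and (ii)~automatically validates the epistemic-irrelevance requirement~T3; everything else --- T1, minimality, hence T4 and uniqueness, and the statement about $\lrext{E}$ --- will then follow with comparatively little extra effort.

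First I would dispatch~T1 and minimality. For~T1, restricting $\lglobconm{s}=\llocconm[\linecon{s}]{s}$ to $\gambles{s}$ returns $\llocconm{s}$, since $\zinlinecon[g]{s}=g(\zval{s})$ for $g\in\gambles{s}$ by separate coherence. For minimality, let $\treefamily{V}$ be any family meeting~T1--T3; I claim $\dlglobconm{s}\ge\lglobconm{s}$ for all $s$, by induction on subtree height. At a node $s$: the inductive hypothesis gives $\dlglobcon{c}{s}\ge\lglobcon{c}{s}$ for all $c\in\children{s}$; since $\treefamily{V}$ satisfies~T3, $\dlinecon{s}$ is coherent with the conditionals stating that, conditional on $\var{s}$, the $\var{\after{c}}$ are epistemically independent, so $\dlinecon{s}$ is a conditionally independent product of its own marginals $\dlglobcon{c}{s}$ and hence dominates their conditionally independent \emph{natural} extension, which (by monotonicity of that operation in its marginals \cite{cooman2009b}) dominates $\otimes_{c}\lglobcon{c}{s}=\linecon{s}$; finally, joint coherence of $\dlglobconm{s}$ with $\llocconm{s}$ and $\dlinecon{s}$ forces $\dlglobconm{s}\ge\llocconm[\dlinecon{s}]{s}\ge\llocconm[\linecon{s}]{s}=\lglobconm{s}$ by the minimality and monotonicity of marginal extension. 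Being thus point-wise smallest among families satisfying~T1--T3, $\treefamily{P}$ also satisfies~T4, and any family satisfying~T1--T4 both dominates $\treefamily{P}$ (as a~T1--T3 family) and is dominated by it (by its own~T4), hence equals it; this yields the uniqueness claim.

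The crux is the joint-coherence requirement~T2 (and, with it,~T3). Here I would strengthen the induction hypothesis to also record that each $\linecon{s}$ is \emph{strongly factorising} --- which needs no new work, being exactly Proposition~\ref{prop:conditionally:independent-factorising} at $s$ plus the remark that each $\zinlinecon{s}$ is a strongly factorising unconditional lower prevision. The inductive step then marries: (a)~the marginal-extension theorem, which gives joint coherence of the pair $\llocconm{s},\linecon{s}$ and of $\lglobconm{s}=\llocconm[\linecon{s}]{s}$; (b)~the inductive joint coherence of the sub-family attached to each subtree; and (c)~a \emph{gluing} step --- since $\linecon{s}$ is the conditionally independent natural extension of the subtree models $\lglobcon{c}{s}$, it is coherent with the whole family of conditionals it induces (Definition~\ref{def:conditionally:independent:product}), and this is what lets the subtree families, the CI-type irrelevance conditionals at $s$, and the local model be fused into a single jointly coherent family. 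Requirement~T3 is then subsumed: strong factorisation of $\linecon{s}$ together with Eq.~\eqref{eq:regular-extension-factorising} identifies each irrelevance-derived model $\dlglobconirr{S}{s}{R}$ with a regular extension of the joint conditioned on $\var{\sing{s}\cup R}$ --- using that, conditional on $\var{s}$, $\var{\after{S}}$ is epistemically independent both of the sibling subtrees (strong factorisation of $\linecon{s}$) and of everything outside $\after{s}$ (condition~CI and the marginal-extension structure) --- so~T3 follows from the coherence of such regular extensions with $\treefamily{P}$, which is exactly the content of the theorem's last claim and is proved together with it. The gluing step, which I would carry out by the standard reduction of joint coherence to inequalities on suitable linear combinations of the conditional assessments, peeled off one block at a time using strong factorisation, is what I expect to be the main obstacle; the strict-positivity hypothesis on the $\ulocconm{s}$ enters exactly here, through Propositions~\ref{prop:global:positivity} and~\ref{prop:global:positivity:too}, to keep every conditioning event that is met of strictly positive upper probability, so that regular extension is the (essentially) unique coherent conditioning rule and the bookkeeping with zero-probability events can be sidestepped.

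It remains to handle the last claim (and, with it,~T3): that every conditional lower prevision obtained by applying regular extension to the strongly factorising joint $\lpr=\lglob{\nodes}$ on an event $\xsing{E}$ is jointly coherent with $\treefamily{P}$. That $\lrext{E}$ is coherent with $\lpr$ alone is a general property of regular extension \cite[Appendix~J]{walley1991}, \cite[Section~4]{miranda2009a}. To upgrade this to coherence with all of $\treefamily{P}$ I would again induct on the tree, using strong factorisation of $\lpr$ to localise the conditioning: writing $E\cap\after{s}=(E\cap\sing{s})\cup\bigcup_{c\in\children{s}}(E\cap\after{c})$, Eqs.~\eqref{eq:events-factorising} and~\eqref{eq:regular-extension-factorising} let conditioning on $\var{E}$ be expressed as a composition of conditioning on the $\var{E\cap\after{c}}$ inside the child subtrees and on $\var{E\cap\sing{s}}$ against the local model, while Proposition~\ref{prop:global:positivity:too} keeps every upper probability encountered strictly positive along the way. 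The one delicate point, again, will be the interaction of $\lrext{E}$ with the conditional models $\lglobcon{S}{s}$ when $E$ meets several subtrees of a single node at once --- and strong factorisation is exactly what decouples this, so that coherence propagates all the way up to the root.
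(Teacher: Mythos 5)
Your handling of T\ref{item:global:local} and of the minimality part (T\ref{item:global:smallest} and uniqueness) follows the paper's own route essentially step for step: dominance of any competing family is obtained from T\ref{item:global:irrelevance} via the notion of a conditionally independent product, monotonicity of the conditionally independent natural extension, and Walley's Marginal Extension Theorem. The gap lies in the core claims, namely the joint coherence statements T\ref{item:global:coherence}, T\ref{item:global:irrelevance} and the final assertion about $\lrext{E}$.

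Two concrete problems. First, the global joint $\lpr$ is \emph{not} strongly factorising --- only each local conditionally independent natural extension $\linecon{s}$ is, for every fixed value of $\var{s}$ --- so Eq.~\eqref{eq:regular-extension-factorising} cannot be applied to $\lpr$ in the way you do when you ``identify each irrelevance-derived model with a regular extension of the joint'' and when you propose to ``localise the conditioning'' for $\lrext{E}$. The identification is indeed true (it is Eq.~\eqref{eq:coherence-rext-new-two} in the paper), but establishing it is real work: one has to push the conditioning event $\xsing{\sing{s}\cup R}$ through the nested marginal-extension layers from the root down to $s$, which the paper does by an explicit recursive computation producing a super-additive, non-negatively homogeneous functional $\underline{G}$ that does not depend on $g$ or $\mu$, and then uses the strict positivity of the local models (Proposition~\ref{prop:global:positivity}) to read off the sign of $\lpr(\xindic{\sing{s}\cup R}[f-\mu])$ and conclude that the regular extension of $\lpr$ given $\var{\sing{s}\cup R}$ coincides with the irrelevance-defined conditional. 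Second, and more seriously, the joint (Walley) coherence of the whole family is precisely what your deferred ``gluing'' step would have to deliver, and no argument is given for it: the marginal extension theorem and Definition~\ref{def:conditionally:independent:product} give coherence of the models assembled \emph{at one node}, but fusing the subtree families, the irrelevance-derived conditionals and the updating rules into a single jointly coherent collection by directly verifying Walley's criterion, block by block, is the hard part of the theorem and is not obviously feasible. The paper avoids this entirely: once every model in $\treefamily{P}$, every irrelevance-derived model $\lglobconirr{S}{s}{R}$, and every updating rule $\lrext{E}$ has been recognised as a \emph{regular extension of the single coherent joint $\lpr$} with respect to a conditioning event of positive upper probability (this is where Propositions~\ref{prop:global:positivity} and~\ref{prop:global:positivity:too} enter), Miranda's result (Theorem~\ref{theo:blessyouquique}) yields their joint coherence in one stroke. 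Without that theorem, or an equivalent coherence-of-regular-extensions result, your induction does not close.
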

\noindent
The last statement of this theorem guarantees that if we use regular extension to \emph{update the tree} given evidence $\var{E}=\xval{E}$, i.e., derive conditional models $\xinlrext{E}$ from the joint model $\lglob{}=\lglob{\after{T}}$, such inferences will always be coherent.
This is of particular relevance for the discussion in Section~\ref{sec:algorithm}, where we derive an efficient algorithm for updating the tree using regular extension.
It implies in particular that our algorithm produces coherent inferences.

\section{Some separation properties}
\label{sec:separation}
Without going into too much detail, we would like to point out some of the more striking differences between the separation properties in imprecise Markov trees under epistemic irrelevance, and the more usual ones that are valid for Bayesian nets \cite{pearl1988}, which, by the way, are also inherited from Bayesian nets by credal nets under strong independence \cite{cozman2000}.
\par
It is clear from the interpretation of the graphical model described in Section~\ref{sec:graphical:interpretation} that we have the following simple separation results:
\begin{center}
  \begin{tikzpicture}\small
    \tikzstyle{edge from parent}+=[->,semithick]
    \node (xi1)  {$\var{i_1}$}
    [grow=right,sibling distance=25]
    child {node (xi2) {$\var{i_2}$}
      child {node (xt) {$\var{t}$}}};
  \end{tikzpicture}
  \qquad
  \begin{tikzpicture}\small
    \tikzstyle{edge from parent}+=[->,semithick]
    \node (xi2)  {$\var{i_2}$}
    child[grow=left,sibling distance=25] {node (xi1) {$\var{i_1}$}}
    child[grow=right,sibling distance=25] {node (xt) {$\var{t}$}};
  \end{tikzpicture}
\end{center}
where in both cases, \emph{$\var{i_2}$ separates $\var{t}$ from $\var{i_1}$}: when the value of $\var{i_2}$ is known, additional information about the value of $\var{i_1}$ does not affect beliefs about the value of $\var{t}$. 
In this figure, between $i_1$ and $i_2$, and between $i_2$ and $t$, there may be other nodes, but the arrows along the path segment through these nodes should all point in the indicated directions. 
The underlying idea is that $t$ is a (descendant of some) child $c$ of $i_2$, and conditional on the mother $i_2$ of $c$, the non-parent non-descendant $i_1$ of $c$ is epistemically irrelevant to $c$ and all of its descendants.
\par
On the other hand, and in contradistinction with what we are used to in Bayesian nets, we will not generally have separation in the following configuration:
\begin{center}
  \begin{tikzpicture}\small
    \tikzstyle{edge from parent}+=[<-,semithick]
    \node (xi1)  {$\var{i_1}$}
    [grow=right,sibling distance=25]
    child {node (xi2) {$\var{i_2}$}
      child {node (xt) {$\var{t}$}}};
  \end{tikzpicture}
\end{center}
where \emph{$\var{i_2}$ does not necessarily separate $\var{t}$ from $\var{i_1}$}. 
We will come across a simple counterexample in Section~\ref{sec:example}. 
Where does this difference with the case of Bayesian nets originate? 
It is clear from the reasoning above that $\var{i_2}$ separates $\var{i_1}$ from $\var{t}$: conditional on $\var{i_2}$, $\var{t}$ is epistemically irrelevant to $\var{i_1}$. 
For precise probability models,  irrelevance generally implies symmetrical independence, and therefore  this will generally imply that conditional on $\var{i_2}$, $\var{i_1}$ is epistemically irrelevant to $\var{t}$  as well. 
But for imprecise probability models no such symmetry is guaranteed \cite{couso1999b}, and we therefore cannot infer that, generally speaking, $\var{i_2}$ will separate $\var{i_1}$ from $\var{t}$. 
\emph{As a general rule, we can only infer separation if the arrows point from the `separating' variable $\var{i_2}$ towards the `target' variable $\var{t}$.} 

\section{A fast algorithm for updating in an imprecise Markov tree}
\label{sec:algorithm}
We now consider the case where we are interested in making inferences about the value of the variable $\var{t}$ in some \emph{target node} $t$, when we know the values $\xval{E}$ of the variables $\var{E}$ in a set $E\subseteq\nodes\setminus\{t\}$ of \emph{evidence nodes}; see for instance Fig~\ref{fig:mtree}.

\subsection{The formulation of the problem.}
If we assume that the values of the remaining variables are \emph{missing at random}, then we can do this by conditioning the joint $\lpr$ obtained above on the available evidence `$\var{E}=\xval{E}$'; see for instance \cite{cooman2004b,zaffalon2009}. 
\par
We will address this problem by updating the lower prevision $\lpr$ to the lower prevision $\lupcon{t}{E}$ on $\gambles{t}$ using \emph{regular extension}  \cite[Appendix~J]{walley1991}:
\begin{equation}\label{eq:regular}
  \lupcon[g]{t}{E}=\max\set{\mu\in\reals}{\lpr(\xindic{E}[g-\mu])\geq0}
\end{equation}
for all gambles $g$ on $\values{t}$, \emph{assuming that $\upr(\xsing{E})>0$}. 
Theorem~\ref{theo:global-models} guarantees that such inferences are coherent.
Sufficient conditions on the local models for this positivity assumption to hold are given in Proposition~\ref{prop:global:positivity}.
\par
Consider the map
\begin{equation*}
  \rho_g\colon\reals\to\reals\colon\mu\mapsto\lpr(\xindic{E}[g-\mu]).
\end{equation*}
We can infer from the separate coherence of $\lpr$ that $\abs{\rho_g(\mu_1)-\rho_g(\mu_2)}\leq\abs{\mu_1-\mu_2}\upr(\{\xval{E}\})$ for all $\mu_1,\mu_2\in\reals$, which implies that $\rho_g$ is (Lipschitz) continuous. 
Separate coherence of $\lpr$ also guarantees that $\rho_g$ is concave and non-increasing. 
Hence $\set{\mu\in\reals}{\rho_g(\mu)\geq0}=(-\infty,\lupcon[g]{t}{E}]$, which shows that the supremum that we should have {\itshape a priori} used in~\eqref{eq:regular} is indeed a maximum. 
$\lupcon[g]{t}{E}$ is the right-most zero of $\rho_g$, and it is, again by separate coherence of $\lpr$, guaranteed to lie between the smallest value $\min g$ and the largest value $\max g$ of $g$. 
If moreover $\lpr(\xsing{E})>0$, then separate coherence of $\lpr$ implies that $\lupcon[g]{t}{E}$ is the unique zero of $\rho_g$. 
If on the other hand $\lpr(\xsing{E})=0$, then $(-\infty,\lupcon[g]{t}{E}]$ is the set of all zeros of $\rho_g$. 
It appears that any algorithm for calculating $\lupcon[g]{t}{E}$ will benefit from being able to calculate the values of $\rho_g$, or even more simply check their signs, efficiently.

\subsection{Calculating the values of $\rho_g$ recursively.}\label{sec:calculating-rho}
We now recall from Section~\ref{sec:joint} that the joint $\lpr$ can be constructed recursively from leaves to root. 
The idea we now use is that calculating $\rho_g(\mu)=\lpr(\xindic{E}[g-\mu])$ becomes easier if we graft the structure of the tree onto the argument $g^\mu\coloneqq\xindic{E}[g-\mu]$ as follows. 
Define
\begin{equation*}
  g^\mu_s
  \coloneqq
  \begin{cases}
    \xindic{s}&\text{if $s\in E$}\\
    g-\mu&\text{if $s=t$}\\
    1&\text{if $s\in\nodes\setminus(E\cup\{t\})$},
  \end{cases}
\end{equation*}
then $g^\mu_s\in\gambles{s}$ and $g^\mu=\prod_{s\in\nodes}g^\mu_s$. 
Also define, for any $s\in\nodes$, the gamble $\phi^\mu_s$ on $\values{\after{s}}$ by $\phi^\mu_s\coloneqq\prod_{u\in\after{s}}g^\mu_u$. 
Then 
\begin{equation*}
  \phi^\mu_\init=g^\mu
  \text{ and }
  \phi^\mu_s\geq0\text{ if $s\nprecedes t$},
\end{equation*}
and
\begin{equation}\label{eq:function-branching}
  \phi^\mu_s=g^\mu_s\prod_{c\in\children{s}}\phi^\mu_c
  \text{ for all $s\in\nodes$},
\end{equation}
where we use the convention that any product over an empty set of indices equals one.
Eq.~\eqref{eq:function-branching} is the argument counterpart of Eq.~\eqref{eq:backwards-global}. 
Also, if $s\nprecedes t$ then $g^\mu_s$ and $\phi^\mu_s$ do  not depend on $\mu$, nor on $g$.
Indeed, in that case
\begin{equation}\label{eq:function-branching-too}
  \phi^\mu_s=\xindic{E\cap\after{s}}.
\end{equation}

\subsubsection*{First, let us consider the nodes $s\nprecedes t$.} 
We define the \emph{messages} $\lmess{s}$ and $\umess{s}$ recursively by
\begin{equation}\label{eq:lower-upper-messages}
  \lmess{s}
  \coloneqq\bigllocconm[g^\mu_s
  \smashoperator{\prod_{c\in\children{s}}}\lmess{c}]{s}\\
  \text{ and }
  \umess{s}
  \coloneqq\bigulocconm[g^\mu_s
  \smashoperator{\prod_{c\in\children{s}}}\umess{c}]{s}.
\end{equation}
We summarise such a pair by the notation: $\lumess{s}\coloneqq\lulocconm[g^\mu_s\prod_{c\in\children{s}}\lumess{c}]{s}\coloneqq(\lmess{s},\umess{s})$.
Then there are two possibilities:
\begin{equation*}
  \lumess{s}=
  \begin{cases}
    \lulocconm[\xsing{s}]{s}\smashoperator{\prod_{c\in\children{s}}}\lumess{c}(\xval{s})
    &\text{ if $s\in E$}\\
    \biglulocconm[{\smashoperator[r]{\prod_{c\in\children{s}}}\lumess{c}}]{s}
    &\text{ if $s\notin E$}.
  \end{cases}
\end{equation*}
The messages $\lmess{s}$ and $\umess{s}$ are gambles on $\values{\mother{s}}$, and can therefore be seen as tuples of real numbers, with as many components $\lumess{s}(\xval{\mother{s}})$ as there are elements $\xval{\mother{s}}$ in $\values{\mother{s}}$. 
They are all non-negative.
As their notation suggests, they do not depend on the choice of $g$ or $\mu$, but only (at most) on which nodes are \emph{instantiated}, i.e., belong to $E$, and on which value $\xval{E}$ the variable $\var{E}$ for these instantiated nodes assumes.
\par 
It then follows from Eqs.~\eqref{eq:backwards-global} and~\eqref{eq:function-branching} and the strong factorisation property\footnote{\label{fn:doesntmatter}This, together with the course of reasoning leading to Eq.~\eqref{eq:algorithm-in-target}, shows that the results of updating the tree (and the algorithm we are deriving) in this way will be exactly the same \emph{for any way} of forming a product of the local models for the children of $s$, \emph{provided only that this product is strongly factorising}. For instance, replacing the conditionally independent natural extension with the strong product in Eq.~\eqref{eq:backwards-nine} will lead to exactly the same inferences. Of course, this should not be taken to mean that our algorithm also works for updating credal trees under strong independence.} that [see the Appendix for a proof]
\begin{equation}\label{eq:algorithm-off-backbone}
  \lglobconm[\phi^\mu_s]{s}=\lmess{s}
  \text{ and }
  \uglobconm[\phi^\mu_s]{s}=\umess{s}. 
\end{equation}

\subsubsection*{Next, we turn to nodes $s\precedes t$.} 
Define the messages $\mess{s}{\mu}$ by
\begin{equation}\label{eq:messages-on-backbone-1}
  \mess{s}{\mu}\coloneqq\llocconm[\psi^\mu_s]{s},
\end{equation}
where the gambles $\psi^\mu_s$ on $\values{s}$ are given by the recursion relations:
\begin{equation}\label{eq:messages-on-backbone-2}
  \psi^\mu_t
  \coloneqq\max\{g-\mu,0\}\smashoperator{\prod_{c\in\children{t}}}\lmess{c}
  +\min\{g-\mu,0\}\smashoperator{\prod_{c\in\children{t}}}\umess{c},
\end{equation}
and for each $\init\neq s\precedes t$, so $\mother{s}$ exists,
\begin{equation}\label{eq:messages-on-backbone-3}
  \psi^\mu_{\mother{s}}
  \coloneqq
  \bigg[
    \max\{\mess{s}{\mu},0\}\smashoperator{\prod_{c\in\siblings{s}}}\lmess{c}
    +\min\{\mess{s}{\mu},0\}\smashoperator{\prod_{c\in\siblings{s}}}\umess{c}
  \bigg]
  g^\mu_{\mother{s}}.
\end{equation}
The messages $\mess{s}{\mu}$ are again tuples of real numbers, with one component $\mess{s}{\mu}(\xval{\mother{s}})$ for each of the possible values $\xval{\mother{s}}$ of $\var{\mother{s}}$.\footnote{If $s$ is the root node, then $\mother{s}=\emptyset$ and $\mess{s}{\mu}$ is a single real number, which by Eq.~\eqref{eq:algorithm-in-target} is equal to $\rho_g(\mu)$. See also footnote~\ref{fn:empty-product}.} 
They do depend on the choice of $g$ or $\mu$, as well as on which nodes are instantiated and on which value $\xval{E}$ the variable $\var{E}$ for these instantiated nodes assumes. 
\par
It then follows from Eqs.~\eqref{eq:backwards-global} and~\eqref{eq:function-branching} and the strong factorisation property of the local independent products that [see the Appendix for a proof] 
\begin{equation}\label{eq:algorithm-in-target}
  \lglobconm[\phi^\mu_s]{s}=\mess{s}{\mu}
  \text{ and of course }\rho_g(\mu)=\mess{\init}{\mu}.
\end{equation}
We conclude that we can find the value of $\rho_g(\mu)$ by a backwards recursion method consisting in passing messages up to the root of the tree, and in transforming them in each node using the local uncertainty models; see Eqs.~\eqref{eq:lower-upper-messages} and~\eqref{eq:messages-on-backbone-1}--\eqref{eq:messages-on-backbone-3}.
\par
There is a further simplification, because we are not necessarily interested in the actual value of $\rho_g(\mu)$, but rather in its sign. 
It arises whenever there are instantiated nodes above the target node: $E\cap\ancestors{t}\neq\emptyset$. 
Let in that case $e_t$ be the greatest element of the chain $E\cap\ancestors{t}$, i.e., the instantiated node closest to and preceding the target node $t$, and let $s_t$ be its successor in the chain $\until{t}$; see for instance Fig.~\ref{fig:mtree}. 
If we let
\begin{equation*}
   \lambda_g(\mu)
   \coloneqq\max\{\mess{s_t}{\mu}(\xval{e_t}),0\}
  \smashoperator{\prod_{c\in\siblings{s_t}}}\lmess{c}(\xval{e_t})
  +\min\{\mess{s_t}{\mu}(\xval{e_t}),0\}
  \smashoperator{\prod_{c\in\siblings{s_t}}}\umess{c}(\xval{e_t}),
\end{equation*}
then it follows from Eq.~\eqref{eq:messages-on-backbone-3} [with
$s=s_t$ and $\mother{s}=e_t$] that $\psi^\mu_{e_t}=\xindic{e_t}\lambda_g(\mu)$. 
If we now continue to use Eqs.~\eqref{eq:messages-on-backbone-2} and~\eqref{eq:messages-on-backbone-3} until we reach the root of the tree, we eventually find that\footnote{Actually, we easily derive that $\rho_g(\mu)
=a\max\{\lambda_g(\mu),0\}+b\min\{\lambda_g(\mu),0\}$, where $a$ and $b$ are real constants that do not depend on $g$ and $\mu$. Letting $g\coloneqq\mu\pm1$ then allows us to identify the  constants $a$ and $b$.} 
\begin{equation}\label{eq:simplification}
  \rho_g(\mu)
  =
  \begin{cases}
    \lpr(\xsing{E})\lambda_g(\mu)
    &\text{ if $\lambda_g(\mu)\geq0$}\\
    \upr(\xsing{E})\lambda_g(\mu)
    &\text{ if $\lambda_g(\mu)\leq0$}.
  \end{cases}
\end{equation}
Since we assumed from the outset that $\upr(\xindic{E})>0$, we gather from Eq.~\eqref{eq:regular} that $\lupcon[g]{t}{E}=\max\set{\mu\in\reals}{\lambda_g(\mu)\geq0}$. 
Moreover, by combining Eqs.~\eqref{eq:function-branching-too} and~\eqref{eq:algorithm-off-backbone} with Proposition~\ref{prop:global:positivity:too}, we find that $\umess{c}(\xval{e_t})=\xinuglobcon[\xsing{E\cap\after{c}}]{c}{e_t}>0$ for all $c\in\siblings{s_t}$, and therefore $\lambda_g(\mu)\geq0\ifonlyif\mess{s_t}{\mu}(\xval{e_t})\geq0$. 
Hence $\lupcon[g]{t}{E}=\max\set{\mu\in\reals}{\mess{s_t}{\mu}(\xval{e_t})\geq0}$.
\par
We conclude that in order to update the tree in the situation described above, we can perform all calculations on the sub-tree $\after{s_t}$, where the new root $s_t$ has local model $\xinlloccon{s_t}{e_t}$. 
This is also borne out by the discussion of the separation properties in Section~\ref{sec:separation}.

\subsection{An algorithm.}
\begin{figure}[hbt]
  \centering
  \begin{tikzpicture}[scale=0.8]\footnotesize
    %% show legend
    \matrix [draw=black!50,anchor=west,inner sep=10] at (7,-2) 
    { 
      \node[observed,inner sep=2pt] {$\phantom{x}$}; & \node {: observed node};\\
      \node[target,inner sep=2pt] {$\phantom{x}$};& \node {: queried or target node $t$};\\ 
      \node[nd,inner sep=2pt] {$\phantom{x}$}; & \node {: unobserved node}; \\ 
    };
    %% draw the tree
    % specify the nodes
      \node[nd] (root) at (0,0) { $\init$};
      \node[terminal] (x1)  at ($(root)+(-100:\afstand)$) {$\var{1}$};
      \node[observed,pin=30:$e_t$] (x2)  at ($(root)+( -15:\afstand)$) {$\var{2}$};
      \node[nd,pin=30:$s_t$] (x3)  at ($  (x2)+(-40:\afstand)$) {$\var{3}$};
      \node[nd] (x4)  at ($  (x3)+(-110:\afstand)$) {$\var{4}$};
      
      \node[nd] (x5)  at ($  (x4)+(-140:\afstand)$) {$\var{5}$};
      \node[observed] (x15)  at ($  (x4)+(170:\afstand)$) {$\var{15}$};
      \node[observed] (x6)  at ($  (x5)+(-140:\afstand)$) {$\var{6}$};
      \node[observed] (x7)  at ($  (x5)+(-80:\afstand)$) {$\var{7}$};
      \node[observed] (x8)  at ($  (x7)+(-45:\afstand)$) {$\var{8}$};
      \node[observed] (x9)  at ($  (x7)+(-100:\afstand)$) {$\var{9}$};
      \node[terminal] (x16)  at ($  (x6)+(-150:\afstand)$) {$\var{16}$};

     \node[target,pin=30:$t$] (x10) at ($  (x4)+(-40:\afstand)$) {\textcolor{white}{$\var{10}$}};
      \node[observed] (x11) at ($ (x10)+(-100:\afstand)$) {$\var{11}$};
      \node[nd] (x12) at ($ (x10)+(-35:\afstand)$) {$\var{12}$};
      \node[observed] (x14) at ($ (x12)+(-35:\afstand)$) {$\var{14}$};
      \node[terminal] (x13) at ($ (x12)+(-100:\afstand)$) {$\var{13}$};
  % draw the arrows
      \draw[ed] (root) -- (x1);
      \draw[ed] (root) -- (x2);
      \draw[ed] (x2) -- (x3);
      \draw[backbone] (x3) -- (x4);
      \draw[ed] (x4) -- (x5);
      \draw[ed] (x5) -- (x6);
      \draw[ed] (x5) -- (x7);
      \draw[ed] (x7) -- (x8);
      \draw[ed] (x7) -- (x9);
      \draw[ed] (x6) -- (x16);
      \draw[ed] (x4) -- (x15);
      \draw[backbone] (x4) -- (x10);
      \draw[ed] (x10) -- (x11);
      \draw[ed] (x10) -- (x12);
      \draw[ed] (x12) -- (x13);
      \draw[ed] (x12) -- (x14);
   % print messages
\end{tikzpicture}
\caption{Example imprecise Markov tree. The target node is $t=10$, $e_t=2$ is the `greatest' observed ancestor of $t$ and $s_t=3$ is the child of $e_t$ that precedes $t$. The bolder arrows represent the trunk $\trunk=\{3,4,10\}$ of the tree.}
  \label{fig:mtree}
\end{figure}
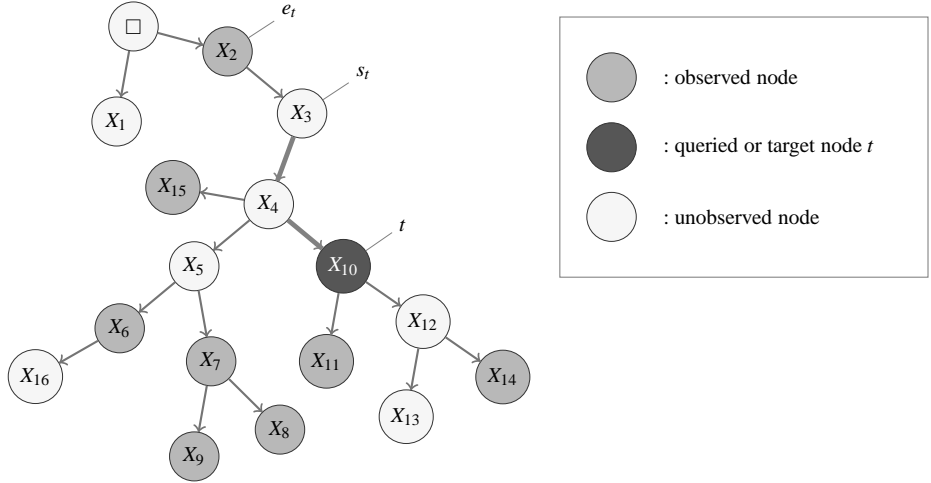
We now convert these observations into a workable algorithm. 
\par
Using regular extension and message passing, we are able to compute $\lupcon[g]{t}{E}$: we
\begin{inparaenum}[(i)]
\item choose any $\mu\in[\min{g},\max{g}]$;
\item calculate the value of $\lambda_g(\mu)$ by sending messages from the terminal nodes towards the root; and
\item repeat this in some clever way to find the maximal $\mu$ that will make this $\lambda_g(\mu)$ zero.
\end{inparaenum}
But we have seen above that this naive approach can be sped up by exploiting
\begin{inparaenum}[(a)]
\item the separation properties of the tree, and
\item the independence of $\mu$ (and $g$) for some of the messages, namely those associated with nodes that do not precede the target node $t$.
\end{inparaenum}
\par
For a start, as we are only interested in the sign of $ \rho_g(\mu)$ [or equivalently, that of $\lambda_g(\mu)$], which we have seen is determined by the sign of $\mess{s_t}{\mu}(\xval{e_t})$, we only have to take into consideration nodes that strictly follow $e_t$.
\par
The next thing a smarter implementation of the algorithm can do, is determine the \emph{trunk} $\trunk$ of the tree:  those nodes that precede the queried node $t$ and strictly follow the greatest observed node $e_t$ preceding $t$. We can define the trunk more formally as follows: $\trunk\coloneqq\until{t}\cap\after{\children{e_t}}$. For the tree in Fig.~\ref{fig:mtree} for instance, where the darker $\var{10}$ is the queried variable and the lighter nodes $\{2,6,7,8,9,11,14,15\}$ are instantiated, the trunk is given by $\trunk=\{3,4,10\}$, and indicated by bolder arrows.

\begin{figure}[ht]
  \centering
  \begin{tikzpicture}\footnotesize
    %% calculation of one of the Pi's
    % specify the nodes
      \node[nd] (x4s)  at (2,-10) {$\var{4}$};
      \node[rounded corners, fill=blue!10,draw=blue] (Pi) at ($(x4s)+(30:\afstand)$) 
      {$\luMess{4}=\lumess{5}\lumess{14}$};
      \node[nd] (x5s)  at ($  (x4s)+(-140:\afstand)$) {$\var{5}$};
      \node[observed] (x14s)  at ($  (x4s)+(170:\afstand)$) {$\var{14}$};
      \node[observed] (x6s)  at ($  (x5s)+(-140:\afstand)$) {$\var{6}$};
      \node[observed] (x7s)  at ($  (x5s)+(-80:\afstand)$) {$\var{7}$};
      \node[observed] (x8s)  at ($  (x7s)+(-45:\afstand)$) {$\var{8}$};
      \node[observed] (x9s)  at ($  (x7s)+(-100:\afstand)$) {$\var{9}$};
      \node[terminal] (x16s)  at ($  (x6s)+(-150:\afstand)$) {$\var{16}$};
   % draw the arrows
      \draw[ed] (x4s) -- (x5s);
      \draw[ed] (x4s) -- (x14s);
      \draw[ed] (x5s) -- (x6s);
      \draw[ed] (x5s) -- (x7s);
      \draw[ed] (x7s) -- (x8s);
      \draw[ed] (x7s) -- (x9s);
      \draw[ed] (x6s) -- (x16s);

   % print messages
      \printMessage{x5s}{x4s}{left};
      \node[anchor=north west]  at ($ (m)!0.1!-90:(x5s) $) 
      {$\lumess{5}=\luloccon[\lumess{6}\lumess{7}]{5}{4}$};
      \printMessage{x6s}{x5s}{right};
      \node[anchor=south east]  at ($ (m)!0.1!90:(x6s) $) 
      {$\luloccon[\sing{\xval{6}}]{6}{5}=\lumess{6}$};
      \printMessage{x7s}{x5s}{left};
      \node[anchor=west]  at ($ (m)!0.0!90:(x7s) $) 
      {$\lumess{7}=\luloccon[\sing{\xval{7}}]{7}{5}\lumess{8}\lumess{9}$};
      \printMessage{x8s}{x7s}{left};
      \node[anchor=west]  at ($ (m)!0.1!90:(x8s) $) 
      {$\lumess{8}=\luloccon[\sing{\xval{8}}]{8}{7}$};
      \printMessage{x9s}{x7s}{right};
      \node[anchor=east]  at ($ (m)!0.0!90:(x9s) $) 
      {$\lumess{9}=\luloccon[\sing{\xval{9}}]{9}{7}$};
      \printMessage{x14s}{x4s}{right};
      \node[anchor=south]  at ($ (m)!0.0!90:(x14s) $) 
      {$\lumess{14}$};
      \printMessage{x16s}{x6s}{right};
      \node[anchor=south east]  at ($ (m)!0.1!90:(x16s) $) 
      {$1=\lumess{16}$};
            
   % draw arrow to Pi
      \draw[message] (x4s) -- (Pi.south west);
  \end{tikzpicture}
  \caption{Calculation of $\luMess{4}$, which is a summary of the $\mu$-independent messages in the trunk node $4$.}
  \label{fig:algorithm}
\end{figure}
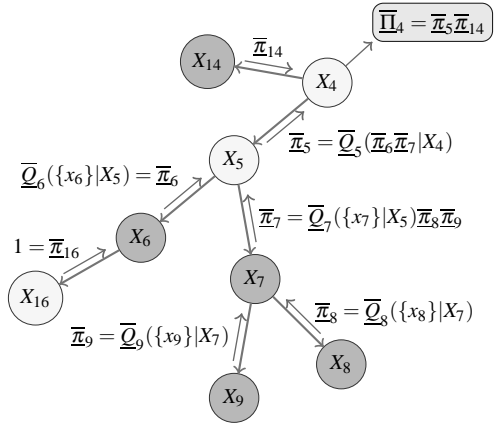
\par 
We have a special interest in the nodes that constitute the trunk, because only they will send messages to their mother nodes that actually depend on $\mu$. 
As a consequence, all other nodes (all descendants of the trunk that are not in the trunk themselves) send messages that have to be calculated only once. 
This implies that we can summarise all the $\mu$-independent messages by propagating all of them until they reach the trunk. 
The $\mu$-independent messages $\lumess{s}$ that arrive in a trunk node $s$ can be represented more succinctly by their point-wise products $\luMess{s}\coloneqq\prod_{c\in{\children{s}}\setminus\trunk}\lumess{c}$,  because Eqs.~\eqref{eq:messages-on-backbone-2} and~\eqref{eq:messages-on-backbone-3} only depend on them through on these products.
\par
This means that for every trunk node $s\in\trunk$, we have to find the lower (upper) messages of every child $c$ of $s$ that is not in the trunk itself. 
Both $\lmess{c}$ and $\umess{c}$ can be calculated recursively using Eq.~\eqref{eq:algorithm-off-backbone}, where the recursion starts at the leaves and moves up to (but stops right before) the trunk. 
In the leaves, the local lower and upper previsions of the indicator of the evidence are sent upwards if the leaf is instantiated; if not the constant 1 is sent up, which is equivalent to deleting the node from the tree. 
We could envisage removing \emph{barren nodes} (all of whose descendants are uninstantiated, such as $\var{1}$, $\var{13}$, $\var{16}$ in the example tree above) from the tree beforehand, but we believe the computational overhead created by the search for them will void the gain. 
\par
\begin{figure}[hbt]
  \centering
  \begin{tikzpicture}\footnotesize
    % mu-dependent trunk
       % specify the nodes
     \node[observed] (x2b)  at (4,2) {$\xval{e_t}$};
      \node[nd] (x3b)  at ($  (x2b)+(-40:\afstand)$) {$\var{s_t}$};
      \node[nd] (x4b)  at ($  (x3b)+(-90:\afstand)$) {$\var{4}$};
      \node[target] (x10b) at ($  (x4b)+(-90:\afstand)$) {\textcolor{white}{$\var{t}$}};
  %%%%% show backbone messages
   % draw the arrows
      \draw[backbone] (x3b) -- (x4b);
      \draw[backbone] (x4b) -- (x10b);
   % print messages
      \printMessage{x3b}{x2b}{left};
      \node[anchor=west]  at ($ (m)!0.1!-90:(x3b) $) 
      {$\mess{s_t}{\mu}(\xval{e_t})
        =\xinlloccon[\underbrace{\max\{\mess{s_t}{\mu},0\}\lMess{s_t}
          +\min\{\mess{s_t}{\mu},0\}\uMess{s_t}}_{\psi^\mu_{s_t}}]{s_t}{e_t}$};
      \printMessage{x4b}{x3b}{left};
      \node[anchor=north west,yshift=7]  at ($ (m)!0.1!90:(x4b) $) 
      {$\mess{4}{\mu}=\lloccon[\underbrace{\max\{\mess{t}{\mu},0\}\lMess{4} 
          +\min\{\mess{t}{\mu},0\}\uMess{4}}_{\psi^\mu_{4}}]{4}{s_t}$};
      \printMessage{x10b}{x4b}{left};
      \node[anchor=north west,yshift=7]  at ($ (m)!0.1!90:(x10b) $) 
      {$\mess{t}{\mu}=\lloccon[\underbrace{\max\{g-\mu,0\}\lMess{t} 
          +\min\{g-\mu,0\}\uMess{t}}_{\psi^\mu_t}]{t}{4}$};
      \end{tikzpicture}
  \caption{Calculation of $\mess{s_t}{\mu}(x_{e_t})$, whose sign is the same as that of $\lpr(\xindic{E}[g-\mu])$.}
  \label{fig:joint}
\end{figure}
\par
The only recursion that is still left to do, is the calculation of the $\mu$-dependent messages $\mess {s}{\mu}$ along the trunk.
As demonstrated in Fig.~\ref{fig:joint}, we can calculate $\mess{s_t}{\mu}(e_t)$ using the following recursion formula: 
\begin{equation*}
  \mess{s}{\mu} \coloneqq
  \begin{cases}
    \llocconm[\max\{g-\mu,0\}\lMess{s}+\min\{g-\mu,0\}\uMess{s}]{s}
    &s=t,\\
    \llocconm[\max\{\mess{c_t}{\mu},0\}\lMess{s}+\min\{\mess{c_t}{\mu},0\}\uMess{s}]{s}
    &s\in\trunk\setminus\{t\}\text{ and }\children{s}\cap\trunk=\sing{c_t}.
  \end{cases}
\end{equation*} 
These formulas are reformulations of Eqs.~\eqref{eq:messages-on-backbone-1}--\eqref{eq:messages-on-backbone-3}, where the influence of the $\luMess{}$ has been made explicit.
\par
Since we now know how to calculate $\mess{s_t}{\mu}(e_t)$, we can tackle the final problem: find the maximal $\mu$ for which $\mess{s_t}{\mu}(e_t)=0$. 
In principle, a secant root-finding method could be used, but using the concavity and non-increasing character of $\mess{s_t}{\mu}(e_t)$ as a function of $\mu$, we can speed up the calculation of the maximal root drastically as shown in Fig.~\ref{fig:rootfinding}.

\begin{figure}
\centering
  \begin{tikzpicture}[xscale=4.0,yscale=4.0,domain=0.1:1]\small
    \draw[->,gray,very thin] (-0.1,0) -- (1.5,0) node[right] {$\mu$};
    \draw[->,gray,very thin] (-0.08,-0.4) -- (-0.08,0.5) node[left] {$\rho_g(\mu)$};
    \draw[color=black,thick] plot (\x,{0.4-exp(4*\x)/60});
    \draw[color=gray,thin] (0.1,{0.4-exp(4*0.1)/60}) -- (0.65,{0.4-exp(4*0.65)/60}) -- ({((24-exp(4*0.65))*0.1-(24-exp(4*0.1))*0.65)/(exp(4*0.1)-exp(4*0.65))},0);
    \draw[color=gray,thin] (1.0,{0.4-exp(4*1.0)/60}) -- (0.95,{0.4-exp(4*0.95)/60}) -- ({((24-exp(4))*0.95-(24-exp(4*0.95)))/(exp(4*0.95)-exp(4))},0);
    \draw[color=gray,thin] (0.65,{0.4-exp(4*0.65)/60}) -- ({((24-exp(4*0.65))*0.95-(24-exp(4*0.95))*0.65)/(exp(4*0.95)-exp(4*0.65))},0) -- (0.95,{0.4-exp(4*0.95)/60});
    \fill[red] ({ln(24)/4},0) circle (0.3pt) node[above,xshift=4pt] {$t$};
    \fill[black] (0.1,{0.4-exp(4*0.1)/60}) circle (0.3pt) node [above] {$(a,\rho_g(a))$};
    \fill[black] (0.65,{0.4-exp(4*0.65)/60}) circle (0.3pt) node [above right] {$(b,\rho_g(b))$};
    \fill[black] (0.95,{0.4-exp(4*0.95)/60}) circle (0.3pt) node [right,yshift=2pt] {$(c,\rho_g(c))$};
    \fill[black] (1.0,{0.4-exp(4*1.0)/60}) circle (0.3pt) node [right,yshift=-0pt] {$(d,\rho_g(d))$};
    \fill[blue] ({((24-exp(4*0.65))*0.95-(24-exp(4*0.95))*0.65)/(exp(4*0.95)-exp(4*0.65))},0) circle (0.2pt) 
    node [below left] {$p$};
    \fill[blue] ({((24-exp(4))*0.95-(24-exp(4*0.95)))/(exp(4*0.95)-exp(4))},0) circle (0.2pt) 
    node [below right,xshift=+4pt] {$q$};
    \fill[blue] ({((24-exp(4*0.65))*0.1-(24-exp(4*0.1))*0.65)/(exp(4*0.1)-exp(4*0.65))},0) circle (0.2pt) 
    node [below right] {$r$};
    % oplijsting van het algoritme
    \node[rectangle] at (2.5,0) {
      \begin{minipage}[ht]{0.5\linewidth}
        $p\coloneqq\nicefrac{\rho_g(c) b-\rho_g(b) c}{\rho_g(c)-\rho_g(b)}$;\\
        $m\coloneqq c$;\;$t\coloneqq\nicefrac{p+m}{2}$;\\
        {\bf while } $m-p>\mathit{tol}$ { \bf and }$\rho_g(t)\neq0.0$\\
        $\quad\text{\bf if }\rho_g(t)>0$\\
        $\qquad a\coloneqq b;\; b\coloneqq t$;\\
        $\qquad s\coloneqq\nicefrac{\rho_g(a)b-\rho_g(b)a}{\rho_g(a)-\rho_g(b)}$;\\
        $\quad\text{\bf else}$\\
        $\qquad d\coloneqq c;\; c\coloneqq t$;\\
        $\qquad s\coloneqq\nicefrac{\rho_g(c) d-\rho_g(d) c}{\rho_g(c)-\rho_g(d)}$;\\
        $\quad p\coloneqq\nicefrac{\rho_g(c)b-\rho_g(b)c}{\rho_g(c)-\rho_g(b)}$;\\
        $\quad m\coloneqq\min\{m,s\}$;\;$t\coloneqq\nicefrac{p+m}{2}$;
      \end{minipage}
    };
  \end{tikzpicture}
\caption{The root of a concave and non-increasing function $\rho_g$ whose values $\rho_g(a)>\rho_g(b)>0>\rho_g(c)>\rho_g(d)$ are known, will always be in the interval $[p,m]$ with $m\coloneqq\min\{q,r\}$. Here $p,q$ and $r$ are the intersections with the horizontal axis of the straight lines through $(b,\rho_g(b))$ and $(c,\rho_g(c))$, $(c,\rho_g(c))$ and $(d,\rho_g(d))$, and $(a,\rho_g(a))$ and $(b,\rho_g(b))$, respectively. The next function evaluation of $\rho_g$ will be in $t$ which bisects the error interval $[p,m]$. If $\rho_g(t)>0$, then $a$ becomes $b$ and $b$ becomes $t$, otherwise $d$ becomes $c$ and $c$ becomes $t$ and a new interval $[p,m]$ and matching $t$ can be calculated. We stop iterating as soon as the error interval $[p,m]$ is smaller than a given tolerance $\mathit{tol}$, or $\rho_g(t)$ is exactly zero.}
  \label{fig:rootfinding}
\end{figure}
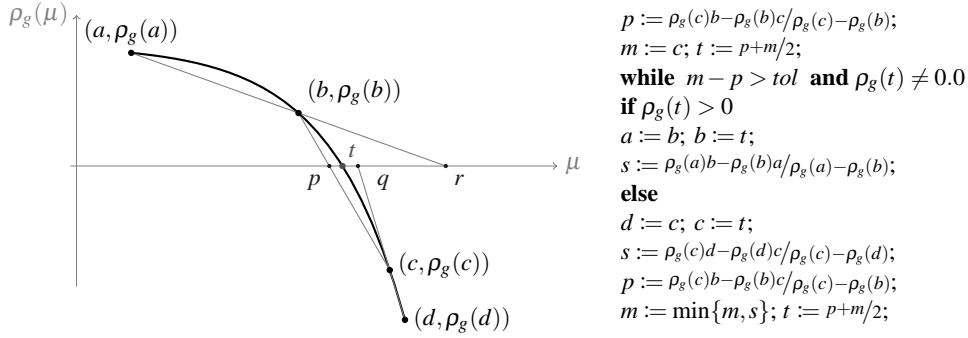
\par
Let us briefly discuss the complexity of our algorithm.
Consider for a start that for a fixed $\mu$ each node makes a single local computation and then propagates the result to its mother node: this implies that, with $\mu$ fixed, the algorithm is linear in the number of nodes.
Iterating on $\mu$ then amounts to multiplying such a linear complexity with the number of iterations. 
This number depends on the function $g$, as the iterations are made to compute the root of a function that is known to belong to the real interval $[\min g,\max g]$. 
If we assume that the bisection algorithm is employed to find the root---for the sake of simplicity---and let $r\coloneqq\max g-\min g$ be the range of the function, then the number of iterations is bounded by $\log_2\frac{r}{\mathit{tol}}+1$, where $\mathit{tol}$ is some fixed tolerance. In other words, the number of iterations is linear in the number $b$ of bits needed to represent $r$ in base 2. 
This means that the overall complexity of the algorithm is $O(b\cdot|T|)$, taking into account that the computational complexity of our root-finding algorithm must be lower than for the bisection (and actually also for the secant) algorithm. 
Since $b$ will be a small number\footnote{It could be argued that $b$ should be bounded given the finiteness of a computer's way to represent numbers.} in most cases (e.g.~when the focus is on probabilities), we simply refer to the complexity of our algorithm as linear in the number of nodes.

\section{A simple example involving dilation}
\label{sec:example}
We present a very simple example that allows us to
\begin{inparaenum}[(i)]
\item follow the inference method discussed above in a step-by-step fashion;
\item see that there are separation properties for credal nets under strong independence that fail for credal trees under epistemic irrelevance; and
\item see that in that case we will typically observe dilation.
\end{inparaenum}
\par
Consider the following imprecise Markov chain:
\begin{center}
  \begin{tikzpicture}\small
    \tikzstyle{edge from parent}+=[->,semithick]
    \tikzstyle{infoline}=[->,thick,dotted,draw=blue!50]
    \node (x1) {$\var{1}$} 
    [grow=right,level distance=35]
    child {node (x2) {$\var{2}$}
    child {node (x3) {$\var{3}$}}};
    \node[below of=x1,node distance=8mm,color=blue] (x1info) {?};
    \node[below of=x2,node distance=8mm,color=blue] (x2info) {$\xval{2}$}; 
    \node[below of=x3,node distance=8mm,color=blue] (x3info) {$\xval{3}$}; 
    \path[infoline] (x1) -- (x1info);
    \path[infoline] (x2info) -- (x2);
    \path[infoline] (x3info) -- (x3);
  \end{tikzpicture}
\end{center}
To make things as simple as possible, we suppose that $\values{1}=\{a,b\}$ and that $\lloc{1}$ is a linear (or precise, or expectation-like) model $\loc{1}$ with mass function $\mass$. 
We also assume that $\lloccon{2}{1}$ is a linear model $\loccon{2}{1}$ with conditional mass function $\masscon{1}$. 
We make no such restrictions on the local model $\lloccon{3}{2}$. 
We also use the following simplifying notational device: if we have three real numbers $\underline{\kappa}$, $\overline{\kappa}$ and $\gamma$, we let
\begin{equation*}
  \sprod{\kappa}{\gamma}
  \coloneqq\underline{\kappa}\max\{\gamma,0\}+\overline{\kappa}\min\{\gamma,0\}.
\end{equation*}
We observe $\var{2}=\xval{2}$ and $\var{3}=\xval{3}$, and want to make inferences about the target variable $\var{1}$: for any $g\in\gambles{1}$, we want to know $\lupcon[g]{1}{\{2,3\}}$. 
Letting $\ltarget\coloneqq\lupcon[\{a\}]{1}{\{2,3\}}$ and $\utarget\coloneqq\uupcon[\{a\}]{1}{\{2,3\}}$, we infer from the separate coherence of $\lupcon{1}{\sing{2,3}}$ that it suffices to calculate $\ltarget$ and $\utarget$, because
\begin{equation*}
  \lupcon[g]{1}{\{2,3\}}=g(b)+\sprod{\target}{g(a)-g(b)}.
\end{equation*}
We let $g^\mu=[\ind{\sing{a}}-\mu]\xindic{2}\xindic{3}$, and apply the approach of the previous section. 
We see that the trunk $\tilde{\nodes}=\{1\}$, and the instantiated leaf node $3$ sends up the messages $\lumess{3}=\luloccon[\xsing{3}]{3}{2}$ to the instantiated node $2$, which transforms them into the messages 
\begin{equation*}
  \lumess{2}
  =\luloccon[\xsing{2}]{2}{1}\lumess{3}(\xval{2})
  \eqqcolon\masscon[\xval{2}]{1}\lumass,
\end{equation*}
where we let $\masscon[\xval{2}]{1}\coloneqq\luloccon[\xsing{2}]{2}{1}$ and $\lumass\coloneqq \lumess{3}(\xval{2})$. 
These messages are sent up to the (target) root node $t=1$, which transforms them into the message $\mess{1}{\mu}=\loc{1}(\psi^\mu_1)$ with $\psi^\mu_1=\masscon[\xval{2}]{1}\sprod{\mass}{\ind{\sing{a}}-\mu}$. 
If we also use that $0\leq\mu\leq 1$, this leads to
\begin{equation*}
  \lglob{1}(g^\mu)
  =\mess{1}{\mu}
  =\mass(a)\mass(\xval{2}\vert a)\lmass[1-\mu]
  +\mass(b)\mass(\xval{2}\vert b)\umass[-\mu],
\end{equation*}
so we find after applying regular extension that
\begin{align*}
  \ltarget
  &=\lupcon[\{a\}]{1}{\{2,3\}}
  =\frac{\mass(a)\mass(\xval{2}\vert a)\lmass}
  {\mass(a)\mass(\xval{2}\vert a)\lmass
    +\mass(b)\mass(\xval{2}\vert b)\umass}\\
  \utarget
  &=\uupcon[\{a\}]{1}{\{2,3\}}
  =\frac{\mass(a)\mass(\xval{2}\vert a)\umass}
  {\mass(a)\mass(\xval{2}\vert a)\umass
    +\mass(b)\mass(\xval{2}\vert b)\lmass}.
\end{align*}
When $\lmass=\umass$, which happens for instance if the local model for $\var{3}$ is precise, then we see that, with obvious notations,
\begin{equation}\label{eq:precise}
  \utarget
  =\ltarget
  =\frac{\mass(a)\mass(\xval{2}\vert a)}
  {\mass(a)\mass(\xval{2}\vert a)
    +\mass(b)\mass(\xval{2}\vert b)}
  \eqqcolon p(a\vert\xval{2})
\end{equation}
and therefore $\var{2}$ indeed separates $\var{3}$ from $\var{1}$. 
But in general, letting $\alpha\coloneqq\mass(a)\mass(\xval{2}\vert a)$ and $\beta\coloneqq\mass(b)\mass(\xval{2}\vert b)$, we get 
\begin{equation*}
 \utarget-p(a\vert\xval{2})  
  =\frac{\alpha\beta}{\alpha+\beta}
  \frac{\umass-\lmass}
  {\alpha\umass+\beta\lmass}
  \text{ and }
  p(a\vert\xval{2})
  -\ltarget
  =\frac{\alpha\beta}{\alpha+\beta}
  \frac{\umass-\lmass}
  {\alpha\lmass+\beta\umass}.
\end{equation*}
As soon as $\umass>\lmass$, $\var{2}$ no longer separates $\var{3}$ from $\var{1}$, and we witness \emph{dilation} \cite{herron1997,seidenfeld1993} because of the additional observation of $\var{3}$!

\section{Numerical comparison with strong independence}
\label{sec:comparison}
Strong independence implies epistemic irrelevance, and hence inferred (lower-upper) probability intervals for imprecise Markov trees with epistemic irrelevance will include those obtained assuming strong independence. 
This suggests that our algorithm could also be used also as a tool to make conservative (also called outer) approximations of the computations made in a credal tree under strong independence. 
This could be an important application of our algorithm since at the moment it is unclear whether or not updating probabilities in a tree is a polynomial task under strong independence. 
If it were not, addressing the problem would definitely benefit from the availability of fast approximations.

With this idea in mind, we make a preliminary empirical exploration of the quality of the approximation. 
As noted in Section~\ref{sec:separation}, the two models have different separation properties: this is particularly important when evidence is back-propagated from leaves to root.  
For this reason, we compare posterior probability intervals for the root variable of a \emph{chain} where only the leaf node is instantiated.
\par
Fig.~\ref{fig:strvsepi} reports the results of this comparison for chains with binary nodes, randomly generated local models, and variable length (from 5 up to 100 nodes). 
The algorithm in Section~\ref{sec:algorithm} has been used to compute the posterior probability intervals in the chains under epistemic irrelevance, while the \emph{2U algorithm} \cite{fagiuoli1998} was used for updating in the chains under strong independence. 
The inferred probability intervals for the former turn out to be clearly wider, and the mean difference between the two intervals is about $0.3$ irrespective of the length of the chain, at least for chains with more than ten nodes. 
\par
For non-binary nodes there are no efficient algorithms known for updating chains with strong independence. 
We used the procedure in \cite{campos2005} to update chains with less than seven ternary nodes and credal sets with three randomly generated extreme points in the strong independence case. 
A similar difference between the posterior intervals was observed also in these cases. 
For longer chains, updating for the chain under strong independence is too slow and no comparison can be made.
In summary, there is a non-negligible difference between inferences based on the two notions of `independence'. This means that the epistemic approximations to the strong case could be quite crude in practise. 
However, their being outer (that is, safe) approximations together with their light complexity could still make of them very useful tools, whenever the strong independence approach is deemed necessary or appropriate.
\par
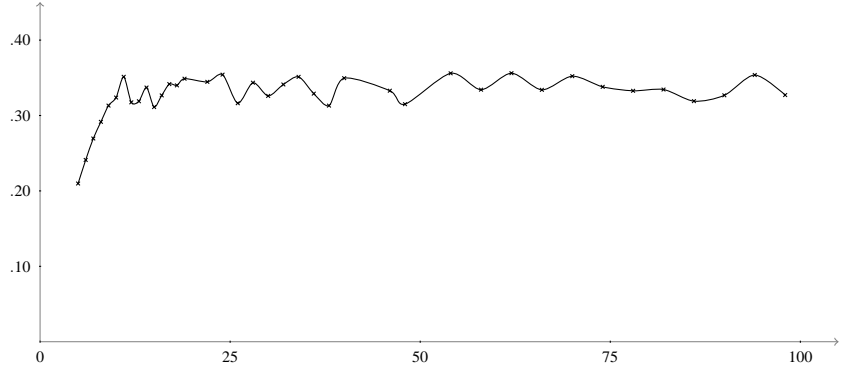
\begin{figure}[hbt!]
  \centering
  \begin{tikzpicture}[xscale=1.0/10.0,yscale=1.0/10.0]\small
    \draw[->,gray,very thin] (0.0,0) -- (105,0) node[right] {};
    \draw[->,gray,very thin] (0.0,0.0) -- (0.0,45) node[left] {};
    \foreach \x/\xtext in {0/0, 25/25 , 50/50 , 75/75 , 100/100}
    \draw[shift={(\x,0)}] (0pt,4pt) -- (0pt,-4pt) node[below] {\tiny $\xtext$};
    \foreach \y/\ytext in {10/.10, 20/.20, 30/.30 , 40/.40}
    \draw[shift={(0,\y)}] (4pt,0pt) -- (-4pt,0pt) node[left] {\tiny $\ytext$};
    % \tikz 
    \draw plot[xscale=1.0/1.0,yscale=1.0/1.0,smooth,mark=x,mark size=10pt] file {data.table};
  \end{tikzpicture}
  \caption{Numerical evaluation of the difference between the sizes of the posterior intervals for inferences on credal chains over binary variables with epistemic irrelevance and strong independence. The plot reports the mean difference (in ordinate) as a function of the number of nodes in the chains (in abscissa). Means are estimated over $200$ Monte Carlo runs.}
  \label{fig:strvsepi}
\end{figure}

\section{An application}\label{sec:application}
The tree topology of the graphs considered in this paper is expressive enough to model useful and interesting problems. 
These problems can then be solved efficiently by means of the algorithm described in the previous sections. 
We make this point clearer with an example application about character recognition. 
This is also an opportunity to illustrate the differences between the traditional, precise-probability, approach to the problem and the imprecise-probability one. 
Most notably, these differences arise because the imprecise-probability methods come with the inherent ability to suspend judgement when the information available is deemed insufficient to reliably recognise a character, whereas the precise-probability ones do not.

\subsection{Imprecise hidden Markov models}
Hidden Markov models (HMMs) \cite{rabiner1989} are popular tools for modelling a sequence of hidden variables that generate a related sequence of observable variables. 
These are respectively referred to as the \emph{generative} and \emph{observable} sequences. 
HMMs have applications in many areas of signal processing, and more specifically in speech and text recognition.
\par
Both the generative and the observable sequence are described by sets of variables over the same domain $\values{}$, denoted respectively by $\var{s_1}$, \dots, $\var{s_n}$ and $\var{o_1}$, \dots, $\var{o_n}$. 
The independence assumptions between these variables, which characterise HMMs, are those corresponding to the tree structure below. 
Informally, this topology states that every element of the generative sequence depends only on its predecessor, while each observation depends only on the corresponding element of the generative sequence.
\begin{center}
  \begin{tikzpicture}\small
    \tikzstyle{edge from parent}+=[->,semithick]
    \tikzstyle{link}=[draw,->,semithick]
    \node (x1) {$\var{s_1}$}
    [grow=right,level distance=40]
    child {node (x2) {$\var{s_2}$}
      child {node (x) {$\dots$}
        child {node (xn) {$\var{s_n}$}}}};
    \node[below of=x1,node distance=30] (o1) {$\var{o_1}$};
    \node[below of=x2,node distance=30] (o2) {$\var{o_2}$};
    \node[below of=x,node distance=30] (o) {$\dots$};
    \node[below of=xn,node distance=30] (on) {$\var{o_n}$};
    \path[link] (x1) -- (o1);
    \path[link] (x2) -- (o2);
    \path[link] (xn) -- (on);
   \node[left of=o1,node distance=60] {\emph{observable sequence}:};
    \node[left of=x1,node distance=60] {\emph{generative sequence}:};
  \end{tikzpicture}
\end{center}
A local uncertainty model should be defined for each variable. 
In the case of precise probabilistic assessments, this corresponds to linear (precise, or expectation-like) versions of the local models $\lloc{s_1}$, $\lloccon{s_{k+1}}{s_k}$ and $\lloccon{o_k}{s_k}$, $k=1,\dots,n$, where the conditional models are assumed to be \emph{stationary}, i.e., independent of $k$. 
These model, respectively, beliefs about the first state in the generative sequence, the transitions between adjacent states, and the observation process.
\par
Bayesian techniques for learning from multinomial data are usually employed for identifying these models. 
But, especially if only few data are available, other methods leading to imprecise assessments, such as the \emph{imprecise Dirichlet model} (IDM, \cite{walley1996b}), might offer a more realistic model of the local uncertainty.
For example, for the unconditional local model $\lloc{s_1}$, applying the IDM leads to the following simple identification:
\begin{equation}\label{eq:idm}
  \lloc{s_1}(\{x_1\})=\frac{n^{s_1}_{x_1}}{s+\sum\limits_{x\in\values{}}n^{s_1}_{x}}
  \quad
  \uloc{s_1}(\{x_1\})=\frac{s+n^{s_1}_{x_1}}{s+\sum\limits_{x\in\values{}}n^{s_1}_{x}},
\end{equation}
where $n^{s_1}_{x_1}$ counts the units in the sample for which $\var{s_1}=x_1$, and $s$ is a (positive real) hyper-parameter that expresses the degree of caution in the inferences. 
For the conditional local models, we can proceed similarly. 
This leads to the identification of an \emph{imprecise HMM}, a special credal tree under epistemic irrelevance, like the ones introduced in Section~\ref{sec:markov-trees}.
\par
Generally speaking, the algorithm described in Section~\ref{sec:algorithm} can be used for computing inferences with such imprecise HMMs. 
Below, we address the more specific problem of \emph{on-line recognition}, which consists in the identification of the most likely value of $\var{s_n}$, given the evidence for the whole observational sequence $\var{o_1}=\xval{o_1}$, \dots, $\var{o_n}=\xval{o_n}$. 
For precise local models, this problem requires the computation of the state $\tilde{x}_{s_n}\coloneqq\argmax_{\xval{s_n}\in\values{}}P(\{\xval{s_n}\}\vert\xval{o_1},\dots,\xval{o_n})$  that is most probable after the observation. 
For imprecise local models different criteria can be adopted; see \cite{troffaes2007} for an overview. 
We consider \emph{maximality}: we order the states by $\xval{s_n}>\zval{s_n}$ if and only if $\underline{P}(\xindic{s_n}-\zindic{s_n}\vert\xval{o_1},\dots,\xval{o_n})>0$, and we look for the \emph{undominated} or \emph{maximal} states under this order. 
This may produce \emph{indeterminate} predictions: the set of undominated states may have more than one element.

\subsection{On-line character recognition}
As a very first application of the imprecise HMM, we have considered a \emph{character recognition} problem.\footnote{For a more involved application, related to aircraft trajectory model tracking, see \cite{antonucci2009}.} 
A written text was regarded as a generative sequence, while the observable sequence was obtained by artificially corrupting the text. 
This is a model for a not perfectly reliable observation process, such as the output of an OCR device. 
The local models were identified using the IDM, as in \eqref{eq:idm}, by counting the occurrences of single characters and the `transitions' from one character to another in the generative sequence, and by matchings between the elements of the two sequences. 
By modelling text as a generative sequence, we obviously ignore any dependence there might be between a character and its $n$-th predecessor, for any $n\geq 2$. 
A better, albeit still not completely realistic, model would resort to using $n$-grams (i.e., clusters of $n$ characters with $n\geq2$) instead of monograms. 
Such models might lead to higher accuracy, but they need larger data sets for their quantification, because of the exponentially larger number of possible transitions for which probabilities have to be estimated. 
The figure below depicts how on-line recognition through HMM might apply to this setup.
\begin{center}
  \begin{tikzpicture}\small
    \tikzstyle{edge from parent}+=[ed]
   \node[nd] (x1) {$\var{s_1}$}
    [level distance=35]
    child[grow=down] {node[observed] (o1) {$\var{o_1}$}}
    child[grow=right] {node[nd] (x2) {$\var{s_2}$}
      child[grow=down] {node[observed] (o2) {$\var{o_2}$}}
      child[grow=right] {node[target] (x3) {\textcolor{white}{$\var{s_3}$}}
        child[grow=down] {node[observed] (o3) {$\var{o_3}$}}}};
    \node[above of=x1,node distance=20] (xi) {\bfseries I};
    \node[above of=x2,node distance=20] {\bfseries T};
    \node[above of=x3,node distance=20] {\bfseries A};
    \node[left of=xi,node distance=35] (xv) {\bfseries V};
    \node[left of=xv,node distance=55] {\emph{Original text}:};
    \node[below of=o1,node distance=20] (oi) {\bfseries I};
    \node[below of=o2,node distance=20] {\bfseries T};
    \node[below of=o3,node distance=20] {\bfseries O};
    \node[left of=oi,node distance=35] (ov) {\bfseries V};
    \node[left of=ov,node distance=55] {\emph{OCR output}:};
  \end{tikzpicture}
\end{center}
The performance of the precise model can be characterised by its \emph{accuracy} (the percentage of correct predictions) alone. 
The imprecise HMM requires more indicators. 
We follow \cite{corani2008} in using the following:
\begin{description}
\item[determinacy] percentage of determinate predictions,
\item[set-accuracy] percentage of indeterminate predictions containing the right state, 
\item[single accuracy] percentage of correct predictions computed considering only determinate predictions, and \item[indeterminate output size] average number of states returned when the prediction is indeterminate over number of possible states.
\end{description}
\par
\begin{table}[htp!]
  \small
  \begin{tabular}{lrr}
   \bfseries Precise HMM\\ 
   Accuracy & $93.96 \%$ & $(\nicefrac{7275}{7743})$\\
   Accuracy (if imprecise indeterminate)&   $64.97\%$&  $(\nicefrac{243}{374})$\\
   \\
   \bfseries Imprecise HMM    \\
   Determinacy&  $95.17\%$ &   $(\nicefrac{7369}{7743})$\\
   Set-accuracy&           $93.58\%$ &  $(\nicefrac{350}{374})$\\
   Single accuracy&  $95.43\%$ &   $(\nicefrac{7032}{7369})$\\
   Indeterminate output size&   
   $2.97$ out of $21$ classes &  $(\nicefrac{1112}{374})$\\\\
 \end{tabular}
  \caption{Precise vs.~imprecise HMMs. 
Test results obtained by twofold cross-validation on the first two chants of Dante's \emph{Divina Commedia} and $n=2$. 
Quantification is achieved by IDM with $s=2$ and Perks' prior modified as suggested in \cite[Section~5.2]{zaffalon2001}. 
The single-character output by the precise model is then guaranteed to be included in the set of characters the imprecise HMM identifies.}
  \label{tab:results}
\end{table}
\par
The recognition using our algorithm is fast: it never takes more than one second for each character. 
Table~\ref{tab:results} reports descriptive values for a large set ($7743$) of simulations, and a comparison with precise model performance. 
Imprecise HMMs guarantee quite accurate predictions. 
In contrast with the precise model, there are `indeterminate' instances for which they do not output a single state. 
Yet, this happens rarely, and even then we witness a remarkable reduction in the number of undominated states (from the $21$ letters of the Italian alphabet to less than $3$). 
Interestingly, the instances for which the imprecise probability model returns more than one state appear to be `difficult' for the precise probability model: the accuracy of the precise models displays a strong decrease if we focus only on these instances, while the imprecise models here display basically the same performance as for other instances, by returning about three  characters instead of a single one.

\section{Conclusions}
\label{sec:conclusion}
We have defined imprecise-probability (or credal) trees using Walley's notion of epistemic irrelevance. 
Credal trees generalise tree-shaped Bayesian nets in two ways: by allowing the parameters of the tree to be imprecisely specified, and moreover by replacing the notion of stochastic independence with that of epistemic irrelevance. 
Our focusing on epistemic irrelevance is the most original aspect of this work, as this notion has received limited attention so far in the context of credal nets.
\par
We have focused in particular on developing an efficient exact algorithm for updating beliefs on the tree. 
Like the algorithms developed for precise graphical models, our algorithm works in a distributed fashion by passing messages along the tree. 
It computes lower and upper conditional previsions (expectations) with a complexity that is linear in the number of nodes in the tree. 
This is remarkable because until now it was unclear whether an algorithm with the features described above was at all feasible: in fact, epistemic irrelevance is most easily formulated using coherent lower previsions, which have never before been used as such in practical applications of credal nets. 
Moreover, it is at this point not clear that epistemic irrelevance is as `well-behaved' as strong independence is with respect to the graphoid axioms for propagation of probability in graphical models \cite{cozman2005b,moral2005b}.\footnote{Unlike credal nets based on strong independence, a credal net based on epistemic irrelevance cannot generally be seen as equivalent with a set of Bayesian nets \emph{with the same graphical structure}: if it were, then all separation properties of Bayesian nets would simply be inherited, and we have seen in Section~\ref{sec:example} that such is not the case.} 
Our results therefore appear very encouraging, and seem to have the potential to open up new avenues of research in credal nets. 
\par
On a more theoretical side, we have also shown that our credal trees satisfy the important rationality requirement of coherence. 
This has been established under the assumption that the \emph{upper} probability of any possible observation in the tree is positive, which is a very mild requirement. 
The same assumption also allowed us to show that all inferences made by updating the tree will be coherent with each other as well as with the local uncertainty models in the nodes of the tree.
\par
On the applied side, we have presented an application of the credal tree model to the problem of character recognition, where the parameters of the model are inferred from data. 
The empirical results are positive, especially because they show that our credal trees are able to make more reliable predictions than their precise-probability counterparts.
\par
Where to go from here? 
There are many possible avenues for future research. 
\par
It would be very useful to be able to extend the algorithm at least to so-called \emph{polytrees}, which are substantially more expressive graphs than trees are. 
This could be a difficult task to achieve. 
In fact, updating credal nets based on strong independence is an NP-hard task when the graph is more general than a tree \cite{campos2005a}. 
Similar problems might affect the algorithms for credal nets based on irrelevance.
\par
For applications, it would be very important to develop statistical methods specialised for credal nets under irrelevance that avoid introducing excessive imprecision in the process  of inferring probabilities from data. 
This could be achieved, for instance, by using a single global IDM over the variables of the tree rather than many local ones, as we did in our experiments.
\par
Another research direction could be concerned with trying to strengthen the conclusions that epistemic trees lead to. 
There might be cases where our Markov condition based on epistemic irrelevance is too weak as a structural assessment. 
We have discussed situations where this type of Markov condition systematically leads to a dilation of uncertainty when updating beliefs with observations, and indicated that this dilation is related to the (lack of) certain separation properties induced by epistemic irrelevance on a graph. 
Dilation might not be desirable in some applications, and we could be called upon to strengthen the model in order to rule out such behaviour. 
One way to address the issue of dilation---but not necessarily the easiest---could consist in adding additional irrelevance statements to the model, other than those derived from the Markov condition. 
An easier avenue could be based on designing assumptions that together with the Markov condition lead to some stronger separation properties, while not necessarily requiring them to match the common ones used in Bayesian nets.

\section*{Acknowledgements}
Research by De Cooman and Hermans was supported by Flemish BOF project~01107505 and SBO project~060043 of the IWT-Vlaanderen.
Research by Antonucci and Zaffalon has been partially supported by the Swiss NSF grants n.~200020-116674/1 and n.~200020-121785/1. 
This paper has benefited from discussions with Seraf\'{\i}n Moral, Fabio G.~Cozman and Cassio P.~de Campos, and from the generous comments provided by two anonymous referees.

% \bibliographystyle{plain}
% \bibliography{general,additional}

\appendix
\section{Proofs of important results}\label{sec:app}
In this Appendix, we justify formulas~\eqref{eq:algorithm-off-backbone} and~\eqref{eq:algorithm-in-target}, and give proofs for Propositions~\ref{prop:global:positivity} and~\ref{prop:global:positivity:too}, and Theorem~\ref{theo:global-models}. 

\begin{proof}[Proof of Eqs.~\eqref{eq:algorithm-off-backbone} and~\eqref{eq:algorithm-in-target}]
  Let us define the gambles
  \begin{equation*}
    \varmess{s}{\mu}\coloneqq\lglobconm[\phi^\mu_s]{s}\in\gambles{\mother{s}},\quad s\precedes t
  \end{equation*}
  and, with obvious notations,
  \begin{equation*}
    \luvarmess{s}\coloneqq\luglobconm[\phi^\mu_s]{s}\in\gambles{\mother{s}},\quad s\nprecedes t.
  \end{equation*}
  Let the chain $\until{t}$ be given by $\{t_1,\dots,t_r\}$, where $t_1\coloneqq\init$, $t_r\coloneqq t$ and $t_k\coloneqq\mother{t_{k+1}}$ for $k=1,\dots,r-1$.
  If we apply the recursion equation~\eqref{eq:backwards-global} in $s=t_1$ and take into account the separate coherence and the strong factorisation of the conditionally independent natural extension $\linecon{t_1}$, we see that
  \begin{equation}\label{eq:justification:one}
    \varmess{t_1}{\mu}
    =\varmess{\init}{\mu}
    =\rho_g(\mu)
    =\lglob{}(\phi^\mu_{t_1})
    =\lloc{t_1}(\psi^\mu_{t_1}),
  \end{equation}
  where [provided that $t_1\neq t$ and therefore $r>1$]
  \begin{align}
    \psi^\mu_{t_1}
    &\coloneqq\linecon[\phi^\mu_{t_1}]{t_1}
    =\biglinecon[{g^\mu_{t_1}\smashoperator{\prod_{c\in\children{t_1}}}\phi^\mu_c}]{t_1}
    =g^\mu_{t_1}\biglinecon[{\smashoperator[r]{\prod_{c\in\children{t_1}}}\phi^\mu_c}]{t_1}\notag\\
    &=g^\mu_{t_1}\biglinecon[{\lglobcon[\phi^\mu_{t_2}]{t_2}{t_1}
      \smashoperator{\prod_{c\in\children{t_1}\setminus\sing{t_2}}}\phi^\mu_c}]{t_1}
    =g^\mu_{t_1}\biglinecon[{\varmess{t_2}{\mu}
      \smashoperator{\prod_{c\in\children{t_1}\setminus\sing{t_2}}}\phi^\mu_c}]{t_1}\notag\\
    &=\bigg[
    \max\{\varmess{t_2}{\mu},0\}
    \biglinecon[{\smashoperator[r]{\prod_{c\in\children{t_1}\setminus\sing{t_2}}}\phi^\mu_c}]{t_1}
    +\min\{\varmess{t_2}{\mu},0\}
    \biguinecon[{\smashoperator[r]{\prod_{c\in\children{t_1}\setminus\sing{t_2}}}\phi^\mu_c}]{t_1}
    \bigg]g^\mu_{t_1}\notag\\
    &=\bigg[
    \max\{\varmess{t_2}{\mu},0\}
    \smashoperator{\prod_{c\in\children{t_1}\setminus\sing{t_2}}}\lvarmess{c}
    +\min\{\varmess{t_2}{\mu},0\}
    \smashoperator{\prod_{c\in\children{t_1}\setminus\sing{t_2}}}\uvarmess{c}
    \bigg]g^\mu_{t_1}. \label{eq:justification:two}
  \end{align}
  Similarly, we find that
  \begin{equation}\label{eq:justification:three}
    \varmess{t_2}{\mu}
    =\lglobcon[\phi^\mu_{t_2}]{t_2}{t_1}
    =\lloccon[\psi^\mu_{t_2}]{t_2}{t_1},
  \end{equation}
  where [provided that $t_2\neq t$ and therefore $r>2$] in a completely similar way as above
  \begin{equation}\label{eq:justification:four}
    \psi^\mu_{t_2}
    \coloneqq\linecon[\phi^\mu_{t_2}]{t_2}
    =\bigg[
    \max\{\varmess{t_3}{\mu},0\}
    \smashoperator{\prod_{c\in\children{t_2}\setminus\sing{t_3}}}\lvarmess{c}
    +\min\{\varmess{t_3}{\mu},0\}
    \smashoperator{\prod_{c\in\children{t_2}\setminus\sing{t_3}}}\uvarmess{c}
    \bigg]g^\mu_{t_2}.
  \end{equation}
  We can go on in this way until we come to $t_r=t$:
  \begin{equation}\label{eq:justification:five}
    \varmess{t_r}{\mu}
    =\lglobcon[\phi^\mu_{t_r}]{t_r}{t_{r-1}}
    =\lloccon[\psi^\mu_{t_r}]{t_r}{t_{r-1}},
  \end{equation}
  where, using the separate coherence and the strong factorisation of the conditionally independent natural extension $\linecon{t_r}$, 
  \begin{align}
    \psi^\mu_{t}
    =\psi^\mu_{t_r}
    &\coloneqq\linecon[\phi^\mu_{t_r}]{t_r}
    =\biglinecon[{g^\mu_{t_r}\smashoperator{\prod_{c\in\children{t_r}}}\phi^\mu_c}]{t_r}
    =\biglinecon[{[g-\mu]\smashoperator{\prod_{c\in\children{t_r}}}\phi^\mu_c}]{t_r}\notag\\
    &=\max\{g-\mu,0\}\biglinecon[{\smashoperator[r]{\prod_{c\in\children{t_r}}}\phi^\mu_c}]{t_r}
    +\min\{g-\mu,0\}\biguinecon[{\smashoperator[r]{\prod_{c\in\children{t_r}}}\phi^\mu_c}]{t_r}\notag\\
    &=\max\{g-\mu,0\}\smashoperator{\prod_{c\in\children{t}}}\lvarmess{c}
    +\min\{g-\mu,0\}\smashoperator{\prod_{c\in\children{t}}}\uvarmess{c}.\label{eq:justification:six}
  \end{align}
  Clearly, if we can prove that $\luvarmess{s}=\lumess{s}$ for all $s\nprecedes t$, it will follow from the considerations above that also $\varmess{s}{\mu}=\mess{s}{\mu}$ for all $s\precedes t$, and then the proof is complete.
 This is what we now set out to do.
 Consider any $s\nprecedes t$. 
 Then applying the recursion equation~\eqref{eq:backwards-global} and taking into account the separate coherence and the strong factorisation of the conditionally independent natural extension $\linecon{s}$, we see that, provided $s$ is not a terminal node, and with obvious notations,
  \begin{equation}\label{eq:justification:seven}
    \luvarmess{s}
   =\luglobconm[\phi^\mu_{s}]{s}
  =\lulocconm[\overline{\underline{\psi}}_s]{s},
  \end{equation}
  where 
  \begin{align}
    \overline{\underline{\psi}}_{s}
    &\coloneqq\luinecon[\phi^\mu_{s}]{s}
    =\bigluinecon[{g^\mu_{s}\smashoperator{\prod_{c\in\children{s}}}\phi^\mu_c}]{s}
    =g^\mu_{s}\bigluinecon[{\smashoperator[r]{\prod_{c\in\children{s}}}\phi^\mu_c}]{s}\notag\\
   &=g^\mu_s\smashoperator{\prod_{c\in\children{s}}}
   \luglobcon[\phi^\mu_c]{c}{s}
   =g^\mu_s\smashoperator{\prod_{c\in\children{s}}}\luvarmess{c}.\label{eq:justification:eight}
  \end{align}
  If on the other hand $s$ is a terminal node, then we can use Eq.~\eqref{eq:boundary-condition} to find that
  \begin{equation}\label{eq:justification:nine}
    \luvarmess{s}
    =\luglobconm[\phi^\mu_{s}]{s}
    =\lulocconm[\phi^\mu_{s}]{s}
    =\lulocconm[g^\mu_{s}]{s}
    = \lumess{s},
  \end{equation}
  where the last equality follows from Eq.~\eqref{eq:lower-upper-messages}. 
  Now combine Eqs.~\eqref{eq:justification:seven}--\eqref{eq:justification:nine} and use recursion to complete the proof. 
\end{proof}

\begin{proof}[Proof of Proposition~\ref{prop:global:positivity}]
  Fix any $\xval{\nodes}$ in $\values{\nodes}$.
  We need to prove that $\uglob{}(\xsing{T})>0$ and that $\zinuglobcon[\xsing{\after{S}}]{S}{s}>0$ for all $s\in\nonterminals$, non-empty $S\subseteq\children{s}$ and $\zval{s}\in\values{s}$.
  Our argumentation is similar to a special case of the one in Section~\ref{sec:calculating-rho}.
  We use the notation established there, but with in particular $g\coloneqq\mu+1$, $t\coloneqq\init$ and $E\coloneqq\nodes$.
  This implies that $g^\mu=\xindic{T}$, $g^\mu_s=\xindic{s}$ and $\phi^\mu_s=\xindic{\after{s}}$.
  In accordance with Eq.~\eqref{eq:lower-upper-messages}, we define the messages $\umess{s}\in\gambles{\mother{s}}$ and $\uamess{s}\in\gambles{s}$ recursively by:
  \begin{equation}\label{eq:global:positivity}
    \uamess{s}\coloneqq\prod_{c\in\children{s}}\umess{c}
    \text{ and }
    \umess{s}
    \coloneqq\ulocconm[\xindic{s}\uamess{s}]{s}    
    =\uamess{s}(\xval{s})\ulocconm[\xsing{s}]{s},
    \quad s\in\nodes
  \end{equation}
  with, as before by convention $\uamess{s}\coloneqq1$ in all leaves $s$.
  The last equality follows from the separate coherence of the local models $\ulocconm{s}$ and the fact that all messages~$\umess{s}$ and~$\uamess{s}$ are non-negative.
  It is clear from the recursion equations~\eqref{eq:backwards-nine} and~\eqref{eq:backwards-global} [see also Eq.~\eqref{eq:algorithm-off-backbone}, the proof is completely similar to that of Eqs.~\eqref{eq:algorithm-off-backbone} and~\eqref{eq:algorithm-in-target} given above] that $\uglobconm[\xsing{\after{s}}]{s}=\umess{s}$, for all $s\in\nodes$, and that $\uinecon[\xsing{\after{\children{s}}}]{s}=\uamess{s}$ for all $s\in\nonterminals$. 
 Similarly, it follows from Eq.~\eqref{eq:backwards-nine-restrict}, conjugacy and the strong factorisation property of the conditionally independent natural extension that $\uglobcon[\xsing{\after{S}}]{S}{s}=\prod_{c\in S}\umess{c}$ for all $s\in\nonterminals$ and all non-empty $S\subseteq\children{s}$.
 So we have to prove that all values (all components) of all messages $\umess{s}$, $s\in\nodes$ are (strictly) positive.
 This follows at once from the recursion equation~\eqref{eq:global:positivity} and the assumed strict positivity of the local models $\ulocconm{s}$.
\end{proof}

\begin{proof}[Proof of Proposition~\ref{prop:global:positivity:too}]
  Our argumentation is similar to a special case of the one in Section~\ref{sec:calculating-rho}.
  We also use notation similar to that established there, with in particular $g\coloneqq\mu+1$ and $t\coloneqq\init$.
  In accordance with Eq.~\eqref{eq:lower-upper-messages}, we define the messages $\umess{s}\in\gambles{\mother{s}}$  and $\uamess{s}\in\gambles{s}$ recursively by:
\begin{equation}\label{eq:global:positivity:too}
    \uamess{s}\coloneqq\prod_{c\in\children{s}}\umess{c},
    \quad s\in\nodes
  \end{equation}  
  and
  \begin{equation}\label{eq:global:positivity:ttoo}
    \umess{s}\coloneqq
    \begin{cases}
      \uamess{s}(\xval{s})\ulocconm[\xsing{s}]{s}
      &\text{ if $s\in E$}\\
      \ulocconm[\uamess{s}]{s}
      &\text{ if $s\in\nodes\setminus E$}.
    \end{cases}
 \end{equation}
  with, as before by convention $\uamess{s}\coloneqq1$ in all leaves $s$.
  All these messages are non-negative by construction.
  It is clear from the recursion equations~\eqref{eq:backwards-nine} and~\eqref{eq:backwards-global} [see also Eq.~\eqref{eq:algorithm-off-backbone}, the proof is completely similar to that of
Eqs.~\eqref{eq:algorithm-off-backbone} and~\eqref{eq:algorithm-in-target} given above] that $\uglobconm[\xsing{E\cap\after{s}}]{s}=\umess{s}$ for all $s\in\nodes$.
  Now it follows from the recursion equations~\eqref{eq:global:positivity:too} and~\eqref{eq:global:positivity:ttoo} and the assumption $\upr(\xsing{E})=\umess{\init}>0$ that $\uamess{e}(\xval{e})>0$ for all $e\in E$. Again applying Eq.~\eqref{eq:global:positivity:too}, we find that indeed $\umess{c}(\xval{e})>0$ for all $c\in\children{e}$.
\end{proof}

Our proof of Theorem~\ref{theo:global-models} relies heavily on a very convenient coherence result proved by Enrique Miranda \cite[Theorem~6]{miranda2009a}, which we relate here to make the paper more self-contained. 
We use the notations established in the context of Section~\ref{sec:independent-natural-extension}.

\begin{theorem}\label{theo:blessyouquique}
  Let $\lpr$ be a (separately) coherent lower prevision on $\gambles{N}$, and consider $m$ disjoint pairs of subsets $O_k$ and $I_k$ of $N$, $k=1,\dots,m$. 
Assume that $\upr(\xsing{I_k})>0$ for all $\xval{I_k}\in\values{I_k}$, $k=1,\dots,m$ and use regular extension to define the conditional lower previsions $\lrext{I_k}$ on $\gambles{O_k}$, for $k=1,\dots,m$. 
Then the (conditional) lower previsions $\lpr$, $\lrext{I_1}$, \dots, $\lrext{I_m}$ are (jointly) coherent.
\end{theorem}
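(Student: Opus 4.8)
The plan is to realise the family $\lpr,\lrext{I_1},\dots,\lrext{I_m}$ as a lower envelope of jointly coherent \emph{precise} (conditional) previsions, and then to appeal to the fact that such lower envelopes are themselves jointly coherent \cite[Section~7.1]{walley1991}. Since $\values{N}$ is finite and $\lpr$ is coherent, its credal set $\mathcal{M}(\lpr)$ of dominating linear previsions on $\gambles{N}$ is non-empty, compact and convex, and $\lpr$ is its lower envelope. I would also use the standard description of regular extension under the positivity assumption \cite[Appendix~J]{walley1991}: for each $k$ and each $\xval{I_k}\in\values{I_k}$, and for every gamble $h\in\gambles{O_k}$ (by separate coherence it suffices to treat gambles on $\values{O_k}$),
\begin{equation*}
  \xinlrext[h]{I_k}=\inf\set{P(h\vert\xval{I_k})}{P\in\mathcal{M}(\lpr)\text{ and }P(\xsing{I_k})>0},
\end{equation*}
where the conditioning set on the right is non-empty precisely because $\upr(\xsing{I_k})>0$.

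The one step that requires care is that this infimum ranges over the $P$ charging the \emph{single} event $\xsing{I_k}$, and such a $P$ may annihilate other conditioning events $\xsing{I_j}$, $j\neq k$, so the whole family cannot be read off a single subfamily of $\mathcal{M}(\lpr)$ without an argument. To get around this, set
\begin{equation*}
  \mathcal{M}^*\coloneqq\set{P\in\mathcal{M}(\lpr)}{P(\xsing{I_k})>0\text{ for all }k\text{ and all }\xval{I_k}\in\values{I_k}}.
\end{equation*}
By $\upr(\xsing{I_k})>0$ and compactness of $\mathcal{M}(\lpr)$ there is, for each of the finitely many pairs $(k,\xval{I_k})$, a prevision in $\mathcal{M}(\lpr)$ with $P(\xsing{I_k})>0$; their arithmetic mean then lies in $\mathcal{M}^*$, so $\mathcal{M}^*\neq\emptyset$, say $P^*\in\mathcal{M}^*$. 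I would then run a perturbation argument: given a minimising $P_0\in\mathcal{M}(\lpr)$ with $P_0(\xsing{I_k})>0$, the previsions $P_\varepsilon\coloneqq(1-\varepsilon)P_0+\varepsilon P^*$ lie in $\mathcal{M}^*$ for $\varepsilon\in(0,1]$, and since $P_\varepsilon(\xsing{I_k})$ stays bounded away from $0$ along this path, $P_\varepsilon(h\vert\xval{I_k})\to P_0(h\vert\xval{I_k})$ as $\varepsilon\downarrow0$; applying the same idea to an arbitrary $f\in\gambles{N}$ and a $\lpr$-minimising $P_0$ gives $\inf_{P\in\mathcal{M}^*}P(f)=\lpr(f)$. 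Consequently
\begin{equation*}
  \lpr=\inf_{P\in\mathcal{M}^*}P\text{ on }\gambles{N}
  \qquad\text{and}\qquad
  \xinlrext[h]{I_k}=\inf_{P\in\mathcal{M}^*}P(h\vert\xval{I_k})\text{ for every }k.
\end{equation*}

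To finish, observe that for each fixed $P\in\mathcal{M}^*$ the precise prevision $P$ together with the Bayesian conditionals $P(\cdot\vert\xval{I_k})$, $k=1,\dots,m$ (all well defined since $P(\xsing{I_k})>0$), form a jointly coherent family, which is the classical precise-probability fact. By the two displayed identities, $\lpr,\lrext{I_1},\dots,\lrext{I_m}$ is the lower envelope of this collection of jointly coherent precise families indexed by $P\in\mathcal{M}^*$, and is therefore jointly coherent \cite[Section~7.1]{walley1991}. I expect the only genuinely delicate points to be the bookkeeping in the perturbation argument — in particular the continuity of $P\mapsto P(\cdot\vert\xval{I_k})$ along paths on which $P(\xsing{I_k})$ is bounded away from $0$ — and pinning down the precise form of the lower-envelope theorem for collections of conditional lower previsions in \cite{walley1991,miranda2009a}; the rest is routine Walley-style manipulation, together with the separate-coherence reduction already recorded in Section~\ref{sec:markov-trees}.
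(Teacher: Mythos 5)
Your proposal addresses a statement for which the paper supplies no proof at all: Theorem~\ref{theo:blessyouquique} is quoted verbatim from Miranda \cite[Theorem~6]{miranda2009a} ``to make the paper more self-contained'', and is then used as a black box in the proof of Theorem~\ref{theo:global-models}. So there is no internal argument to compare yours against; what you have done is supply a self-contained proof, and the route you choose --- represent $\lpr$ by its non-empty compact convex credal set $\mathcal{M}(\lpr)$, use the standard characterisation of regular extension as the lower envelope of the Bayes conditionals $P(\cdot\vert\xval{I_k})$ over the dominating linear previsions with $P(\xsing{I_k})>0$, and then invoke a lower envelope theorem --- is a sound and standard way to obtain such results on finite spaces. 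You also correctly identify the only genuinely non-trivial step, namely that all $m+1$ envelopes must be realised over a \emph{common} subfamily, and your fix (mix an arbitrary competitor with a fixed $P^*\in\mathcal{M}(\lpr)$ charging every event $\xsing{I_k}$, and let the mixture weight tend to zero while $P_\varepsilon(\xsing{I_k})$ stays bounded away from zero) does the job. Two points of hygiene you should tidy up, neither of which is a real gap: first, a ``minimising'' $P_0$ need not exist, since the set $\set{P\in\mathcal{M}(\lpr)}{P(\xsing{I_k})>0}$ is not closed, so run the perturbation from an arbitrary (or near-optimal) $P_0$ and take infima at the end; second, the lower envelope theorem you need is the one for an arbitrary finite collection of conditional lower previsions together with the unconditional one, which is stated in Walley only in more restricted forms --- it can be verified directly from the joint coherence condition of \cite[Section~7.1.4(b)]{walley1991} by an $\varepsilon$-argument (pick a $P\in\mathcal{M}^*$ nearly attaining the single subtracted term and use dominance for the remaining terms), or taken from \cite{miranda2008a,miranda2009a}. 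With those adjustments your argument is correct; it simply re-proves, by the envelope method, the result the authors chose to cite.
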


\begin{proof}[Proof of Theorem~\ref{theo:global-models}]
  We begin by showing that the family of models $\treefamily{P}$ satisfies requirements~T\ref{item:global:local}--T\ref{item:global:irrelevance}.
\par
  To prove~T\ref{item:global:local}, consider any $s\in\nodes$, and any $f\in\gambles{s}$, then it follows from separate coherence that $\linecon[f]{s}=f$, and therefore we infer from the recursion equation~\eqref{eq:backwards-global} that indeed $\lglobconm[f]{s}=\llocconm[{\linecon[f]{s}}]{s}=\llocconm[f]{s}$.
  \par
  Next, we turn to the proof of~T\ref{item:global:coherence} and~T\ref{item:global:irrelevance}.
  Consider any $s\in\nonterminals$, $S\subseteq\children{s}$ and $R\subseteq\nonparnondes{S}$.
  Let $\xval{\sing{s}\cup R}\in\values{\sing{s}\cup R}$ and $f\in\gambles{\after{S}\cup\sing{s}\cup R}$. 
  We calculate the following regular extension of the joint:
  \begin{equation}\label{eq:coherence-rext-new-one}
    \xinlrext[f]{\sing{s}\cup R}
    =\max\set{\mu\in\reals}{\lpr(\xindic{\sing{s}\cup R}[f-\mu])\geq0}.
  \end{equation}
  Consider that
  \begin{equation*}
    \xindic{\sing{s}\cup R}[f-\mu]=\xindic{s}\xindic{R}[g-\mu],
  \end{equation*}
  where $g\coloneqq f(\cdot,\xval{s},\xval{R})\in\gambles{\after{S}}$.
  Let $t_2$ be the unique child of $t_1\coloneqq\init$ such that $s\in\after{t_2}$ [assuming of course that $s\neq t_1=\init$].   
  By using separate coherence, recursion equations~\eqref{eq:backwards-nine}, \eqref{eq:backwards-global} and~\eqref{eq:backwards-restrict}, and the strong factorisation property [see Proposition~\ref{prop:independent-factorising}] of the conditionally independent natural extension, in a way similar to the argumentation in Section~\ref{sec:calculating-rho}, we see that
  \begin{align*}
    \lpr(\xindic{\sing{s}\cup R}[f-\mu])
    &=\lloc{t_1}(\lglobcon[{\xindic{R\setminus\after{t_2}}\lglobcon[{\xindic{s}\xindic{R\cap\after{t_2}}[g-\mu]}]
      {\children{t_1}}{t_1}}]{\children{t_1}}{t_1})\\
    &=\lloc{t_1}(\lglobcon[{h_2\lglobcon[{\xindic{s}\xindic{R\cap\after{t_2}}[g-\mu]}]{t_2}{t_1}}]
    {\children{t_1}}{t_1})\\
    &=\lglob{\after{t_1}}(h_2\lglobcon[{\xindic{s}\xindic{R\cap\after{t_2}}[g-\mu]}]{t_2}{t_1}),
\end{align*}
  where $h_2\coloneqq\xindic{R\setminus\after{t_2}}\geq0$.
  Similarly, let  $t_3$ be the unique child of $t_2$ such that $s\in\after{t_3}$ [assuming of course that $s\neq t_2$].
  Then we see in the same way as above that
  \begin{equation*}
    \lglobcon[{\xindic{s}\xindic{R\cap\after{t_2}}[g-\mu]}]{t_2}{t_1}
   =\lglobcon[{h_3\lglobcon[{\xindic{s}\xindic{R\cap\after{t_3}}[g-\mu]}]{t_3}{t_2}}]{t_2}{t_1},
  \end{equation*}
  where $h_3\coloneqq\xindic{R\setminus\after{t_3}}\geq0$.
  Continuing in this way, we eventually come to the conclusion that
  \begin{equation}\label{eq:coherence-new-one}
    \lpr(\xindic{\sing{s}\cup R}[f-\mu])
    =\underline{G}(\linecon[{h\xindic{s}[g-\mu]}]{s}),
  \end{equation}
  where $h\coloneqq\xindic{R\cap\after{\children{s}}}=\xindic{R\cap(\after{\children{s}}\setminus\after{S})}$,
  and where the real functional $\underline{G}$ on $\gambles{s}$ is essentially constructed as follows.
  Consider the segment $t_1t_2\dots t_{r-1}t_r$ connecting $\init$ and $s$, i.e.~$t_r\coloneqq s$, $t_{r-1}\coloneqq\mother{t_r}$, \dots, $t_{k}\coloneqq\mother{t_{k+1}}$, \dots, $t_1\coloneqq\mother{t_{2}}=\init$.
  Then there are non-negative $h_k$ on $\values{\after{t_k}}$ such that for all $f\in\gambles{s}$, $\underline{G}(f)=f_1$, where $f_r\coloneqq f$ and $f_{k}\coloneqq\lglobcon[{h_{k+1}f_{k+1}}]{t_{k+1}}{t_k}$, $k=1,\dots,r-1$. [If $r=1$, or in other words $s=t_1=\init$, just let $\underline{G}\coloneqq\lglob{\after{\init}}$.]
  In other words, the functional $\underline{G}$ results from recursively multiplying with non-negative maps and applying global conditional lower previsions. 
  As such, $\underline{G}$ is non-negatively homogeneous and super-additive [because the successive multiplication and composition preserves these properties].
  In addition, it does not depend on $g$ nor $\mu$.
 If we use the separate coherence of $\lglobcon{\children{s}}{s}$, the strong factorisation, associativity and marginalisation properties of the conditionally independent natural extension $\lglobcon{\children{s}}{s}$ [see Proposition~\ref{prop:independent-factorising}, Eqs.~\eqref{eq:marginalisation} and~\eqref{eq:associativity}, and the recursion equations~\eqref{eq:backwards-nine} and~\eqref{eq:backwards-restrict}], and the separate coherence of the conditional lower prevision $\xinlglobcon{S}{s}$, we get:
  \begin{align}
    \lglobcon[{h\xindic{s}[g-\mu]}]{\children{s}}{s}
    &=\xindic{s}\xinlglobcon[{h[g-\mu]}]{\children{s}}{s}\notag\\
    &=\xindic{s}
    \begin{cases}
      \xinlinecon[h]{s}[\xinlglobcon[g]{S}{s}-\mu]
      &\text{if $\xinlglobcon[g]{S}{s}\geq\mu$}\\
      \xinuinecon[h]{s}[\xinlglobcon[g]{S}{s}-\mu]
      &\text{if $\xinlglobcon[g]{S}{s}\leq\mu$}.
    \end{cases}
    \label{eq:coherence-new-two}
  \end{align}
  Combining Eqs.~\eqref{eq:coherence-new-one} and~\eqref{eq:coherence-new-two}, and invoking the non-negative homogeneity of the real functional $\underline{G}$, this leads to:
  \begin{equation*}
    \lpr(\xindic{\sing{s}\cup R}[f-\mu])
     =
     \begin{cases}
       \underline{G}(\xindic{s})\xinlinecon[h]{s}[\xinlglobcon[g]{S}{s}-\mu]
       &\text{if $\xinlglobcon[g]{S}{s}\geq\mu$}\\
       \overline{G}(\xindic{s})\xinuinecon[h]{s}[\xinlglobcon[g]{S}{s}-\mu]
       &\text{if $\xinlglobcon[g]{S}{s}\leq\mu$},
     \end{cases}
  \end{equation*}
  where we let $\overline{G}(\xindic{s})\coloneqq-\underline{G}(-\xindic{s})$.
  By letting $f=\mu\pm1$ [and therefore also $g=\mu\pm1$] in this expression, we derive in particular that
  \begin{equation*}
    \lpr(\xsing{\sing{s}\cup R})
    =\underline{G}(\xindic{s})\xinlinecon[h]{s}
    \text{ and }
    \upr(\xsing{\sing{s}\cup R})
    =\overline{G}(\xindic{s})\xinuinecon[h]{s},
  \end{equation*}
  and therefore also
  \begin{equation*}
    \lpr(\xindic{\sing{s}\cup R}[f-\mu])\\
     =
     \begin{cases}
       \lpr(\xsing{\sing{s}\cup R})[\xinlglobcon[g]{S}{s}-\mu]
       &\text{if $\xinlglobcon[g]{S}{s}\geq\mu$}\\
       \upr(\xsing{\sing{s}\cup R})[\xinlglobcon[g]{S}{s}-\mu]
       &\text{if $\xinlglobcon[g]{S}{s}\leq\mu$}.
     \end{cases}
   \end{equation*}
   Since we have assumed that all local models $\ulocconm{s}$ are strictly positive, we gather from Proposition~\ref{prop:global:positivity} that $\upr(\xsing{\sing{s}\cup R})>0$, and therefore
  \begin{equation*}
    \lpr(\xindic{\sing{s}\cup R}[f-\mu])\geq0
    \ifonlyif\xinlglobcon[g]{S}{s}\geq\mu.
  \end{equation*}
  This allows us to infer from Eq.~\eqref{eq:coherence-rext-new-one} that 
  \begin{equation}\label{eq:coherence-rext-new-two}
    \xinlrext[f]{\sing{s}\cup R}
    =\xinlglobcon[f(\cdot,\xval{s},\xval{R})]{S}{s}
    \text{ for all $f\in\gambles{\after{S}\cup\sing{s}\cup R}$ and $\xval{\sing{s}\cup R}\in\values{\sing{s}\cup R}$}.
  \end{equation}
  If we now combine Eq.~\eqref{eq:coherence-rext-new-two} with  Theorem~\ref{theo:blessyouquique}, we see that both T\ref{item:global:coherence} and T\ref{item:global:irrelevance} hold.
  \par
  To complete the proof, we consider T\ref{item:global:smallest}. 
  Consider any family of models $\treefamily{V}$ that satisfies conditions~T\ref{item:global:local}--T\ref{item:global:irrelevance}.
  Then we want to show that
  \begin{equation}\label{eq:dominance-one}
    \dlglobcon{S}{t}\geq\lglobcon{S}{t}
    \text{ for all $t\in\nonterminals$ and all non-empty $S\subseteq\children{s}$}
  \end{equation}
  and
  \begin{equation}\label{eq:dominance-two}
    \dlglob{}\geq\lglob{}.
  \end{equation}
  The proof proceeds in a recursive (inductive) fashion. 
  Since the $\dlglobconm{t}$ satisfy T\ref{item:global:local}, we infer in particular that
  \begin{equation*}
    \dlglobconm{t}=\lglobconm{t}=\llocconm{t}
    \text{ for all terminal nodes $t$}.
  \end{equation*}
  It is therefore clearly sufficient to prove the following statement for all non-terminal nodes $t\in\nonterminals$:
  \begin{equation}\label{eq:induction-step}
    (\forall c\in\children{t})
    (\dlglobcon{c}{t}\geq\lglobcon{c}{t})
    \then
    \left\{
      \begin{aligned}
        &\dlglobcon{S}{t}\geq\lglobcon{S}{t}
        \text{ for all non-empty $S\subseteq\children{t}$}\\
        &\dlglobconm{t}\geq\lglobconm{t}.
      \end{aligned}
    \right.
  \end{equation}
  This is what we now set out to do.
  Fix any non-terminal node $t\in\nonterminals$ and any non-empty $S\subseteq\children{t}$, and assume that $\dlglobcon{c}{t}\geq\lglobcon{c}{t}$ for all $c\in\children{t}$.
  \par
  First of all, define for any disjoint proper subsets $I$ and $O$ of $S$, the conditional lower previsions $\dlglobcon{O}{\sing{t}\cup\after{I}}$ through:
  \begin{equation*}
    \xindlglobcon[f]{O}{\sing{t}\cup\after{I}}
    =\xindlglobcon[f(\cdot,\xval{\after{I}})]{O}{t}
    \text{ for all $f\in\gambles{\after{O}\cup\after{I}}$ and all $\xval{\sing{t}\cup\after{I}}\in\values{\sing{t}\cup\after{I}}$}.
  \end{equation*}
  Then we infer from T\ref{item:global:irrelevance} [with $S=O$, $s=t$ and $R=\after{I}\subseteq\nonparnondes{O}$] that all these conditional lower previsions are in particular (jointly) coherent with the conditional lower prevision $\dlglobcon{S}{t}$. 
  If we recall Definition~\ref{def:conditionally:independent:product} [with $N=\set{\after{c}}{c\in S}$ and $\vartoo=\var{t}$], we conclude that  $\dlinecon{t}$ is a conditionally independent product of the `marginals' $\dlglobcon{c}{t}$, $c\in S$, which therefore dominates the smallest such independent product:
  \begin{equation*}
    \dlglobcon{S}{t}
    \geq\otimes_{c\in S}\dlglobcon{c}{t}
  \end{equation*}
  and therefore, using the assumption, we infer from this inequality that
  \begin{equation}\label{eq:other-step-two}
    \dlglobcon{S}{t}
    \geq\otimes_{c\in S}\dlglobcon{c}{t}
    \geq\otimes_{c\in S}\lglobcon{c}{t}
    =\lglobcon{S}{t},
  \end{equation}
  where we have also used, successively, the monotonicity property of the conditionally independent natural extension [see \cite{cooman2009b} for a proof] and the recursion equations~\eqref{eq:backwards-nine} and~\eqref{eq:backwards-nine-restrict}.
  \par
  Next, define the conditional lower prevision $\dlpr_{\after{\children{t}}}(\cdot\vert\var{\sing{\mother{t},t}})$ on $\gambles{\msing{t}\cup\after{t}}$ through:
  \begin{multline}\label{eq:other-step-three}
    \dlpr_{\after{\children{t}}}(f\vert\xval{\sing{\mother{t},t}})
    \coloneqq
    \xindlinecon[f(\xval{\mother{t}},\xval{t},\cdot)]{t}\\
    \text{for all $f\in\gambles{\msing{t}\cup\after{t}}$ and all $\xval{\sing{\mother{t},t}}\in\values{\sing{\mother{t},t}}$}.  
  \end{multline}
  Then we infer from T\ref{item:global:irrelevance} [with $s=t$, $S=\children{t}$ and $R=\msing{t}\subseteq\nonparnondes{\children{t}}$] that this conditional lower prevision $\dlpr_{\after{\children{t}}}(\cdot\vert\var{\sing{\mother{t},t}})$ is in particular (jointly) coherent with the conditional lower prevision $\dlglobconm{t}$ defined on $\gambles{\msing{t}\cup\after{t}}$. 
  We then see that for all $g\in\gambles{\after{t}}$:
 \begin{align*}
    \dlglobconm[g]{t}
    &\geq\dlglobconm[{\dlpr_{\after{\children{t}}}(g\vert\var{\sing{\mother{t},t}})}]{t}\\
    &=\dlglobconm[{\dlinecon[g]{t}}]{t}
    =\llocconm[{\dlinecon[g]{t}}]{t}\\
    &\geq\llocconm[{\linecon[g]{t}}]{t}\\
    &=\lglobconm{t}.
  \end{align*}
  The first equality follows from Eq.~\eqref{eq:other-step-three}, the second one holds because the global models $\dlglobconm{t}$ satisfy T\ref{item:global:local}, and the third one follows from recursion equation~\eqref{eq:backwards-global}. 
  The first inequality follows if we apply Walley's Marginal Extension Theorem\footnote{Recall that this is a coherence result that generalises the so-called Law of Iterated Expectations to coherent lower previsions.} \cite[Theorem~6.7.2]{walley1991} in the formulation of \cite[Theorem~4]{miranda2006b}. 
  The second inequality follows from the inequality~\eqref{eq:other-step-two} and the non-decreasing character of $\llocconm{t}$, which follows from separate coherence.
  This completes our proof that T\ref{item:global:smallest} is also satisfied.
  \par
  The last part of the proof follows at once from Eqs.~\eqref{eq:coherence-rext-new-two} [with $R=\emptyset$], and Theorem~\ref{theo:blessyouquique}.
\end{proof}
\end{document}